\icmltitlerunning{Robust Reinforcement Learning using Least Squares Policy Iteration with Provable Performance Guarantees}
\newtheorem{lemma}{Lemma}
\newtheorem{assumption}{Assumption}
\newtheorem{proposition}{Proposition}
\newtheorem{theorem}{Theorem}
\newtheorem{remark}{Remark}
\newcommand{\dd}{\mathrm{d}}
\newcommand{\E}{\mathbb{E}}
\newcommand{\R}{\mathbb{R}}
\newcommand{\Ss}{\mathcal{S}}
\newcommand{\Aa}{\mathcal{A}}
\newcommand{\Pp}{\mathcal{P}}
\newcommand{\U}{\mathcal{U}}
\newcommand{\F}{\mathcal{F}}
\DeclareMathOperator*{\argmax}{arg\,max}
\begin{document}
	
	\twocolumn[
	\icmltitle{Robust Reinforcement Learning using Least Squares Policy Iteration \\ with Provable Performance Guarantees}
	
	
	
	\icmlsetsymbol{equal}{*}
	
	\begin{icmlauthorlist}
		\icmlauthor{Kishan Panaganti}{to}
		\icmlauthor{Dileep Kalathil}{to}
	\end{icmlauthorlist}
	
	\icmlaffiliation{to}{Department of Electrical and Computer Engineering, Texas A\&M University, College Station, United States}
	
	
	
	\vskip 0.3in
	]
	
	
	
	\printAffiliationsAndNotice{}  

\begin{abstract}
This paper addresses the problem of model-free reinforcement learning for Robust Markov Decision Process (RMDP) with large state spaces. The goal of the RMDP framework is to find a policy that is robust against the parameter uncertainties due to the  mismatch between the simulator model and  real-world settings.  We first propose the Robust Least Squares Policy Evaluation algorithm, which is a multi-step online model-free learning algorithm for policy evaluation. We prove the convergence of this algorithm using stochastic approximation techniques.  We then propose Robust Least Squares Policy Iteration (RLSPI) algorithm for learning the optimal robust policy. We also give a general weighted Euclidean norm  bound on the error (closeness to optimality) of the resulting policy. Finally, we demonstrate the performance of our RLSPI algorithm on some standard benchmark problems.
\end{abstract}

\section{Introduction}

Model-free Reinforcement Learning (RL) algorithms typically learn a policy by training on a simulator. In  the RL literature, it is nominally assumed that the testing environment is identical to the training environment (simulator model). However, in reality, the parameters of the simulator model can be different from the real-world setting. This can be due to the approximation errors incurred while modeling, due to the changes in the real-world parameters over time, and can even be due to possible adversarial  disturbances in the real-world.  For example, in many robotics applications,  the standard simulator parameter settings  (mass, friction,  wind conditions, sensor noise, action delays) can be different from that of the actual robot in the real-world.  This mismatch between   the training and testing environment  parameters  can significantly degrade the real-world performance of the model-free learning algorithms trained on a simulator model.


The RMDP framework \citep{iyengar2005robust, nilim2005robust}  addresses the \textit{planning} problem of computing the optimal policy  that is robust against parameter uncertainties that cause the mismatch between the training and testing environment  parameters. The RMDP problem has been analyzed extensively in the tabular case \citep{iyengar2005robust, nilim2005robust, wiesemann2013robust, xu2010distributionally, yu2015distributionally} and  under the linear function approximation \citep{tamar2014scaling}. Algorithms for learning the optimal robust policy  with provable guarantees  have  been proposed, both in the model-free  \citep{roy2017reinforcement} and model-based \citep{lim2013reinforcement}   reinforcement learning settings.   However, the theoretical guarantees from these works are limited to  the tabular RMDP settings.  Learning policies for problems  with  large state spaces is computationally  challenging. RL algorithms typically overcome this issue by using function approximation architectures, such as  linear basis functions \citep{lagoudakis2003least}, reproducing kernel Hilbert spaces (RKHS) \citep{yang2020reinforcement} and deep neural networks \citep{lillicrapHPHETS15}. Recently, robust reinforcement learning problem has been addressed  using deep RL methods  \citep{pinto2017robust, Mankowitz2020Robust, zhang2020robust, derman2018soft, vinitsky2020robust}. However, these works are  empirical in nature and do not provide any theoretical guarantees for the learned policies. The problem of learning optimal robust policies with provable performance guarantees for RMDPs with  large state spaces  has not been well studied in the literature.



 
In this paper, we address the problem of learning a  policy that is provably robust against the parameter uncertainties for RMDPs with  large state spaces. In particular, we propose an online model-free reinforcement learning algorithm with linear function approximation for learning the optimal robust policy, and provide theoretical guarantees on the performance of the learned policy.   Our choice of  linear function approximation is motivated by its analytical tractability while providing the scaling to large  state  spaces. Indeed, linear function approximation based approaches have been successful in providing algorithms with provable guarantees for many challenging problems in RL, including online model-free exploration \citep{jin2020provably,yang2020reinforcement}, imitation learning \citep{abbeel2004apprenticeship,arora2020provable}, meta reinforcement learning \citep{wang2020global,kong2020meta}, and offline reinforcement learning \citep{wang2021what,duan2020minimax}. Robust RL is much more challenging than the standard  (non-robust) RL problems due to the inherent  nonlinearity associated with the robust dynamic programming.  We overcome this issue by a cleverly designed approximate dynamic programming approach. We then propose a model-free robust policy iteration using this approach  with provable guarantees.  Our algorithmic and technical contributions are as follows:

(i) Robust Least Squares Policy Evaluation (RLSPE($\lambda$)) algorithm: A learning-based policy iteration algorithm  needs to learn  the value of a policy for performing (greedy) policy improvement. For this, we first propose RLSPE($\lambda$) algorithm, a multi-step, online, model-free policy evaluation algorithm with linear function approximation. This can be thought as the  robust version of classical least squares based RL algorithms for policy evaluation, like LSTD($\lambda$) and LSPE($\lambda$). We  prove the convergence of this algorithm using stochastic approximation techniques, and also  characterize its approximation error due to the linear architecture. 

(ii) Robust Least Squares Policy Iteration (RLSPI) algorithm: We  propose the RLSPI  algorithm for learning the optimal robust policy. We also give a general $L_{2}$-norm bound on the error (closeness to optimality) of the resulting policy at any iterate of the algorithm. To the best of our knowledge, this is the first work that presents a learning based policy iteration algorithm for robust reinforcement learning with such provable guarantees.  

(iii)  Finally, we demonstrate the performance of the RLSPI algorithm on various  standard RL test environments.



\subsection{Related Work}
RMDP formulation to address the parameter uncertainty problem was first proposed by \citep{iyengar2005robust} and \citep{nilim2005robust}. \citep{iyengar2005robust} showed that the optimal robust value function and policy can be computed using the robust counterparts of the standard value iteration and policy iteration.  To tackle the parameter uncertainty problem, other works considered distributionally robust setting \citep{xu2010distributionally}, modified policy iteration \citep{kaufman2013robust}, and {more general uncertainty set} \citep{wiesemann2013robust}. We note that the focus of these works were mainly on the planning problem in the tabular setting.  Linear function approximation method to solve large  RMDPs was proposed in \citep{tamar2014scaling}. Though this work suggests a sampling based approach, a general model-free learning algorithm and analysis was not included. \citep{roy2017reinforcement} proposed the robust versions of the classical model-free  reinforcement learning algorithms such as Q-learning, SARSA, and TD-learning in the tabular setting. They also proposed function approximation based algorithms for  the policy evaluation. However,  this work does not have a policy iteration algorithm with provable guarantees for learning the optimal robust policy.   \citep{derman2018soft}  introduced soft-robust actor-critic algorithms using neural networks, but does not provide any global convergence  guarantees for the learned policy. \citep{tessler2019action} proposed a min-max game framework to address the robust learning problem focusing on the tabular setting. \citep{lim2019kernel}  proposed a kernel-based RL algorithm for finding the robust value function in a batch learning setting.  \citep{Mankowitz2020Robust} employed an entropy-regularized policy optimization algorithm for continuous control using neural network, but does not provide any provable guarantees for the learned policy.




Our work differs from the above in two significant ways. Firstly, we  develop a new multi-step model-free reinforcement learning algorithm, RLSPE($\lambda$), for policy evaluation.  Extending the classical least squares based policy evaluation algorithms, like LSPE($\lambda$) and LSTD($\lambda$) \citep{BerBook12}, to the robust case is very challenging due to the nonlinearity of the robust TD($\lambda$) operator. We overcome this issue by a cleverly defined approximate  robust TD($\lambda$) operator that is amenable to online learning using least squares approaches. Also, as pointed out in \citep{bertsekas2011approximate}, convergence analysis of least squares style algorithms for RL is different from that of the standard temporal difference (TD) algorithm. Secondly, we develop a new robust policy iteration algorithm with provable guarantees on the performance of the policy at any iterate. In particular, we give a general weighted Euclidean norm bound on the error of the resulting policy. While similar results are available for the non-robust settings, this is the first work to provide such a characterization in the challenging setting of robust reinforcement learning.

\section{Background and Problem Formulation}
\label{sec:formulation}

A Markov Decision Process is a tuple $M = (\Ss, \Aa, r, P, \alpha)$ where $\Ss$ is the  state space, $\Aa$ is the  action space,  $r: \Ss\times \Aa\rightarrow \R$ is the reward function, and $\alpha \in (0, 1)$ is the discount factor. The transition probability matrix $P_{s,a}(s')$ represents the probability of transitioning to state $s'$ when  action $a$ is taken at state $s$. We consider a finite MDP setting where the cardinality of state and action spaces are finite (but very large).    A  policy $\pi$ maps each state to an action. The value of a policy $\pi$ evaluated at state $s$ is given by
\begin{align*}
V_{\pi, P}(s) = \E_{\pi, P}[\sum^{\infty}_{t=0} \alpha^{t} r(s_{t}, a_{t}) ~|~ s_{0} = s], \end{align*}where $a_{t} \sim \pi(s_{t})$ and $s_{t+1} \sim P_{s_t,a_t}(\cdot)$. The optimal  value function and the  optimal policy of an MDP with the transition probability $P$ is defined as $V^{*}_{P} = \max_{\pi} V_{\pi, P}$ and $\pi^{*}_{P} = \argmax_{\pi} V_{\pi, P}$.

The RMDP formulation considers a set of model parameters (uncertainty set) under the assumption that the actual parameters lie in this  uncertainty set, and the  algorithm computes a robust policy that performs  best under the worst model.  More precisely, instead of a fixed  transition probability matrix $P$, we consider a set of transition probability matrices $\mathcal{P}$.  We assume that the set $\mathcal{P}$ satisfies the standard \textit{rectangularity condition} \citep{iyengar2005robust}. The objective is to find a policy that maximizes the worst-case performance.  Formally, the \textit{robust value function} $V_{\pi}$ corresponding to a policy $\pi$ and the \textit{optimal robust value function} $V^{*}$ are defined as
\citep{iyengar2005robust,nilim2005robust}
\begin{align}
\label{eq:Vpi-Vstart-1}
V_{\pi} = \inf_{P \in \mathcal{P}} ~V_{\pi, P},~~ V^{*} = \sup_{\pi} \inf_{P \in \mathcal{P}} ~V_{\pi, P} . 
\end{align} 
The \textit{optimal robust policy} $\pi^{*}$ is such that the robust value function corresponding to it matches the optimal robust value function, that is, $V_{\pi^*} =V^{*} $.

A generic characterization of the set  $\mathcal{P}$ makes the RMDPs problems intractable to solve by model-free methods. In the standard model-free methods, the algorithm has access to a simulator that can simulate the next state given the current state and current action, according to a fixed transition probability matrix (that is unknown to the algorithm). However, generating samples according to each and every transition probability matrix from the set $\mathcal{P}$ is clearly infeasible. To overcome this difficulty, we use the characterization of the uncertainty set used in \citep{roy2017reinforcement}. 
\begin{assumption}[Uncertainty Set]
Each $P \in \mathcal{P}$ can be represented as $P_{s,a}(\cdot) = P^{o}_{s,a}(\cdot) + U_{s,a}(\cdot)$ for some $U_{s,a} \in \mathcal{U}_{s,a}$, where $P^{o}_{s,a}(\cdot)$ is the unknown transition probability matrix corresponding to the nominal (simulator) model and $\mathcal{U}_{s,a}$ is a confidence region around it. 
\end{assumption}
Using the above characterization, we can write $\mathcal{P} = \{P^{o} + U : U \in \mathcal{U} \}$, where $\mathcal{U} = \cup_{s,a} \mathcal{U}_{s,a}$. So,  $\mathcal{U}$ is the set of all possible perturbations to the nominal model $P^{o}$.

An example of the uncertainty set $\U$ can be the spherical uncertainty set with a radius parameter.  Define $\U_{s,a}:= \{ x~|~\|x\|_2 \leq r, \sum_{s\in\Ss} x_s = 0, -P^o_{s,a}(s') \leq x_{s'} \leq 1-P^o_{s,a}(s'), \forall s' \in \Ss \},$ for all $(s,a)\in(\Ss,\Aa)$, for some  $r>0$. Notice that, this uncertainty set uses the knowledge of the nominal model $P^o$ in its construction. In practice, we do not know $P^o$. So,  in Section \ref{sec:rlspe-1}, we introduce an approximate uncertainty set without using this information. 

We consider \textit{robust Bellman operator for policy evaluation}, defined as \citep{iyengar2005robust}
\begin{align}
\label{eq:Tpi-basic}
\hspace*{-0.2cm} T_{\pi} (V) (s) =  r(s,\pi(s)) + \alpha \inf_{P \in \mathcal{P}}   \sum_{s'} P_{s,\pi(s)}(s') V(s'),\hspace*{-0.2cm}
\end{align} a popular approach to solve \eqref{eq:Vpi-Vstart-1}. Using our characterization of the uncertainty set, we can rewrite \eqref{eq:Tpi-basic} as
\begin{align}
\label{eq:Tpi-1}
T_{\pi} (V) (s) &=  r(s,\pi(s)) + \alpha  \sum_{s'} P^{o}_{s,\pi(s)}(s') V(s') \nonumber \\&\hspace{0.5cm}+ \alpha \inf_{U \in \mathcal{U}_{s,\pi(s)}}   \sum_{s'} U_{s,\pi(s)}(s') V(s').
\end{align}
For any set  $\mathcal{B}$ and a vector $v$, define $
\sigma_{\mathcal{B}} (v) = \inf \{ u^{\top}v : u \in \mathcal{B} \}.$ We denote $|\Ss|$ as the cardinality of the set $\Ss$. Let ${\sigma}_{\mathcal{U}_\pi} (v)$ and  $r_{\pi}$ be the $|\mathcal{S}|$ dimensional column vectors defined as $ (\sigma_{\mathcal{U}_{s,\pi(s)}}(v) : s \in \mathcal{S})^{\top}$ and   $(r(s,\pi(s)): s \in \mathcal{S})^\top$, respectively. Let  $P^{o}_{\pi}$ be the stochastic matrix corresponding to the policy $\pi$ where for any $s,s'\in\Ss,$ $P^{o}_{\pi}(s,s') = P^{o}_{s,\pi(s)}(s')$. Then, \eqref{eq:Tpi-1} can be written in the matrix form as
\begin{align}
\label{eq:Tpi-matrix-1}
T_{\pi} (V)  =  r_{\pi} + \alpha  P^{o}_{\pi} V + \alpha \sigma_{U_{\pi}} (V).
\end{align}
It is known  \citep{iyengar2005robust} that $T_{\pi}$ is a contraction in sup norm and the robust value function $V_{\pi}$ is the unique fixed point of $T_{\pi}$. The \textit{robust Bellman operator} $T $ can also be defined in the same way as in the non-robust setting,
\begin{align}
\label{eq:T}
T (V)  =  \max_{\pi}T_{\pi} (V).
\end{align}
It is also known   \citep{iyengar2005robust} that $T$ is a contraction in sup norm, and the optimal robust value function $V^{*}$   is its unique fixed point. 

The goal of the robust RL is to learn the optimal robust policy $\pi^{*}$ without knowing the nominal model $P^{o}$ or the uncertainty set $\mathcal{P}$.


\section{Robust Least Squares Policy Evaluation}
\label{sec:RPE}  
In this section, we develop the RLSPE($\lambda$) algorithm for learning the robust value function.

\subsection{Robust TD$(\lambda)$ Operator and the Challenges}  
\label{sec:RTB-Ber}
In RL,  a very useful approach for analyzing the multi-step learning algorithms  like TD($\lambda$), LSTD($\lambda$), and LSPE($\lambda$) is to define a multi-step Bellman operator called  TD($\lambda$) operator \citep{tsitsiklis1997analysis} \citep{BerBook12}.  Following the same approach, we can define  the robust TD($\lambda$) operator as well. For a given policy $\pi$, and a parameter $\lambda \in [0, 1),$ the robust TD($\lambda$) operator denoted by ${T}^{(\lambda)}_\pi : \mathbb{R}^{|\mathcal{S}|} \rightarrow \mathbb{R}^{|\mathcal{S}|}$ is defined as
\begin{align}
\label{eq:TLambda-Ber-1}
{T}^{(\lambda)}_\pi (V )= (1-\lambda) \sum_{m=0}^\infty \lambda^{m} T^{m+1}_{\pi}(V). 
\end{align} 
Note that for $\lambda = 0,$ we recover $T_{\pi}$. The following result is straightforward.
\begin{proposition}[informal]
\label{prop:informal-cont-fp-1}
${T}^{(\lambda)}_\pi$ is a contraction in  sup norm  and  the robust value function $V_{\pi}$ is its unique fixed point, for any $\alpha \in (0, 1), \lambda \in [0, 1)$.
\end{proposition}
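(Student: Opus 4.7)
The plan is to deduce everything from the two already-stated facts about $T_\pi$: namely, that $T_\pi$ is an $\alpha$-contraction in the sup norm and that $V_\pi$ is its unique fixed point. I would first dispense with well-definedness, then verify the fixed-point property, and finally establish the contraction modulus explicitly; uniqueness then falls out of Banach's fixed-point theorem.

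First I would check that the infinite sum in \eqref{eq:TLambda-Ber-1} is well defined in sup norm. Since $T_\pi$ is an $\alpha$-contraction with fixed point $V_\pi$, iterating gives $\|T^{m+1}_\pi V - V_\pi\|_\infty \le \alpha^{m+1}\|V-V_\pi\|_\infty$, so $\|T^{m+1}_\pi V\|_\infty \le \|V_\pi\|_\infty + \alpha^{m+1}\|V-V_\pi\|_\infty$ is uniformly bounded in $m$, and the partial sums $(1-\lambda)\sum_{m=0}^{N}\lambda^m T^{m+1}_\pi(V)$ form a Cauchy sequence in $\mathbb{R}^{|\Ss|}$ because $\sum_m \lambda^m < \infty$.

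Next I would verify that $V_\pi$ is a fixed point of $T^{(\lambda)}_\pi$. A trivial induction using $T_\pi V_\pi = V_\pi$ yields $T^{m+1}_\pi V_\pi = V_\pi$ for every $m\ge 0$, so
\begin{equation*}
T^{(\lambda)}_\pi(V_\pi) \;=\; (1-\lambda)\sum_{m=0}^{\infty}\lambda^m V_\pi \;=\; V_\pi.
\end{equation*}
For the contraction estimate, I would apply the triangle inequality inside the sum and use that $T_\pi$ is an $\alpha$-contraction iterated $m+1$ times:
\begin{align*}
\|T^{(\lambda)}_\pi V - T^{(\lambda)}_\pi V'\|_\infty
&\le (1-\lambda)\sum_{m=0}^{\infty}\lambda^m \|T^{m+1}_\pi V - T^{m+1}_\pi V'\|_\infty \\
&\le (1-\lambda)\sum_{m=0}^{\infty}\lambda^m \alpha^{m+1}\|V-V'\|_\infty \\
&= \frac{\alpha(1-\lambda)}{1-\alpha\lambda}\,\|V-V'\|_\infty.
\end{align*}
Since $\alpha\in(0,1)$ and $\lambda\in[0,1)$, one has $1-\alpha\lambda > 1-\lambda > 0$, hence the modulus $\alpha(1-\lambda)/(1-\alpha\lambda)$ is strictly less than $\alpha<1$. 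Therefore $T^{(\lambda)}_\pi$ is a sup-norm contraction, and Banach's fixed-point theorem forces $V_\pi$ to be its unique fixed point.

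I do not expect any genuine obstacle here: the proof is essentially the robust analogue of the standard TD$(\lambda)$ argument, and the fact that the inner $\inf_{P\in\Pp}$ in $T_\pi$ is nonlinear plays no role, because every step above uses only that $T_\pi$ itself is a sup-norm $\alpha$-contraction with fixed point $V_\pi$. The only mildly delicate point is making sure the series defining $T^{(\lambda)}_\pi(V)$ converges for arbitrary $V\in\mathbb{R}^{|\Ss|}$, which I would handle by the uniform bound on $\|T^{m+1}_\pi V\|_\infty$ derived above.
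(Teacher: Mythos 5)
Your proposal is correct and follows essentially the same route as the paper's own proof: term-by-term triangle inequality plus the $\alpha$-contraction of $T_\pi$ to get the modulus $\alpha(1-\lambda)/(1-\alpha\lambda)$, then Banach's fixed-point theorem combined with $T_\pi V_\pi = V_\pi$ for uniqueness. Your extra checks on convergence of the series and the direct verification that $V_\pi$ is a fixed point are sound refinements of the same argument.
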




For RMDPs with very large state space, exact dynamic  programming methods which involve  the evaluation of \eqref{eq:Tpi-1} or \eqref{eq:TLambda-Ber-1} are intractable.  A standard approach to overcome this issue is to approximate the value function using some function approximation architecture. Here we focus on linear function approximation architectures \citep{BerBook12}. In linear function approximation architectures, the value function is represented as the weighted sum of features as, $\bar{V}(s) = \phi(s)^{\top} w, \forall s \in \mathcal{S},$ where $ \phi(s)  = (\phi_{1}(s), \phi_{2}(s), \ldots, \phi_{L}(s))^{\top}$ is an $L$ dimensional feature vector with $L < |\mathcal{S}|$, and  $w = (w_1,\cdots,w_{L})^\top$ is a weight vector. In the matrix form, this can be written as $\bar{V} = \Phi w $ where  $\Phi$ is an $|\mathcal{S}| \times L$  dimensional feature matrix whose $s^{\text{th}}$ row  is $\phi(s)^{\top} $. We assume linearly independent columns for $\Phi$, i.e., rank($\Phi$) = $L$. 

The standard approach to find an approximate (robust) value function is  to solve for  a $w_{\pi}$, with $\bar{V}_{\pi} =  \Phi
 w_{\pi}$, such that $\Phi w_{\pi} = \Pi T^{(\lambda)}_{\pi} \Phi w_{\pi}$, where $\Pi$ is a   projection  onto the subspace spanned by the columns of $\Phi$. The projection is with  respect to a $d$-weighted  Euclidean norm. This norm is defined as  $\| V \|^{2}_{d} =  V^{\top} D V$, where $D$ is a diagonal matrix with non-negative diagonal entries $(d(s), s \in \mathcal{S})$, for any  vector $V$. Under suitable assumptions, \citep{tamar2014scaling} showed that $\Pi T_{\pi}$ is a contraction in a  $d$-weighted  Euclidean norm. We also use a similar assumption stated below.
\begin{assumption}
\label{as:offpolicy}
(i) For any given policy $\pi$, there exists an exploration policy $\pi_{e} =  \pi_{\text{exp}}(\pi)$ and a $\beta \in (0,1)$ such that $\alpha P_{s,\pi(s)}(s') \leq \beta P^{o}_{s,\pi_{e}(s)}(s')$, for all transition probability matrices $P\in\Pp$ and for all states $s,s'\in\Ss$.  \\
(ii) There exists a steady state distribution $d_{\pi_{e}} = (d_{\pi_{e}}(s), s \in \mathcal{S})$ for the Markov chain with transition probability $P^{o}_{\pi_{e}}$ with $d_{\pi_{e}}(s) > 0, \forall s \in \mathcal{S}$.
\end{assumption}
In the following, we will simply use $d$ instead of $d_{\pi_{e}}$. 


Though the above assumption appears restrictive, it is necessary to show that  $\Pi T_{\pi}$  is a contraction in the $d$-weighted Euclidean norm, as proved in \citep{tamar2014scaling}. Also, a similar assumption is used in proving the convergence of off-policy reinforcement learning algorithm  \citep{bertsekas2009projected}. In the robust case, we can expect a similar condition because  we are learning a robust value function for a set of transition probability matrices instead of a single transition probability matrix. We can now show the following. 

\begin{proposition}[informal]
\label{prop:informal-cont-fp-2}
Under Assumption \ref{as:offpolicy}, $\Pi {T}^{(\lambda)}_\pi$ is a contraction mapping in the $d$-weighted Euclidean norm for any $\lambda \in [0, 1)$. 
\end{proposition}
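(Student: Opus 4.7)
The plan is to prove this in three stages: first establish that the one-step operator $T_\pi$ is a contraction in the $d$-weighted Euclidean norm with some modulus $\beta<1$, then lift this contraction property to the multi-step operator $T^{(\lambda)}_\pi$ via its definition as a convex combination of iterates of $T_\pi$, and finally compose with the projection $\Pi$ using the standard fact that an orthogonal projection in a Hilbert-space norm is non-expansive.

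For the first stage, I would fix two vectors $V_1, V_2$ and analyze $T_\pi V_1 - T_\pi V_2$ pointwise. The reward term cancels, so I would bound $|T_\pi V_1(s) - T_\pi V_2(s)|$ by $\alpha\,\sup_{P\in\Pp}\sum_{s'} P_{s,\pi(s)}(s')\,|V_1(s')-V_2(s')|$; this step uses the elementary inequality $|\inf_x f(x) - \inf_x g(x)| \le \sup_x |f(x)-g(x)|$ applied to the robust expectation in \eqref{eq:Tpi-1}. Assumption \ref{as:offpolicy}(i) then yields $\alpha P_{s,\pi(s)}(s') \le \beta P^{o}_{s,\pi_e(s)}(s')$, giving $|T_\pi V_1 - T_\pi V_2| \le \beta\, P^{o}_{\pi_e} |V_1-V_2|$ componentwise. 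Squaring and weighting by $d$, Jensen's inequality applied to each row of the stochastic matrix $P^{o}_{\pi_e}$ produces $\|T_\pi V_1 - T_\pi V_2\|_d^2 \le \beta^2\, d^{\top} P^{o}_{\pi_e} (V_1-V_2)^{\circ 2}$, and using Assumption \ref{as:offpolicy}(ii), namely $d^{\top} P^{o}_{\pi_e} = d^{\top}$, the right-hand side collapses to $\beta^2 \|V_1-V_2\|_d^2$. Thus $T_\pi$ is a $\beta$-contraction in $\|\cdot\|_d$.

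For the second stage, iterating the previous bound shows that $T^{m+1}_\pi$ is a $\beta^{m+1}$-contraction in $\|\cdot\|_d$. By the triangle inequality applied to \eqref{eq:TLambda-Ber-1},
\begin{equation*}
\|T^{(\lambda)}_\pi V_1 - T^{(\lambda)}_\pi V_2\|_d
\le (1-\lambda)\sum_{m=0}^{\infty}\lambda^{m}\,\beta^{m+1}\|V_1-V_2\|_d
= \frac{\beta(1-\lambda)}{1-\lambda\beta}\|V_1-V_2\|_d,
\end{equation*}
and the constant $\beta(1-\lambda)/(1-\lambda\beta)$ is strictly less than $1$ whenever $\beta,\lambda\in[0,1)$. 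Finally, $\Pi$ is the orthogonal projection onto $\mathrm{range}(\Phi)$ with respect to the inner product induced by $D$, hence non-expansive in $\|\cdot\|_d$, so $\Pi T^{(\lambda)}_\pi$ inherits the same contraction modulus.

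The main obstacle is the first stage, where the nonlinearity of the $\sigma_{\U_\pi}$ term in \eqref{eq:Tpi-matrix-1} prevents one from simply quoting a linear-operator contraction argument. The trick is to use that although $\inf_{P\in\Pp}$ is not a linear functional of $V$, its difference across two $V$'s is dominated uniformly by the worst-case linear expectation over $\Pp$, which by Assumption \ref{as:offpolicy}(i) is in turn dominated by a $\beta$-scaled expectation under the fixed exploration kernel $P^{o}_{\pi_e}$; this is exactly what converts a robust nonlinear operator into a form amenable to the Jensen-plus-stationarity argument that underlies the non-robust analysis. Once that reduction is made, the remaining steps are routine.
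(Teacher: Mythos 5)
Your proof is correct and follows essentially the same route as the paper's: bound each $T^{m+1}_\pi$ term by $\beta^{m+1}$ using the one-step contraction of $T_\pi$ in $\|\cdot\|_d$, sum the geometric series to obtain the modulus $\beta(1-\lambda)/(1-\beta\lambda)<1$, and conclude via the non-expansiveness of $\Pi$. The only difference is that you derive the one-step $d$-weighted contraction of $T_\pi$ from first principles (the inf-difference bound, domination by $\beta P^{o}_{\pi_e}$ via Assumption \ref{as:offpolicy}(i), then Jensen plus stationarity of $d$), whereas the paper simply cites Corollary 4 of \citep{tamar2014scaling} for that step.
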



The linear approximation based robust value function $\bar{V}_{\pi} =\Phi w_{\pi}$ can be computed using the iteration, $\Phi w_{k+1} = \Pi T^{(\lambda)}_{\pi} \Phi w_{k}$. Since  $\Pi T^{(\lambda)}_{\pi}$ is a contraction, $w_{k}$ will converge to $w^{*}$.  A closed form solution for $w_{k+1}$ given $w_{k}$ can be found by {\bf least squares approach} as $
w_{k+1} = \arg \min_{w} \|\Phi w - \Pi T^{(\lambda)}_{\pi} \Phi w_{k}\|^{2}_{d}$. It can be shown that (details are given in the supplementary material), we can get a closed form solution for $w_{k+1}$  as \begin{align}
\label{eq:rpvi-11} 
w_{k+1} = w_{k} + (\Phi^{\top} D \Phi)^{-1} \Phi^{\top} D (T^{(\lambda)}_{\pi} \Phi w_{k}  - \Phi w_{k}). 
\end{align}This is similar to the  projected equation approach \citep{BerBook12} in the non-robust setting. Even in the non-robust setting,  iterations using the \eqref{eq:rpvi-11} is intractable for MDPs with large state space. Moreover, when the transition  matrix is unknown, it is not feasible to use \eqref{eq:rpvi-11} exactly  even for small RMDPs.  Simulation-based model-free learning algorithms are developed for addressing this problem in the  non-robust case.  In particular, LSPE($\lambda$) algorithm   \citep{nedic2003least, BerBook12}  is used to solve the iterations of the above form.

However, compared to the non-robust setting, there are two significant challenges in learning the robust value function  by  using simulation-based model-free approaches.

(i) Non-linearity of the robust TD($\lambda$) operator:  The non-robust $T_{\pi}$ operator and the TD($\lambda$)  operator do not involve any nonlinear operations. So, they can be estimated efficiently from simulation samples in  a model-free way.  However, the robust TD($\lambda$) operator when expanded will have the  following form (derivation is given in the supplementary material).
\begin{align}
\label{eq:TLambda-Ber-expanded}
T^{(\lambda)}_{\pi}(V) &= (1 - \lambda) \sum^{\infty}_{m=0} \lambda^{m} (\sum^{m}_{k=0} (\alpha P^{o}_{\pi})^{k} r_{\pi} + (\alpha P^{o}_{\pi})^{m+1} V \nonumber \\&\hspace{1cm}+ \alpha \sum^{m}_{k=0} (\alpha P^{o}_{\pi})^{k} \sigma_{\mathcal{U}_{\pi}} (T^{(m-k)}_\pi V)). 
\end{align}
The last term is very difficult to estimate using simulation-based model-free approaches due to the composition of operations $\sigma_{\U_{\pi}}$ and $T_\pi$. In addition, nonlinearity of the $T_{\pi}$ operator by itself adds to the complexity. 

(ii) Unknown uncertainty region $\mathcal{U}$: In our formulation, we assumed that the transition probability uncertainty  set $\mathcal{P}$ is given by $\mathcal{P} = P^{o} + \mathcal{U}$. So, for each $U \in \mathcal{U}$, $ P^{o} + U$ should be a valid transition probability matrix. However, in the model-free setting, we do not know the nominal transition probability $P^{o}$. So, it is not possible to know $ \mathcal{U}$ exactly a priori. One can only use an approximation  $	\widehat{\U}$ instead of $\mathcal{U}$. This can possibly affect the convergence of the learning algorithms.

\subsection{Robust Least Squares Policy Evaluation (RLSPE($\lambda$)) Algorithm } 
\label{sec:rlspe-1}
We overcome  the challenges of learning the robust value function by defining an   approximate robust TD($\lambda$) operator, and by developing a robust least squares policy evaluation algorithm based on that. 

Let $\widehat{\mathcal{U}}$ be the approximate uncertainty set we use instead of the actual uncertainty set. An example of the approximate uncertainty set $\widehat{\mathcal{U}}$ can be the spherical uncertainty set defined \textit{without} using the knowledge of the model $P^{o}$ as   $\widehat{\U}_{s,a}:= \{ x~|~\|x\|_2 \leq r, \sum_{s\in\Ss} x_s = 0\}$ for all $(s,a)\in(\Ss,\Aa)$. Note that $P^{o} + U$ for $U \in \widehat{\mathcal{U}}$ need not be a valid transition probability matrix and this poses challenges both for the algorithm and analysis.

For a given policy $\pi$ and a parameter $\lambda \in [0, 1),$ approximate robust TD($\lambda$) operator denoted by $\widetilde{T}^{(\lambda)}_\pi : \mathbb{R}^{|\mathcal{S}|} \rightarrow \mathbb{R}^{|\mathcal{S}|},$ is defined as
\begin{align}
\label{eq:TLambda-1}
\widetilde{T}^{(\lambda)}_\pi (V )&= (1-\lambda) \sum_{m=0}^\infty \lambda^{m} \left[\sum_{t=0}^{m} (\alpha P^{o}_\pi)^t {r}_\pi \right. \nonumber \\&\hspace{0.1cm} \left. + \alpha \sum_{t=0}^{m} (\alpha P^{o}_\pi)^t  ~ {\sigma}_{\widehat{\U}_\pi} (V) + (\alpha P^{o}_\pi)^{m+1} V \right].
\end{align}
Note that even with $\widehat{\U}_\pi = {\U}_\pi$,  \eqref{eq:TLambda-1} is different from \eqref{eq:TLambda-Ber-expanded}. We will  show that this clever approximation  helps to overcome the  challenges due to the nonliterary associated with \eqref{eq:TLambda-Ber-expanded}.

However, we emphasize that \eqref{eq:TLambda-1} is not an arbitrary definition. Note that, for $\widehat{\U}_\pi = {\U}_\pi$, with $\lambda = 0,$ we recover the operator $T_{\pi}$. Moreover, the robust value function $V_{\pi}$ is a fixed point of $\widetilde{T}^{(\lambda)}_\pi $ when  $\widehat{\U}_\pi = {\U}_\pi$ for any $\lambda \in [0, 1).$ We state this formally below.


\begin{proposition}
\label{prop:fixedpoint-1}
Suppose $\widehat{\U}_\pi = {\U}_\pi$. Then, for any $\alpha \in(0, 1)$ and  $\lambda \in [0, 1),$ the robust value function $V_{\pi}$ is a fixed point of $\widetilde{T}^{(\lambda)}_\pi$, i.e., $\widetilde{T}^{(\lambda)}_\pi  (V_\pi) = V_\pi.$ 
\end{proposition}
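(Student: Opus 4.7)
The plan is to exploit the fact that $V_\pi$ is already the unique fixed point of $T_\pi$ (from the discussion after \eqref{eq:Tpi-matrix-1}), express it as a Neumann series in $A := \alpha P^o_\pi$, substitute into the definition \eqref{eq:TLambda-1} of $\widetilde{T}^{(\lambda)}_\pi$ with $V = V_\pi$, and observe that the bracketed expression indexed by $m$ collapses to $V_\pi$ independently of $m$, leaving the convex weights $(1-\lambda)\lambda^m$ to sum to $1$.

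Concretely, first I would introduce the shorthand $A := \alpha P^o_\pi$ and $c := r_\pi + \alpha\,\sigma_{\U_\pi}(V_\pi)$. Because $P^o_\pi$ is row-stochastic and $\alpha \in (0,1)$, the spectral radius of $A$ is strictly less than one, so $(I-A)^{-1} = \sum_{t=0}^\infty A^t$ exists. Rewriting \eqref{eq:Tpi-matrix-1} at $V_\pi$ gives the one-line identity $V_\pi = c + A V_\pi$, which, iterated, yields the key representation
\begin{equation*}
V_\pi \;=\; \sum_{t=0}^\infty A^t c .
\end{equation*}

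Next, setting $\widehat{\U}_\pi = \U_\pi$ and $V = V_\pi$ in \eqref{eq:TLambda-1}, the bracket for each $m$ equals
\begin{equation*}
\sum_{t=0}^{m} A^t\bigl(r_\pi + \alpha\,\sigma_{\U_\pi}(V_\pi)\bigr) + A^{m+1} V_\pi
\;=\; \sum_{t=0}^{m} A^t c + A^{m+1} V_\pi .
\end{equation*}
Applying $A^{m+1}$ to the Neumann representation of $V_\pi$ gives $A^{m+1}V_\pi = \sum_{t=m+1}^\infty A^t c$, so the bracket telescopes into $\sum_{t=0}^\infty A^t c = V_\pi$. The outer series then reduces to $(1-\lambda)\sum_{m=0}^\infty \lambda^m V_\pi = V_\pi$, since $\lambda \in [0,1)$, which is exactly the claim.

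I do not expect a serious obstacle. The only technical care needed is in justifying the interchanges of the two infinite sums (over $m$ and over $t$) and the application of $A^{m+1}$ inside an infinite series; these are routine given the absolute summability $\sum_{m,t}\lambda^m\|A^t c\| < \infty$, which follows from $\lambda < 1$ together with the spectral-radius bound on $A$. Note that the argument uses the assumption $\widehat{\U}_\pi = \U_\pi$ in exactly one place, namely when $\sigma_{\widehat{\U}_\pi}(V_\pi)$ is replaced by $\sigma_{\U_\pi}(V_\pi)$ so that the combination $r_\pi + \alpha\,\sigma_{\widehat{\U}_\pi}(V_\pi)$ matches the constant $c$ arising from the robust Bellman fixed-point equation; without this assumption the bracket no longer telescopes to $V_\pi$, which is consistent with the authors' subsequent analysis of the approximation error.
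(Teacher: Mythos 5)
Your argument is correct and is essentially the paper's own proof: both substitute the robust Bellman fixed-point identity $r_\pi + \alpha\,\sigma_{\U_\pi}(V_\pi) = (I-\alpha P^o_\pi)V_\pi$ into the bracket for each $m$, collapse it to $V_\pi$ by telescoping, and then sum the geometric weights $(1-\lambda)\lambda^m$ to one. The only cosmetic difference is that the paper telescopes the finite sum $\sum_{t=0}^{m}(\alpha P^o_\pi)^t(I-\alpha P^o_\pi) = I-(\alpha P^o_\pi)^{m+1}$ directly, whereas you route the same cancellation through the Neumann series for $V_\pi$, which costs you the (routine) convergence remarks but changes nothing substantive.
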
 

Intuitively, the convergence of any learning algorithm using the approximate robust TD($\lambda$) operator will depend on the difference between the   actual uncertainty set ${\U}$ and its approximation $\widehat{\U}$. To quantify this, we use the following metric.  Let $\rho=\max_{s\in\Ss, a\in\Aa} \rho_{s,a}$ where 
\begin{align*}
    \rho_{s,a} = \max \hspace{-0.1cm}\left\{\hspace{-0.2cm}\begin{array}{ll} \max_{x\in \widehat{\U}_{s,a}}\max_{y\in {\U}_{s,a} \setminus \widehat{\U}_{s,a}} \| x-y\|_d / d_{\text{min}},\\ \max_{x\in {\U}_{s,a}}\max_{y\in \widehat{\U}_{s,a} \setminus {\U}_{s,a}} \| x-y\|_d / d_{\text{min}} \end{array} \hspace{-0.2cm}\right\} 
\end{align*}
and $d_{\text{min}} := \min_{s \in\Ss} d(s).$ By convention, we set $\rho_{s,a}=0$  when $\widehat{\U}_{s,a} = {\U}_{s,a}$ for all $(s,a)\in(\Ss,\Aa).$ So, $\rho = 0$ if $\widehat{\U} = {\U}$.  Using this characterization and under some additional assumptions on the discount factor, we show that the approximate robust TD($\lambda$) operator  is a contraction in the  $d$-weighted Euclidean norm.
\begin{theorem}
\label{prop:approx-td-op-1}
Under Assumption \ref{as:offpolicy}, for any $V_{1}, V_{2} \in \mathbb{R}^{|\mathcal{S}|}$ and $\lambda \in [0, 1),$ 
\begin{align}
\label{eq:Ttilde-contraction-1} \hspace{-0.3cm}
\|\Pi \widetilde{T}^{(\lambda)}_\pi V_{1} -  \Pi \widetilde{T}^{(\lambda)}_\pi V_{2} \|_{d} \leq  c(\alpha, \beta, \rho, \lambda) ~ \|V_{1} - V_{2}\|_{d}, \hspace{-0.2cm}
\end{align} 
where $c(\alpha, \beta, \rho, \lambda) = {(\beta(2-\lambda) + \rho\alpha  )}/{(1 - \beta  \lambda)}$. So, if $c(\alpha, \beta, \rho, \lambda)  < 1$, $\Pi \widetilde{T}^{(\lambda)}_\pi$ is a contraction in the  $d$-weighted Euclidean norm. Moreover, there exists a unique $w_{\pi}$ such that $\Phi w_{\pi} = \Pi \widetilde{T}^{(\lambda)}_\pi (\Phi w_{\pi})$. Furthermore, for this $w_{\pi}$,
\begin{align}
\label{eq:Ttilde-contraction-3}
&\| V_\pi - \Phi w_\pi \|_d \leq \nonumber \\&\hspace{0.2cm}\frac{1}{1 - c(\alpha, \beta, \rho, \lambda)} \left(\| V_\pi -  \Pi V_\pi  \|_d +  \frac{\beta \rho \| V_{\pi}\|_d}{1-\beta\lambda} \right). 
\end{align}
\end{theorem}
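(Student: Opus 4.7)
The plan is to first establish the contraction \eqref{eq:Ttilde-contraction-1}, then derive the error bound \eqref{eq:Ttilde-contraction-3} by a standard projection-plus-fixed-point decomposition, using Proposition~\ref{prop:fixedpoint-1} to control the residual. For the contraction, I would subtract the two expansions in \eqref{eq:TLambda-1}; the reward terms cancel and leave
\begin{align*}
\widetilde{T}^{(\lambda)}_\pi V_1 - \widetilde{T}^{(\lambda)}_\pi V_2 &= (1-\lambda)\sum_{m=0}^{\infty}\lambda^m \Bigl[\alpha \sum_{t=0}^{m}(\alpha P^o_\pi)^t \bigl(\sigma_{\widehat{\U}_\pi}(V_1) - \sigma_{\widehat{\U}_\pi}(V_2)\bigr) \\ &\qquad + (\alpha P^o_\pi)^{m+1}(V_1 - V_2)\Bigr].
\end{align*}
After applying the non-expansiveness of $\Pi$ in the $d$-weighted norm and the triangle inequality, the purely linear term is handled by the off-policy one-step bound $\|\alpha P^o_\pi V\|_d \leq \beta\|V\|_d$, which follows from Assumption~\ref{as:offpolicy}(i) via Jensen's inequality and stationarity of $d$ under $P^o_{\pi_e}$. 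Iterating and collapsing the resulting geometric series gives $(1-\lambda)\sum_m \lambda^m \beta^{m+1} = \beta(1-\lambda)/(1-\beta\lambda)$ as the linear contribution to $c$.

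The support-function term requires the coordinatewise Lipschitz bound $\|\sigma_{\widehat{\U}_\pi}(V_1) - \sigma_{\widehat{\U}_\pi}(V_2)\|_d \leq (\beta/\alpha + \rho)\|V_1 - V_2\|_d$. The plan is to fix $(s,\pi(s))$, pick minimizers $u_i^\ast \in \arg\inf_{u \in \widehat{\U}_{s,\pi(s)}} u^\top V_i$, and split according to whether $u_i^\ast$ lies in $\widehat{\U}_{s,\pi(s)} \cap \U_{s,\pi(s)}$ or in the symmetric difference. In the ``good'' case, $u_i^\ast$ is a valid perturbation of $P^o$, so the off-policy inequality yields a $\beta/\alpha$ factor on $\|V_1 - V_2\|_d$; in the ``bad'' case, the defining maxima in $\rho_{s,\pi(s)}$ dominate the excess after the $d_{\min}^{-1}$ normalization is absorbed. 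Combined with the geometric sum $\alpha(1-\lambda)\sum_m \lambda^m \sum_{t=0}^{m}\beta^t = \alpha/(1-\beta\lambda)$, the support-function term contributes $(\beta + \rho\alpha)/(1-\beta\lambda)$, and adding the two contributions produces exactly $c(\alpha,\beta,\rho,\lambda) = (\beta(2-\lambda) + \rho\alpha)/(1-\beta\lambda)$.

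For existence of $w_\pi$, the contraction makes $w \mapsto (\Phi^\top D \Phi)^{-1}\Phi^\top D \widetilde{T}^{(\lambda)}_\pi (\Phi w)$ a contraction on $\mathbb{R}^L$ (well-defined since $\Phi$ has full column rank), so Banach's theorem delivers the unique $w_\pi$. For the error bound, I would write
\begin{align*}
\|V_\pi - \Phi w_\pi\|_d &\leq \|V_\pi - \Pi V_\pi\|_d + \|\Pi V_\pi - \Phi w_\pi\|_d \\
&\leq \|V_\pi - \Pi V_\pi\|_d + \|V_\pi - \widetilde{T}^{(\lambda)}_\pi V_\pi\|_d + c\|V_\pi - \Phi w_\pi\|_d,
\end{align*}
using $\Phi w_\pi = \Pi \widetilde{T}^{(\lambda)}_\pi \Phi w_\pi$ together with the contraction at $V_1 = V_\pi$, $V_2 = \Phi w_\pi$. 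The residual $\|V_\pi - \widetilde{T}^{(\lambda)}_\pi V_\pi\|_d$ is identically zero when $\widehat{\U} = \U$ by Proposition~\ref{prop:fixedpoint-1}, so subtracting the two operators exposes the difference $\sigma_{\widehat{\U}_\pi}(V_\pi) - \sigma_{\U_\pi}(V_\pi)$; bounding this by $\rho$ exactly as in the contraction argument and summing the same geometric series yields the residual bound $\beta\rho\|V_\pi\|_d/(1-\beta\lambda)$. Rearranging the displayed inequality then gives \eqref{eq:Ttilde-contraction-3}.

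The main obstacle is the coordinatewise Lipschitz bound for $\sigma_{\widehat{\U}_\pi}$: one must unwind both how a minimizer of $u^\top V$ over $\widehat{\U}_{s,\pi(s)}$ inherits the off-policy factor $\beta/\alpha$ when it happens to be a true perturbation, and how it is controlled by $\rho_{s,\pi(s)}$ when it is not, keeping constants sharp enough to yield $c$ rather than a loose surrogate. Once this is in hand, the remainder is geometric-series bookkeeping and the standard Banach/projection argument.
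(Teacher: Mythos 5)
Your proposal is correct and follows essentially the same route as the paper's proof: the same support-function Lipschitz bound $\|\sigma_{\widehat{\U}_\pi}(V_1)-\sigma_{\widehat{\U}_\pi}(V_2)\|_d\le(\beta/\alpha+\rho)\|V_1-V_2\|_d$, the same iterated estimate $\|(\alpha P^o_\pi)^t|V|\|_d\le\beta^t\|V\|_d$ with identical geometric-series bookkeeping for $c(\alpha,\beta,\rho,\lambda)$, and the same three-term triangle decomposition through $\Pi V_\pi$ plus the operator-difference bound $\|\bar{T}^{(\lambda)}_\pi V_\pi-\widetilde{T}^{(\lambda)}_\pi V_\pi\|_d\lesssim \rho\|V_\pi\|_d/(1-\beta\lambda)$ obtained from Proposition~\ref{prop:fixedpoint-1}. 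The only cosmetic difference is inside the Lipschitz lemma, where you case-split on whether the minimizer over $\widehat{\U}_{s,\pi(s)}$ lies in $\U_{s,\pi(s)}$, while the paper first converts $\sigma_{\widehat{\U}}(V_2-V_1)$ to $\sigma_{\U}(V_2-V_1)$ at cost $\rho$ and then applies the off-policy domination to an $\epsilon$-minimizer in the true set $\U$ (also sidestepping attainment of the infimum); both give the same constant.
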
 


%

We note that despite the  assumption on the discount factor, we  empirically show in Section \ref{sec:simulations} that our learning algorithm converges to a robust policy even if this assumption is violated. We also note that the upper bound in \eqref{eq:Ttilde-contraction-3} quantifies the \emph{error} of approximating the robust value function $V_\pi$ with the approximate robust value function $\Phi w_\pi$. We will later use this error bound in in characterizing performance of both RLSPE and RLSPI algorithms. 

Using the contraction property of approximate robust TD($\lambda$) operator, the linear approximation based robust value function $\bar{V}_{\pi} =\Phi w_{\pi}$ can be computed using the iteration, $\Phi w_{k+1} = \Pi \widetilde{T}^{(\lambda)}_{\pi} \Phi w_{k}$. Similar to \eqref{eq:rpvi-11},  we can get a closed form solution for $w_{k+1}$  using \textbf{least squares approach}  as
\begin{align}
\label{eq:rpvi-approx-1}
\hspace{-0.2cm} w_{k+1} = w_{k} + (\Phi^{\top} D \Phi)^{-1} \Phi^{\top} D (\widetilde{T}^{(\lambda)}_{\pi} \Phi w_{k}  - \Phi w_{k}).
\end{align}
This can be written in a more succinct matrix form as given below (derivation is given in the supplementary material).
\begin{align}
\label{eq:rpvi-approx-matrix-1}
w_{k+1} &= w_{k} + B^{-1}(A w_{k} +  C(w_{k}) + b),~~ \text{where}, \\
\label{eq:AB-1}
&A =  \Phi^{\top} D  (\alpha P^{o}_{\pi} - I) \sum^{\infty}_{m=0}  (\alpha \lambda P^{o}_{\pi})^{m}  \Phi,  \\
&B =   \Phi^{\top} D \Phi, \\ \label{eq:Cb-1}
&C(w) = \alpha \Phi^{\top} D  \sum^{\infty}_{t=0} (\alpha \lambda P^{o}_{\pi})^{t}  {\sigma}_{\widehat{\U}_{\pi}} (\Phi w), \\ &  b = \Phi^{\top} D  \sum^{\infty}_{t=0} (\alpha \lambda P^{o}_{\pi})^{t} r_{\pi}.
\end{align}
Iterations by evaluating \eqref{eq:rpvi-approx-matrix-1} exactly is intractable for MDPs with large state space, and infeasible if we do not know the transition probability $P^{o}_\pi$. To address this issue, we propose a simulation-based model-free online reinforcement learning algorithm, which we call robust least squares policy evaluation (RLSPE($\lambda$)) algorithm, for learning the robust value function. 

{\bf RLSPE($\lambda$) algorithm:} Generate a sequence of states and rewards, $(s_{t}, r_{t}, t \geq 0),$ using  the policy $\pi$. Update the parameters as
\begin{align}
\label{eq:RLSPE-1}
&\hspace{-0.2cm}w_{t+1} = w_{t} + \gamma_{t} B^{-1}_{t}(A_{t} w_{t} + b_{t} + C_{t}(w_{t})),~ \text{where},\\
\label{eq:AB-2}
&A_t = \frac{1}{t+1} \sum_{\tau=0}^{t} z_{\tau} ~ (\alpha \phi^{\top}(s_{\tau + 1}) - \phi{^\top}(s_\tau)), \\ &B_t = \frac{1}{t+1} \sum_{\tau=0}^{t} \phi(s_\tau) \phi{^\top}(s_\tau),\\
&C_t(w) = \frac{\alpha}{t+1}  \sum_{\tau=0}^{t} z_{\tau}  ~ \sigma_{\widehat{\U}_{s_{\tau}, \pi(s_{\tau})}} ( \Phi w), \\&b_t = \frac{1}{t+1}  \sum_{\tau=0}^{t}  z_{\tau}  r(s_{\tau},\pi(s_{\tau})),\\ &z_{\tau} = \sum_{m=0}^{\tau} (\alpha \lambda)^{\tau-m} \phi(s_{m}), \label{eq:z-2}
\end{align}
where $\gamma_{t}$ is a deterministic sequence of step sizes. We assume that the step size satisfies the  the standard Robbins-Munro stochastic conditions for stochastic approximation, i.e., $\sum^{\infty}_{t=0} \gamma_{t} = \infty,~ \sum^{\infty}_{t=0} \gamma^{2}_{t} < \infty$. 

We use the on-policy version of the RLSPE($\lambda$) algorithm in the above description. So,  we implicitly assume that the given policy $\pi$ is an exploration policy according to the Assumption \ref{as:offpolicy}. This is mainly for the clarity of the presentation and notational convenience. Also, this simplifies the presentation of the policy iteration algorithm introduced in the next section.  An  off-policy version of the above algorithm can be implemented using the techniques given in \citep{bertsekas2009projected}. We now give the convergence result of the RLSPE($\lambda$) algorithm. 
\begin{theorem}
\label{thm:RLSPE-convergence-1}
Let Assumption \ref{as:offpolicy} hold. Also, let $c(\alpha, \beta, \rho, \lambda) < 1 $ so that $\Pi \widetilde{T}^{(\lambda)}_{\pi}$ is a contraction according to Theorem \ref{prop:approx-td-op-1}.  Let $\{w_{t}\}$ be the sequence generated by the RLSPE($\lambda$) algorithm given in \eqref{eq:RLSPE-1}. Then,  $w_{t}$ converges to $w_{\pi}$  with probability 1 where $w_{\pi}$ satisfies the fixed point equation $\Phi w_{\pi}= \Pi \widetilde{T}^{(\lambda)}_{\pi} \Phi w_{\pi}$.
\end{theorem}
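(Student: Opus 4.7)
The plan is to apply the ODE method for stochastic approximation. The key structural observation, implicit in the derivation leading to \eqref{eq:AB-1}--\eqref{eq:Cb-1}, is that $\Phi^\top D(\widetilde{T}^{(\lambda)}_\pi \Phi w - \Phi w) = Aw + C(w) + b$, so the target fixed-point equation $\Phi w_\pi = \Pi \widetilde{T}^{(\lambda)}_\pi \Phi w_\pi$ is equivalent to $h(w_\pi) = 0$, where $h(w) := B^{-1}(Aw + C(w) + b)$. The goal is to show that the RLSPE($\lambda$) iterates asymptotically track the ODE $\dot w = h(w)$ and that $w_\pi$ is the unique globally asymptotically stable equilibrium of this ODE.

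First I would establish almost-sure convergence of the empirical quantities. Under Assumption \ref{as:offpolicy}, $\{s_t\}$ is ergodic with stationary distribution $d$. A standard ergodic-theorem calculation on the eligibility traces $z_\tau$ (using that, in the stationary regime, $\E[z_\tau \phi^\top(s_\tau)] \to \Phi^\top D\sum_m (\alpha\lambda P^o_\pi)^m \Phi$, with analogous identities for the other quantities) yields $A_t\to A$, $B_t\to B$, $b_t\to b$ almost surely, and $C_t(w)\to C(w)$ almost surely for each $w$. Since $\sigma_{\widehat{\U}_{s,a}}(\cdot)$ is the support function of a bounded set, it is Lipschitz; thus $w\mapsto \sigma_{\widehat{\U}_{s,a}}(\Phi w)$ is Lipschitz with a constant depending only on $\Phi$, and $C_t-C$ is uniformly Lipschitz in $w$. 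The pointwise convergence of $C_t$ therefore upgrades to uniform convergence on compact subsets of $w$. Finally, $B$ is invertible because $\Phi$ has full column rank and $d(s)>0$, so $B_t^{-1}\to B^{-1}$.

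Consequently the update decomposes as $w_{t+1} = w_t + \gamma_t h(w_t) + \gamma_t \epsilon_t$ with bias $\epsilon_t := B_t^{-1}(A_t w_t + b_t + C_t(w_t)) - h(w_t) \to 0$ almost surely on any event on which $\{w_t\}$ is bounded. The main obstacle I expect is establishing this boundedness, because the nonlinearity of $C_t$ prevents a direct reduction to linear stochastic-approximation theory. I would handle it via the Lyapunov function $V(w) = \|w-w_\pi\|_B^2 = \|\Phi w - \Phi w_\pi\|_d^2$. The contraction from Theorem \ref{prop:approx-td-op-1} implies $\|F(w)-w_\pi\|_B \le c\,\|w-w_\pi\|_B$ for the deterministic one-step map $F(w) := w+h(w)$, since $\Phi F(w) = \Pi\widetilde{T}^{(\lambda)}_\pi \Phi w$. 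Expanding $\|F(w)-w_\pi\|_B^2 \le c^2\|w-w_\pi\|_B^2$ yields
\[
2(w-w_\pi)^\top B\, h(w) \;\le\; (c^2-1)\,\|w-w_\pi\|_B^2 \;-\; \|h(w)\|_B^2,
\]
which, combined with $\sum_t\gamma_t^2 < \infty$ and $\epsilon_t\to 0$, lets a standard stability argument (e.g., Borkar Ch.~3) conclude almost-sure boundedness of $\{w_t\}$.

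With boundedness secured, I invoke the ODE method (Kushner--Yin, Borkar Ch.~2): under the Robbins--Monro step-size conditions, vanishing bias, and boundedness, $\{w_t\}$ asymptotically tracks trajectories of $\dot w = h(w)$. The Lyapunov inequality above gives $\dot V(w(t)) \le (c^2-1)\,V(w(t))$, so $w_\pi$ is the unique, globally exponentially stable equilibrium of the ODE. Hence $w_t \to w_\pi$ almost surely, and by construction this $w_\pi$ satisfies $\Phi w_\pi = \Pi \widetilde{T}^{(\lambda)}_\pi \Phi w_\pi$.
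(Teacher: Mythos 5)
Your proposal is correct in substance but follows a genuinely different route from the paper. The paper does not use the ODE method: it invokes Proposition 4.1 of \citep{nedic2003least}, a ``gradient method with errors'' result, with the Lyapunov function $f(w)=\tfrac12(w-w_\pi)^\top\Phi^\top D\Phi(w-w_\pi)$, and verifies its four conditions directly. That framework tolerates iterates growing linearly (its hypotheses are stated in terms of $1+\|\nabla f(w_t)\|$), so the paper never needs a separate boundedness argument; the price is that it must establish \emph{quantitative} $O(1/\sqrt{t+1})$ rates on the conditional bias $\E[e_t\mid\F_t]$ --- this is where most of the work goes, in particular the decomposition $e_t=\Delta^1_t w_t+\Delta^2_t+\Delta^3_t(w_t)$ and the delicate bound on $\E[C_t(w_t)-C(w_t)\mid\F_t]$ via the visit counts $n_t(s)$ and a tail term $V_t$. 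Your route trades these rates for an almost-sure boundedness argument: you only need $A_t\to A$, $B_t\to B$, $b_t\to b$, $C_t\to C$ a.s.\ (no rate), but you must then stabilize the iterates yourself. Your key drift inequality $2(w-w_\pi)^\top B\,h(w)\le (c^2-1)\|w-w_\pi\|_B^2-\|h(w)\|_B^2$, obtained by expanding $\|F(w)-w_\pi\|_B\le c\|w-w_\pi\|_B$ with $\Phi F(w)=\Pi\widetilde T^{(\lambda)}_\pi\Phi w$, is exactly the quantitative form of the negative-drift condition the paper gets from Lemma 9 of \citep{tsitsiklis1997analysis}, so the two proofs share their essential geometric ingredient. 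One point to tighten: your boundedness step as written is mildly circular, since the bias $\epsilon_t$ is only $o(1)(1+\|w_t\|)$ rather than $o(1)$; because the perturbation is relatively small while the drift $(c^2-1)V(w)$ is strictly negative and of the same order, a Robbins--Siegmund almost-supermartingale argument (or the Borkar--Meyn stability criterion, using that $\sigma_{\widehat\U}(\Phi\cdot)$ is positively homogeneous so the scaled limit field inherits the same contraction) closes it; you should say this explicitly rather than asserting $\epsilon_t\to0$ ``on the event where $\{w_t\}$ is bounded.'' With that repair the argument is complete and arguably cleaner than the paper's, at the cost of importing a heavier stochastic-approximation theorem.
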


The key idea of the proof is to show that the RLSPE($\lambda$) update \eqref{eq:RLSPE-1} approximates the exact update equation \eqref{eq:rpvi-approx-1} and both converge to the same value $w_{\pi}$. One particularly challenging task is in analyzing the behavior of the term $C_{t}(w)$ due to the non-linearity of the function ${\sigma}_{\widehat{\U}_\pi} (.)$. We use the tools from stochastic approximation theory \citep{borkar2009stochastic, nedic2003least} to show this rigorously after establishing the tractable properties of the function ${\sigma}_{\widehat{\U}_\pi} (.)$. 

Note that Theorem \ref{thm:RLSPE-convergence-1} and  Theorem \ref{prop:approx-td-op-1} together give an error bound for the converged solution of the RLSPE($\lambda$) algorithm. More precisely,  Theorem \ref{thm:RLSPE-convergence-1} shows the convergence of the  RLSPE($\lambda$) algorithm to $w_{\pi}$ and Theorem \ref{prop:approx-td-op-1} gives the bound on $\| V_\pi - \Phi w_\pi \|_d$, which is the error due to linear function approximation. We will use this bound in the the convergence analysis of  the RLSPI algoirthm presented in the next section.


\section{Robust Least Squares Policy Iteration}
\label{sec:PI}

In this section, we introduce the robust least squares policy iteration (RLSPI) algorithm for finding the optimal robust policy.  RLSPI algorithm can be thought as the robust version of the  LSPI algorithm \citep{lagoudakis2003least}. RLSPI algorithm uses the RLSPE($\lambda$) algorithm for policy evaluation. However, model-free policy improvement is difficult when working with value functions since the policy update step will require us to solve \begin{align}
\label{eq:policy_update_value}
\pi_{k+1}  =  \argmax_{\pi}\widetilde{T}^{(\lambda)}_{\pi} (\bar{V}_k),
\end{align} where $\bar{V}_k$ is the approximate robust value function corresponding to the policy $\pi_k$, in the $(k+1)^{\text{th}}$ policy iteration loop. To overcome this, we first introduce the robust state-action value function (Q-function). 

For any given policy $\pi$ and  state-action pair $(s,a)$, we define the robust $Q$-value as,
\begin{align}
    \hspace{-0.3cm}Q_{\pi}(s,a) = \inf_{P \in \mathcal{P}}  \mathbb{E}_{P}[\sum^{\infty}_{t=0} \alpha^{t} r(s_{t}, a_{t}) ~| s_{0} = s, a_{0} =a].\hspace{-0.2cm}
\end{align}
Instead of learning the approximate robust value function $\bar{V}_{\pi}$, we can learn the approximate robust Q-value function $\bar{Q}_{\pi}$  using RLSPE($\lambda$). This can be done by defining the feature vector $\phi(s,a)$ where  $\phi(s,a) = (\phi_{1}(s,a), \ldots, \phi_{L}(s,a))^{\top}$ and the linear approximation of the form $\bar{Q}_{\pi}(s, a) = w^{\top} \phi(s,a)$ where  $w$ is a weight vector. The results from the previous section on the convergence of the RLSPE($\lambda$) algorithm applies for the case of learning Q-value function as well. 

RLSPI is a policy iteration algorithm that uses RLSPE($\lambda$) for policy evaluation at each iteration. It starts with an arbitrary initial policy $\pi_{0}$. At the $k$th iteration, RLSPE($\lambda$) returns a weight vector that represents the approximate Q-value function $\bar{Q}_{\pi_{k}} = \Phi w_{\pi_{k}} $ corresponding to the policy $\pi_{k}$. The next policy $\pi_{k+1}$ is the greedy policy corresponding to $\bar{Q}_{\pi_{k}}$, defined as $\pi_{k+1}(s) = \argmax_{a \in \mathcal{A}} \bar{Q}_{\pi_{k}}(s,a)$. For empirical evaluation purposes, we terminate the policy iteration for some finite value $K.$ RLSPI algorithm is summarized in Algorithm \ref{algo-lspI}.

\begin{algorithm}
	\caption{RLSPI Algorithm} 
	\label{algo-lspI}
	\begin{algorithmic}[1]
		\STATE Initialization: Policy evaluation weights error $\epsilon_0$, initial policy $\pi_0$. 
		\FOR {$k=0 \ldots K$}
		\STATE Initialize the policy weight vector $w_{0}$. Initialize time step $t \leftarrow 0.$
		    \REPEAT
		        \STATE Observe the state $s_{t},$ take action $a_{t} = \pi_{k}(s_{t}),$ observe  reward $r_{t}$ and  next state $s_{t+1}$.
		        \STATE Update the weight vector $w_{t}$ according to RLSPE($\lambda$) algorithm  (c.f. \eqref{eq:RLSPE-1}-\eqref{eq:z-2})
		        \STATE $t \leftarrow t+1$
		    \UNTIL{$\|w_{t}-w_{t-1}\|_2 < \epsilon_0$}
		    \STATE $w_{\pi_{k}} \leftarrow w_{t}$
		    \STATE Update the policy $$\pi_{k+1}(s) = \argmax_{a \in \mathcal{A}} \phi(s,a)^{\top} w_{\pi_{k}}$$
		\ENDFOR
	\end{algorithmic}
\end{algorithm}

We make the following assumptions for the convergence analysis of the RLSPI algoirthm. We note that we work with value functions instead of Q-value functions for notational convenience and consistency.

\begin{assumption}
\label{as:policy_iteration_req} (i) Each policy $\pi_{k}$ is an exploration policy, i.e. $\pi_{\text{exp}}(\pi_{k}) = \pi_{k}$.\\
(ii) The Markov chain $P^{o}_{\pi_{k}}$ has a stationary distribution $d_{\pi_{k}}$ such that $d_{\pi_{k}}(s) > 0, \forall s \in \mathcal{S}.$ \\
(iii) There exists a finite scalar $\delta$ such that ~$ \|V_{\pi_{k}} - \Pi_{d_{\pi_{k}}} V_{\pi_{k}} \|_{d_{\pi_{k}}}  < \delta$ for all $k$, where $\Pi_{d_{\pi_{k}}}$ is a projection onto the subspace spanned by the columns of $\Phi$ under the $d_{\pi_{k}}$-weighted  Euclidean norm. \\ 
(iv) For any probability distribution $\mu$, define probability another distribution $\mu_k = \mu H_k $ where $H_k$ is a stochastic matrix defined with respect to $\pi_{k}$. Also assume that there exists a probability distribution $\bar{\mu}$ and finite positive scalars $C_{1}, C_{2}$ such that $\mu_{k} \leq C_{1} \bar{\mu}$ and $d_{\pi_{k}} \geq \bar{\mu}/C_{2}$ for all $k$. 
\end{assumption}

We  note that these are  the standard assumptions used in the RL literature to provide theoretical guarantees for approximate policy/value iteration algorithms with linear function approximation in the non-robust settings \citep{munos2003error,munos08a, lazaric2012finite}.  We make no additional assumptions even though we are addressing the  more difficult robust RL problem. The specific form of the stochastic matrix $H_k$ specified in Assumption \ref{as:policy_iteration_req}.$(iv)$ is deferred to the proof of Theorem \ref{thm:RLSPI} for brevity of the presentation.

We now give the asymptotic convergence result for the RLSPI algorithm. We assume that, similar to the non-robust setting \citep{munos2003error}, the policy evaluation step (inner loop) is run to the convergence. 
We only present the case where $\rho = 0$. The proof for the general case is straightforward, but involves much more detailed algebra. So, we omit those details for the clarity of presentation. 
\begin{theorem}
\label{thm:RLSPI}
Let Assumption \ref{as:offpolicy} and Assumption \ref{as:policy_iteration_req} hold. Let $\{\pi_{k}\}$  be the sequence of the policies generated by the RLSPI algorithm. Let $V_{\pi_{k}}$ and $\bar{V}_{k} = \Phi w_{\pi_{k}}$ be true robust value function and the approximate robust value function corresponding to the policy $\pi_{k}$. Also, let  $V^*$ be the optimal robust value function. Then, with $c(\alpha,\beta,0,\lambda)<1$,
\begin{align} \label{eq:pi_thm_1}
    &\hspace{-0.2cm}\limsup_{k\to\infty} \| V^* - V_{\pi_k} \|_\mu\nonumber\\&\hspace{-0.1cm}\leq \frac{2\sqrt{C_1 C_2}~c(\alpha,\beta,0,\lambda) }{(1-c(\alpha,\beta,0,\lambda) )^2}~ \limsup_{k\to\infty} \|V_{\pi_{k}} - \bar{V}_{k}\|_{d_{\pi_k}}.
\end{align}
Moreover, from Theorem \ref{prop:approx-td-op-1} and  Assumption \ref{as:policy_iteration_req}.$(iii)$,   we have
\begin{align} \label{eq:pi_thm_2}
    \limsup_{k\to\infty} \| V^* - V_{\pi_k} \|_\mu\leq  \frac{2\sqrt{C_1 C_2}~c(\alpha,\beta,0,\lambda) }{(1-c(\alpha,\beta,0,\lambda) )^3}~ \delta  .
\end{align}
\end{theorem}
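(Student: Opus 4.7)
The plan is to adapt the classical error-propagation analysis of approximate policy iteration \citep{munos2003error,lazaric2012finite} to the robust setting, with the weighted $L_{2}$ contraction factor $c := c(\alpha,\beta,0,\lambda)$ from Theorem \ref{prop:approx-td-op-1} playing the role that the discount factor plays in the non-robust bounds. Throughout, let $\epsilon_{k} := V_{\pi_{k}} - \bar{V}_{k}$ denote the policy-evaluation error at iteration $k$ and let $\pi^{*}$ be the optimal robust policy.

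First I would derive a one-step pointwise recursion. Using $V^{*} = T_{\pi^{*}}V^{*}$, $V_{\pi_{k+1}} = T_{\pi_{k+1}}V_{\pi_{k+1}}$, and the greedy property $T_{\pi_{k+1}}\bar{V}_{k} \geq T_{\pi^{*}}\bar{V}_{k}$ implicit in the $\arg\max$ step of Algorithm \ref{algo-lspI}, one can decompose
\begin{align*}
V^{*} - V_{\pi_{k+1}} &= (T_{\pi^{*}}V^{*} - T_{\pi^{*}}\bar{V}_{k}) + (T_{\pi^{*}}\bar{V}_{k} - T_{\pi_{k+1}}\bar{V}_{k}) \\
&\quad + (T_{\pi_{k+1}}\bar{V}_{k} - T_{\pi_{k+1}}V_{\pi_{k+1}}),
\end{align*}
discard the non-positive middle term, and bound the remaining two differences by selecting worst-case kernels $\widetilde{P}^{(k)}, \widehat{P}^{(k)} \in \mathcal{P}$ that attain the robust infima $\inf_{P \in \mathcal{P}} P_{\pi^{*}}\bar{V}_{k}$ and $\inf_{P \in \mathcal{P}} P_{\pi_{k+1}}V_{\pi_{k+1}}$, respectively. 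Writing $\bar{V}_{k} = V_{\pi_{k}} - \epsilon_{k}$ and moving the term containing $V^{*} - V_{\pi_{k+1}}$ to the left-hand side yields a recursion of the form
\begin{align*}
V^{*} - V_{\pi_{k+1}} \leq (I - \alpha \widehat{P}^{(k)}_{\pi_{k+1}})^{-1}\bigl[\alpha \widetilde{P}^{(k)}_{\pi^{*}}(V^{*} - V_{\pi_{k}}) + \alpha (\widetilde{P}^{(k)}_{\pi^{*}} - \widehat{P}^{(k)}_{\pi_{k+1}})\epsilon_{k}\bigr].
\end{align*}

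Next I would iterate this inequality from $k = 0$ to $K-1$ and let $K \to \infty$, so that the initial residual $V^{*} - V_{\pi_{0}}$ vanishes by geometric contraction. This expresses $V^{*} - V_{\pi_{K}}$ as a sum over $k$ of compound stochastic-kernel operators (products of resolvents and $\alpha \widetilde{P}^{(k)}_{\pi^{*}}$ factors) applied to $\epsilon_{k}$. Taking the $\mu$-weighted $L_{2}$ norm, grouping each compound kernel into the stochastic matrix $H_{k}$ of Assumption \ref{as:policy_iteration_req}.$(iv)$, and applying the bounds $\mu H_{k} \leq C_{1} \bar{\mu}$ and $d_{\pi_{k}} \geq \bar{\mu}/C_{2}$ produces the change-of-measure estimate $\|\cdot\|_{\mu H_{k}} \leq \sqrt{C_{1} C_{2}}\,\|\cdot\|_{d_{\pi_{k}}}$. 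The weighted-norm contraction of Theorem \ref{prop:approx-td-op-1} then allows replacing each $\alpha^{j}$ factor by $c^{j}$ once norms are taken: the resolvent expansion $(I - \alpha \widehat{P}^{(k)}_{\pi_{k+1}})^{-1} = \sum_{j \geq 0}(\alpha \widehat{P}^{(k)}_{\pi_{k+1}})^{j}$ contributes $1/(1-c)$, and summing the outer geometric series $\sum_{k} c^{K-k}$ contributes $c/(1-c)$, together producing the $c/(1-c)^{2}$ prefactor of \eqref{eq:pi_thm_1}. Taking $\limsup_{k \to \infty}\|\epsilon_{k}\|_{d_{\pi_{k}}}$ out of the tail of the sum yields \eqref{eq:pi_thm_1}. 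The bound \eqref{eq:pi_thm_2} then follows by substituting Theorem \ref{prop:approx-td-op-1} specialized to $\rho = 0$, which combined with Assumption \ref{as:policy_iteration_req}.$(iii)$ gives $\|\epsilon_{k}\|_{d_{\pi_{k}}} \leq \|V_{\pi_{k}} - \Pi V_{\pi_{k}}\|_{d_{\pi_{k}}}/(1-c) \leq \delta/(1-c)$, supplying the extra $1/(1-c)$ factor.

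The principal obstacle, which is absent in the non-robust analysis, is that the kernels $\widetilde{P}^{(k)}_{\pi^{*}}$ and $\widehat{P}^{(k)}_{\pi_{k+1}}$ attaining the robust infima depend on the value functions against which the infimum is taken, so they are not fixed one-step transition matrices. I would handle this by taking them as minimizers (existence following from compactness of $\mathcal{P}$ built into Assumption 1), noting that they still lie in $\mathcal{P}$ and therefore inherit the Assumption \ref{as:offpolicy} domination $\alpha \widetilde{P}^{(k)}_{\pi^{*}}(s,s') \leq \beta P^{o}_{\pi_{e}}(s,s')$ uniformly in $k$. This is precisely what lets the $d$-weighted $L_{2}$ contraction with constant $c$ be invoked uniformly along the iterates, and the stochastic matrix $H_{k}$ in Assumption \ref{as:policy_iteration_req}.$(iv)$ is designed to encapsulate exactly the compound products of such data-dependent worst-case kernels, so that the concentrability constants $C_{1}, C_{2}$ apply at every iterate.
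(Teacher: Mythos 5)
Your plan is correct in its essential architecture and would deliver the theorem, but it reaches Lemma~\ref{lem:l_g}'s conclusion by a genuinely different route, so a comparison is in order. The paper never selects worst-case kernels attaining the robust infima. Instead it works throughout with the $\lambda$-operator $\bar{T}^{(\lambda)}_{\pi}$ and controls differences $\bar{T}^{(\lambda)}_{\pi}V_{1}-\bar{T}^{(\lambda)}_{\pi}V_{2}$ via the pointwise bound \eqref{eq:l_g_op}, which in turn rests on the Lipschitz-type estimate \eqref{eq:sigma_u_intermediate} for the support function $\sigma_{\U_{\pi}}$; this is why the stochastic matrices $\bar{H}_{*},\bar{H}_{k}$ are built from $\beta P^{o}_{\pi_{e}}$ and why the constant $c(\alpha,\beta,0,\lambda)=\beta(2-\lambda)/(1-\beta\lambda)$ (with its factor of $2$ from the two terms $\alpha P^{o}_{\pi}$ and $\alpha\sigma_{\U_{\pi}}$) appears. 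Your kernel-selection trick treats $\alpha P^{o}_{\pi}V+\alpha\sigma_{\U_{\pi}}(V)=\alpha\inf_{P\in\Pp}P_{\pi}V$ as a single object dominated by $\beta P^{o}_{\pi_{e}}$, so carried through it would actually yield a \emph{tighter} prefactor (roughly $\beta/(1-\beta)^{2}$ at $\lambda=0$ versus the paper's $2\beta/(1-2\beta)^{2}$), which implies the stated bound a fortiori since $c<1$ forces $\beta<1/2$; the cost is that your recursion is tied to the one-step operator, whereas the paper's greedy step $\bar{T}^{(\lambda)}_{\pi^{*}}\bar{V}_{k}\leq\bar{T}^{(\lambda)}_{\pi_{k+1}}\bar{V}_{k}$ is taken with respect to the $\lambda$-operator as in \eqref{eq:policy_update_value} (the two coincide only at $\lambda=0$). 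The second structural difference is how the self-reference in $V^{*}-V_{\pi_{k+1}}$ is resolved: you move $\alpha\widehat{P}^{(k)}_{\pi_{k+1}}(V^{*}-V_{\pi_{k+1}})$ to the left and invert the resolvent $(I-\alpha\widehat{P}^{(k)}_{\pi_{k+1}})^{-1}$, while the paper introduces the gain $g_{k}=V_{\pi_{k+1}}-V_{\pi_{k}}$, proves the separate self-referential bound \eqref{eq:g_k} for it, and inverts $(I-c\bar{H}_{k+1})^{-1}$ there instead; the two devices are interchangeable and produce the same $1/(1-c)^{2}$ shape. From that point on (normalizing compound kernels into the stochastic matrix $H_{k}$, Jensen's inequality to pass to $\|\cdot\|_{\mu}^{2}$, the change of measure $\|x\|_{\mu H_{k}}^{2}\leq C_{1}C_{2}\|x\|_{d_{\pi_{k}}}^{2}$, and the substitution of \eqref{eq:Ttilde-contraction-3} with $\rho=0$ and Assumption~\ref{as:policy_iteration_req}.$(iii)$ to get the extra $1/(1-c)$ in \eqref{eq:pi_thm_2}) your steps coincide with the paper's. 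One caveat to make explicit if you write this up: Lemma~\ref{lem:po} and the whole $d$-weighted machinery require dominating your data-dependent kernels $\widetilde{P}^{(k)}_{\pi^{*}},\widehat{P}^{(k)}_{\pi_{k+1}}$ by $\beta P^{o}_{\pi_{e}}$ for the \emph{exploration} policies of $\pi^{*}$ and $\pi_{k+1}$, which is exactly where Assumption~\ref{as:offpolicy} and Assumption~\ref{as:policy_iteration_req}.$(i)$ enter; you note this, and it is the same appeal the paper makes.
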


The above theorem, in particular \eqref{eq:pi_thm_2}, gives a (worst case) guarantee for the performance of the policy learned using the RLSPI algorithm. Note that the upper bound in \eqref{eq:pi_thm_2} is a constant where $\delta$ represents the (unavoidable) error due to the linear function approximation. We also note that using `$\limsup$' is necessary due to the policy chattering phenomenon in approximate policy iteration algorithms which exists even in the non-robust case \citep{BerBook12}. 

Instead of the asymptotic bound given in Theorem \ref{thm:RLSPI}, we can actually get a bound for any $K$ given in the RLSPI algorithm by modifying Assumption \ref{as:policy_iteration_req}.$(iv)$. We defer the precise statements of the assumption and theorem to  Section \ref{sec:PI_appendix} in the supplementary material due to page limitation.

\section{Experiments} \label{sec:simulations}

\begin{figure*}[t]
	\centering
	\begin{minipage}{.32\textwidth}
		\centering
		\includegraphics[width=\linewidth]{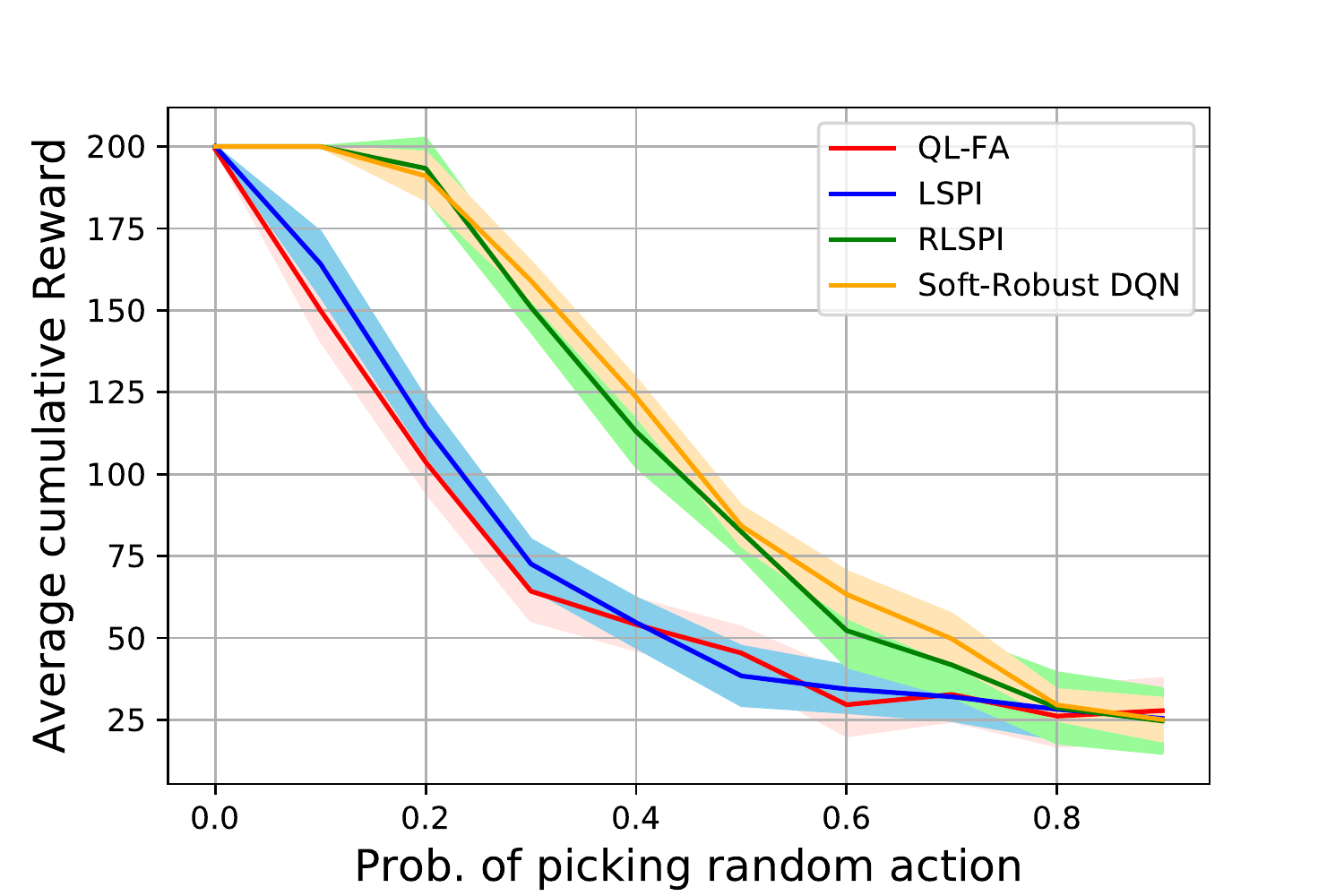}
		\captionof{figure}{CartPole}
		\label{fig:cart_failprob}
	\end{minipage}
	\begin{minipage}{.32\textwidth}
		\centering
		\includegraphics[width=\linewidth]{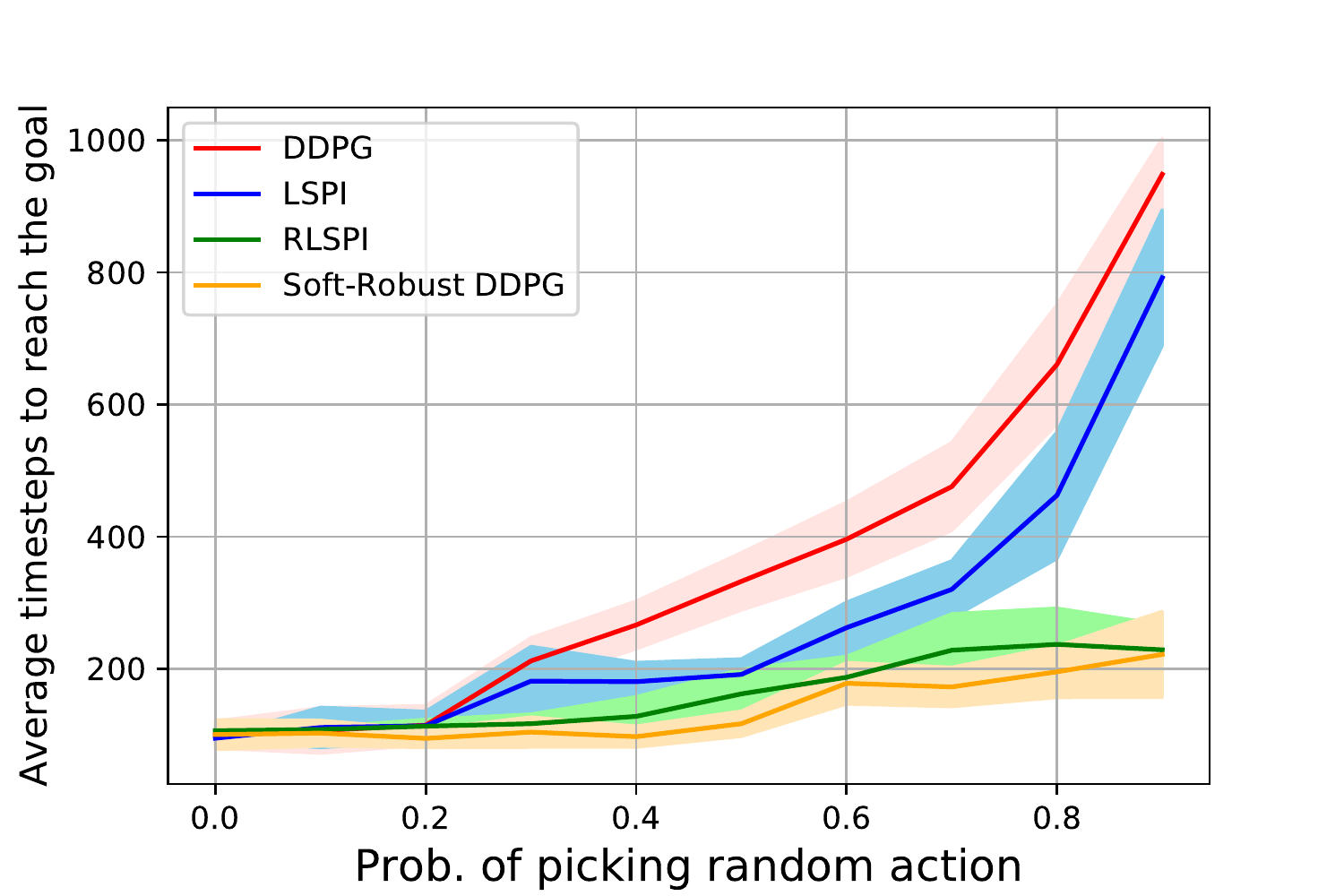}
		\captionof{figure}{MoutainCar}
		\label{fig:mt_failureprob}
	\end{minipage}
	\begin{minipage}{.32\textwidth}
		\centering
		\includegraphics[width=\linewidth]{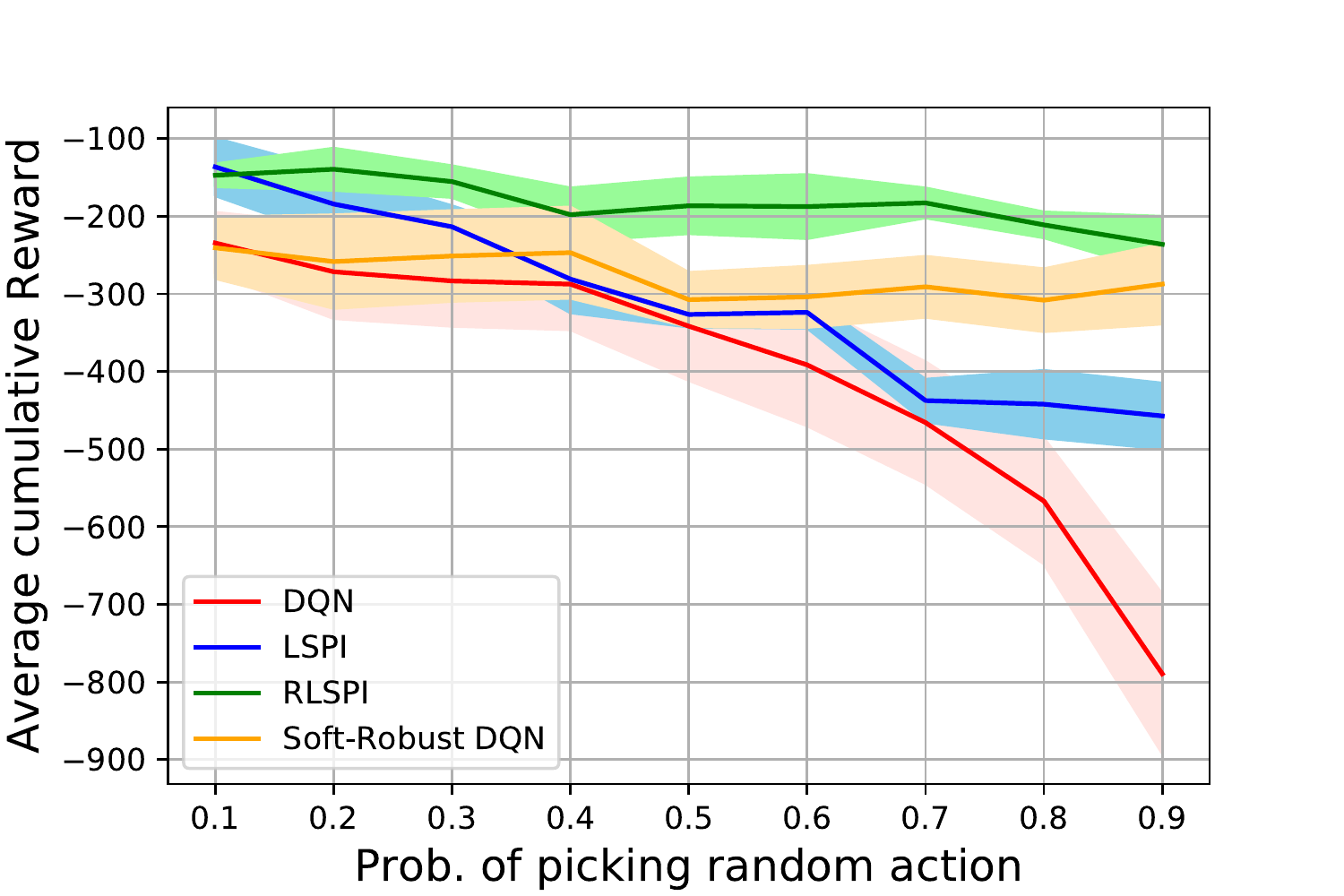}
		\captionof{figure}{Acrobot}
		\label{fig:acro_failureprob}
	\end{minipage}
	\centering
	\begin{minipage}{.32\textwidth}
		\centering
		\includegraphics[width=\linewidth]{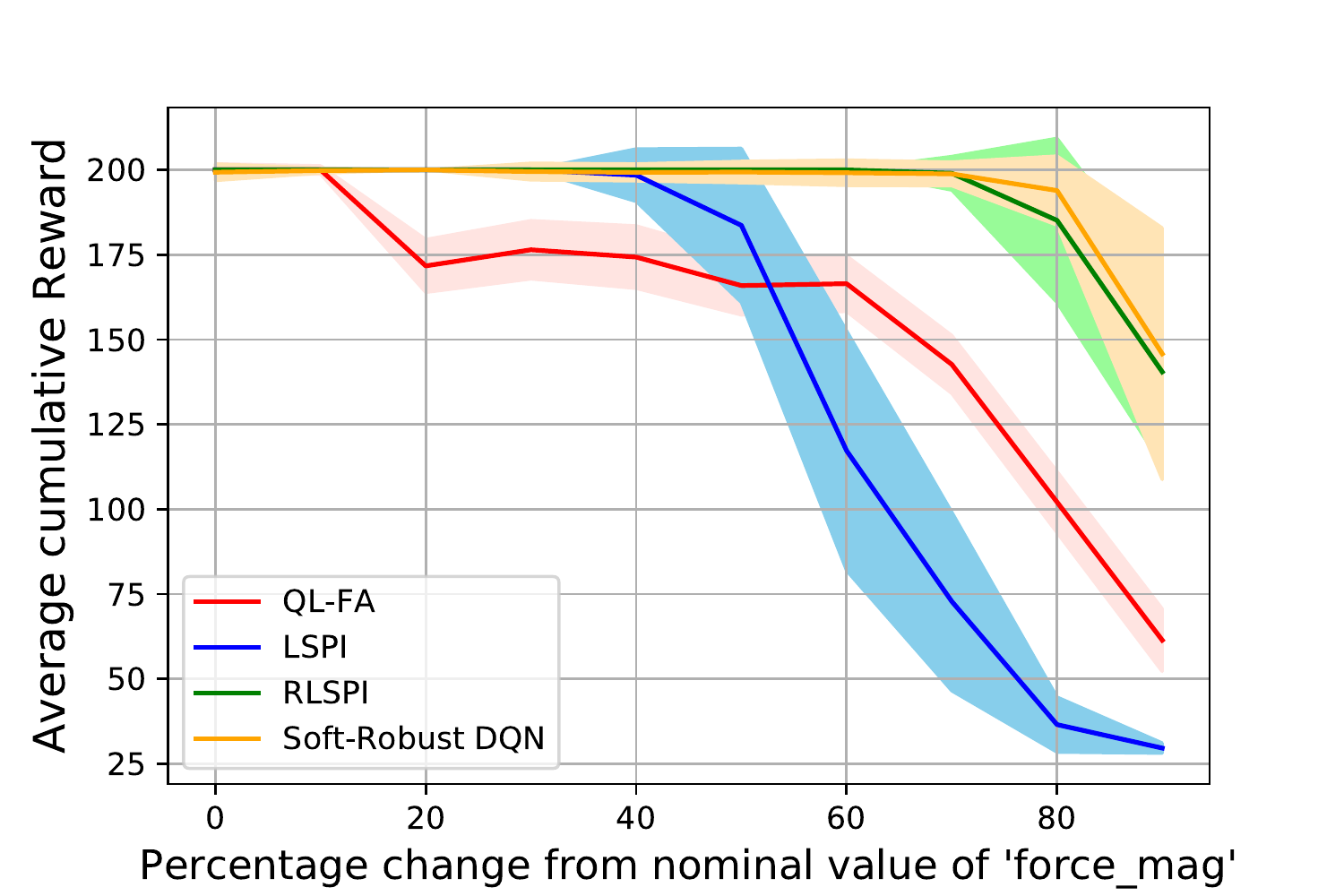}
		\captionof{figure}{CartPole}
		\label{fig:cart_force-in}
	\end{minipage}
	\begin{minipage}{.32\textwidth}
		\centering
		\includegraphics[width=\linewidth]{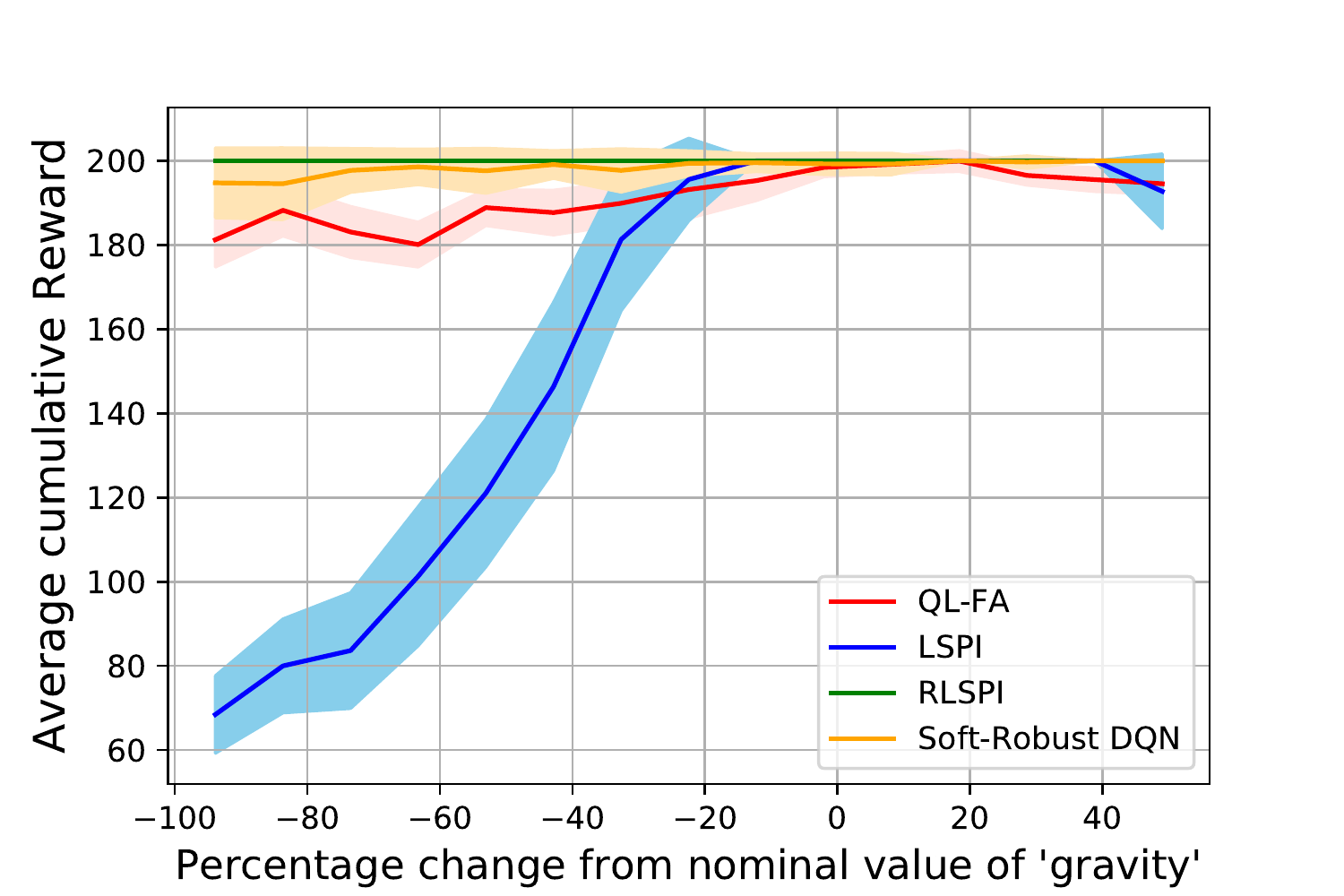}
		\captionof{figure}{CartPole}
		\label{fig:cart_gravity-in}
	\end{minipage}
	\begin{minipage}{.32\textwidth}
		\centering
		\includegraphics[width=\linewidth]{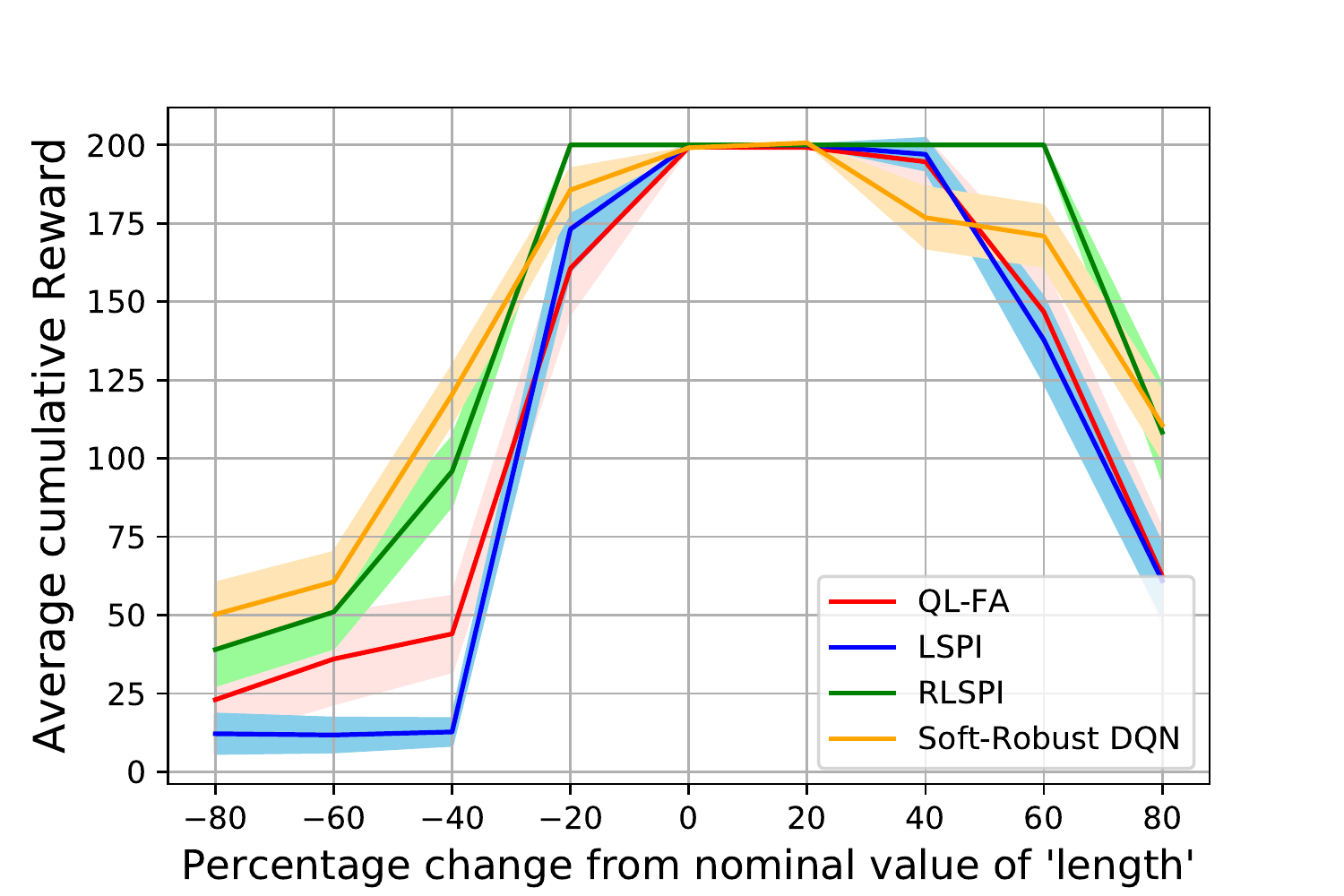}
		\captionof{figure}{CartPole}
		\label{fig:cart_pole-in}
	\end{minipage}
\end{figure*}

We implemented our RLSPI algorithm using the MushroomRL library \citep{deramo2020mushroomrl}, and evaluated its performance against  Q-learning algorithm for an environment with discrete action space, deep deterministic policy gradient (DDPG) \citep{lillicrapHPHETS15} algorithm for continuous action space, and LSPI algorithm  \citep{lagoudakis2003least}.  For comparing with the performance of our RLSPI algorithm against another robust RL algorithm, we implemented the  soft-robust algorithms proposed in \citep{derman2018soft} which use deep neural networks for function approximation.

We chose a spherical uncertainty set with a radius $r$. For such a set $\widehat{\mathcal{U}}$, a closed form solution of $\sigma_{\widehat{\mathcal{U}}}(\Phi w)$ can be computed for faster simulation.  We note that in all the figures shown below, the quantity in the vertical axis is averaged over $100$ runs, with the thick line showing the averaged value and the band around shows the $\pm 0.5$ standard deviation. These figures act as the performance criteria for comparing results. We provide more details and additional experiment results in Section \ref{sec:experiments} of supplementary. 

We used the CartPole, MountainCar, and Acrobot environments from  OpenAI Gym \citep{brockman2016openai}. We trained LSPI algorithm and our RLSPI algorithm on these environments with nominal parameters (default parameters in OpenAI Gym \citep{brockman2016openai}). We also trained Q-learning with linear function approximation and soft-robust deep Q-network(DQN) \citep{derman2018soft} algorithms on CartPole environment, DQN and soft-robust DQN \citep{derman2018soft} algorithms on Acrobot environment, and DDPG and soft-robust DDPG \citep{derman2018soft} algorithms on MountainCar environment. Then, to evaluate  the robustness of the polices obtained, we changed the parameters of these environments and tested the performance of the learned polices on the perturbed environment. 

In Figures \ref{fig:cart_failprob}-\ref{fig:acro_failureprob}, we show the robustness against  action perturbations. In real-world settings, due to model mismatch or noise in the environments, the resulting action  can be different from the intended action. We model this by picking a random action with some probability at each time step. Figure \ref{fig:cart_failprob} shows the  change in  the average episodic reward against the probability of picking a random action for the CartPole environment.  Figure \ref{fig:mt_failureprob} shows the average number of time steps to reach the goal in the MountainCar environment. Figure \ref{fig:acro_failureprob} shows the average episodic reward in the Acrobot environment. In all three cases,  RLSPI algorithm shows robust performance against the perturbations.

Figures \ref{fig:cart_force-in}-\ref{fig:cart_pole-in} shows the test performance on CartPole, by changing the parameters  \emph{force$\_$mag} (external force disturbance), \emph{gravity}, \emph{length} (length of pole on the cart). The nominal values of these parameters  are $10, 9.8$, and  $0.5$ respectively. RLSPI again exhibits robust performance.  

The performance of our RLSPI algorithm is consistently superior to that of the non-robust algorithms. Moreover, the performance of RLSPI algorithm is  comparable  with that of the soft-robust algorithms \citep{derman2018soft}, even though the latter uses deep neural networks for function approximation while our algorithm uses only linear function approximation architecture. We also would like to emphasize that our work gives provable guarantees for the policy learned by  the algorithm whereas \citep{derman2018soft} does not provide any such guarantees.

\section{Conclusion and Future Work}
\label{sec:conclusion}
We have presented an online model-free reinforcement learning algorithm to learn control policies that are robust to the parameter uncertainties of the model, for system with large state spaces. While there have been  interesting empirical works on robust deep RL using neural network, they only provide  convergence guarantees to a local optimum. Different from such empirical works, we proposed a learning based robust policy iteration algorithm called RLSPI algorithm with explicit theoretical guarantees on the performance of the learned policy. To the best of our knowledge, this is the first work that presents  model-free reinforcement learning algorithm  with function approximation for learning the optimal robust policy.  We also empirically evaluated the performance of our RLSPI algorithm on standard benchmark RL problems.

In  future, we plan to extend our theoretical results to nonlinear function approximation architectures. We also plan to characterize the  sample complexity of robust reinforcement learning algorithms. Extending offline RL approaches to robust setting is another research area that we plan to pursue.



\bibliography{RSLPI-References} 
\bibliographystyle{icml2021}
\newpage \onecolumn

\section*{Appendix}

\appendix

\section{Proofs of the Results in Section \ref{sec:RTB-Ber}}
\subsection{Proof of Proposition \ref{prop:informal-cont-fp-1}}
We first restate Proposition \ref{prop:informal-cont-fp-1} formally and then give the proof. 
\begin{proposition}
\label{prop:cont-fp-1}
(i) For any $V_{1}, V_{2} \in \mathbb{R}^{|\mathcal{S}|}$ and $\lambda \in [0, 1),$ $\|{T}^{(\lambda)}_\pi V_{1} -  {T}^{(\lambda)}_\pi V_{2} \|_{\infty} \leq \frac{\alpha (1 - \lambda)}{(1 - \alpha \lambda)} \|V_{1} -  V_{2} \|_{\infty}$.  So, ${T}^{(\lambda)}_\pi$ is a contraction in   sup norm for any $\alpha \in (0, 1), \lambda \in [0, 1)$.  \\ 
(ii)  The robust value function $V_{\pi}$ is the unique fixed point of ${T}^{(\lambda)}_\pi$, i.e., ${T}^{(\lambda)}_\pi  V_\pi = V_\pi,$ for all $ \alpha \in(0, 1)$ and  $\lambda \in [0, 1).$
\end{proposition}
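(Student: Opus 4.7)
The plan is to exploit the known fact (established in \citep{iyengar2005robust}) that the one-step robust Bellman operator $T_\pi$ is an $\alpha$-contraction in sup norm, and combine it with the geometric-series structure of $T_\pi^{(\lambda)}$ defined in \eqref{eq:TLambda-Ber-1}.

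For part (i), I would first observe that by induction on $m$, the iterate $T_\pi^{m+1}$ is an $\alpha^{m+1}$-contraction in sup norm, since composition of a $c$-contraction with an $\alpha$-contraction yields a $c\alpha$-contraction. Then, for any $V_1, V_2 \in \R^{|\Ss|}$, I would apply the triangle inequality to the series defining $T_\pi^{(\lambda)}$:
\begin{align*}
\|T_\pi^{(\lambda)} V_1 - T_\pi^{(\lambda)} V_2\|_\infty
&\leq (1-\lambda)\sum_{m=0}^\infty \lambda^m \|T_\pi^{m+1} V_1 - T_\pi^{m+1} V_2\|_\infty \\
&\leq (1-\lambda)\sum_{m=0}^\infty \lambda^m \alpha^{m+1} \|V_1 - V_2\|_\infty \\
&= \frac{\alpha(1-\lambda)}{1-\alpha\lambda}\|V_1 - V_2\|_\infty,
\end{align*}
where the geometric sum $\sum_{m=0}^\infty (\alpha\lambda)^m = 1/(1-\alpha\lambda)$ is valid since $\alpha,\lambda \in [0,1)$ give $\alpha\lambda < 1$. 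A short algebraic check confirms that $\alpha(1-\lambda)/(1-\alpha\lambda) < 1$ iff $\alpha < 1$, so the contraction constant is strictly less than one.

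For part (ii), the argument is essentially one line: since $V_\pi$ is the fixed point of $T_\pi$ (a known result of \citep{iyengar2005robust}), a trivial induction gives $T_\pi^{m+1} V_\pi = V_\pi$ for every $m \geq 0$, and hence
\begin{equation*}
T_\pi^{(\lambda)} V_\pi = (1-\lambda)\sum_{m=0}^\infty \lambda^m V_\pi = V_\pi,
\end{equation*}
using $(1-\lambda)\sum_{m=0}^\infty \lambda^m = 1$. Uniqueness of the fixed point is then immediate from the contraction property proved in part (i) via the Banach fixed-point theorem.

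I do not anticipate a substantive obstacle here; the only mild subtlety is justifying termwise application of the triangle inequality and contraction bound to the infinite series, which is handled by the absolute summability of $\sum_m \lambda^m \alpha^{m+1}$ and the boundedness of $V_1, V_2$ on the finite state space $\Ss$. Everything else is routine manipulation of geometric sums and iterated contractions.
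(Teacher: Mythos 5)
Your proposal is correct and follows essentially the same route as the paper's proof: termwise application of the triangle inequality to the series, the $\alpha^{m+1}$-contraction of $T_\pi^{m+1}$ inherited from the $\alpha$-contraction of $T_\pi$, and the geometric sum $\sum_m (\alpha\lambda)^m \alpha (1-\lambda) = \alpha(1-\lambda)/(1-\alpha\lambda)$, with part (ii) reduced to the fixed-point property of $T_\pi$ and the Banach fixed-point theorem. The only cosmetic difference is that you verify $T_\pi^{(\lambda)} V_\pi = V_\pi$ by direct computation, whereas the paper deduces it from (i) and uniqueness of the fixed point of $T_\pi$; both are fine.
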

\begin{proof}
From \eqref{eq:TLambda-Ber-1} we have \begin{align*}
\| {T}^{(\lambda)}_\pi (V_1 ) - {T}^{(\lambda)}_\pi (V_2 ) \|_{\infty}&= \| (1-\lambda) \sum_{m=0}^\infty \lambda^{m} (T^{m+1}_{\pi}(V_1)-T^{m+1}_{\pi}(V_2)) \|_\infty\\ &\leq (1-\lambda) \sum_{m=0}^\infty \lambda^{m} \| T^{m+1}_{\pi}(V_1)-T^{m+1}_{\pi}(V_2) \|_\infty
\\ &\stackrel{(a)}{\leq} (1-\lambda) \sum_{m=0}^\infty \lambda^{m} \alpha^{m+1} \| V_1-V_2 \|_\infty = \frac{\alpha (1 - \lambda)}{(1 - \alpha \lambda)} \|V_{1} -  V_{2} \|_{\infty}
\end{align*} where $(a)$ follows since $T_{\pi}$ is  a contraction operator with contraction modulus $\alpha$. This proves $(i)$. Since $V_\pi$ is the unique fixed point of $T_\pi$, $(ii)$ directly follows from $(i)$ and the Banach Fixed Point Theorem \citep[Theorem 6.2.3]{Puterman05}.
\end{proof}

\subsection{Proof of Proposition \ref{prop:informal-cont-fp-2}}
We first restate Proposition \ref{prop:informal-cont-fp-2} formally and then give the proof. 
\begin{proposition}
\label{prop:cont-fp-2}
Under Assumption \ref{as:offpolicy}, for any $V_{1}, V_{2} \in \mathbb{R}^{|\mathcal{S}|}$ and $\lambda \in [0, 1),$ \\ $\|\Pi {T}^{(\lambda)}_\pi V_{1} -  \Pi {T}^{(\lambda)}_\pi V_{2} \|_{d} \leq \frac{\beta (1 - \lambda)}{(1 - \beta \lambda)} \|V_{1} -  V_{2} \|_{d}$.  So, $\Pi {T}^{(\lambda)}_\pi$ is a contraction mapping in $d$-weighted Euclidean norm for any $\beta \in (0, 1), \lambda \in [0, 1)$. 
\end{proposition}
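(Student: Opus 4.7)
The plan is to reduce the statement to a one-step $d$-norm contraction fact for $T_\pi$ and then pass to $T^{(\lambda)}_\pi$ through the defining series in \eqref{eq:TLambda-Ber-1}. Since $\Pi$ is the orthogonal projection onto $\mathrm{range}(\Phi)$ with respect to the $d$-weighted Euclidean norm, it is non-expansive in $\|\cdot\|_d$, so it suffices to prove that $T^{(\lambda)}_\pi$ itself is a contraction in $\|\cdot\|_d$ with modulus $\beta(1-\lambda)/(1-\beta\lambda)$.

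The first and key step is to show that the unprojected robust Bellman operator $T_\pi$ is a $\beta$-contraction in $\|\cdot\|_d$ (this is essentially the lemma from Tamar et al., which I would recall for completeness). Starting from the elementary coordinatewise bound $|\inf_P a - \inf_P b|\le \sup_P |a-b|$, rectangularity of $\mathcal{P}$ lets me take the infimum row by row in \eqref{eq:Tpi-matrix-1}, yielding
\begin{align*}
|T_\pi V_1 - T_\pi V_2|(s)\; \le\; \alpha \sup_{P\in\mathcal{P}} \sum_{s'} P_{s,\pi(s)}(s')\,|V_1(s')-V_2(s')|
\end{align*}
entrywise. Assumption~\ref{as:offpolicy}(i) then upgrades this to $|T_\pi V_1 - T_\pi V_2| \le \beta\, P^o_{\pi_e}\,|V_1-V_2|$ componentwise. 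Squaring both sides, applying Jensen's inequality rowwise against the probability vector $P^o_{\pi_e}(s,\cdot)$, and finally using the stationarity $d^{\top} P^o_{\pi_e} = d^{\top}$ from Assumption~\ref{as:offpolicy}(ii) to cancel the outer sum against the weights, gives $\|T_\pi V_1 - T_\pi V_2\|_d \le \beta\,\|V_1-V_2\|_d$.

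A straightforward induction then gives $\|T^{m+1}_\pi V_1 - T^{m+1}_\pi V_2\|_d \le \beta^{m+1}\|V_1-V_2\|_d$, and applying the triangle inequality inside \eqref{eq:TLambda-Ber-1} yields
\begin{align*}
\|T^{(\lambda)}_\pi V_1 - T^{(\lambda)}_\pi V_2\|_d \;\le\; (1-\lambda)\sum_{m=0}^{\infty}\lambda^{m}\beta^{m+1}\,\|V_1-V_2\|_d \;=\; \frac{\beta(1-\lambda)}{1-\beta\lambda}\,\|V_1-V_2\|_d.
\end{align*}
Pre-composing with the non-expansive $\Pi$ delivers the inequality in the proposition, and an elementary check ($\beta - \beta\lambda < 1 - \beta\lambda$ iff $\beta<1$) shows that the contraction constant is genuinely less than one whenever $\beta\in(0,1)$ and $\lambda\in[0,1)$.

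The main obstacle is the $d$-norm contraction of $T_\pi$ itself: the supremum over $\mathcal{P}$ produced by the $|\inf-\inf|$ bound must be controlled uniformly using only the pointwise Assumption~\ref{as:offpolicy}(i), and it is the Jensen-plus-stationarity maneuver that converts that pointwise domination by $\beta P^o_{\pi_e}$ into a bona fide $\|\cdot\|_d$ bound. Once that is in hand, everything else is a geometric summation and non-expansiveness of the projection.
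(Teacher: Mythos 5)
Your proof is correct and follows essentially the same route as the paper: expand $T^{(\lambda)}_\pi$ via the series in \eqref{eq:TLambda-Ber-1}, bound each term using the fact that $T_\pi$ is a $\beta$-contraction in $\|\cdot\|_d$ so that $T^{m+1}_\pi$ contracts with modulus $\beta^{m+1}$, sum the geometric series to get $\beta(1-\lambda)/(1-\beta\lambda)$, and finish with the non-expansiveness of $\Pi$. The only difference is that you supply a self-contained proof of the one-step $d$-norm contraction (via the $|\inf-\inf|$ bound, Assumption~\ref{as:offpolicy}(i), Jensen's inequality, and stationarity), whereas the paper simply cites this as Corollary 4 of \citep{tamar2014scaling}; your derivation of that step is also correct.
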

\begin{proof}
From \eqref{eq:TLambda-Ber-1} we have \begin{align*}
\| {T}^{(\lambda)}_\pi (V_1 ) - {T}^{(\lambda)}_\pi (V_2 ) \|_{d}&= \| (1-\lambda) \sum_{m=0}^\infty \lambda^{m} ( T^{m+1}_{\pi}(V_1)- T^{m+1}_{\pi}(V_2)) \|_d\\ &\leq (1-\lambda) \sum_{m=0}^\infty \lambda^{m} \| T^{m+1}_{\pi}(V_1)- T^{m+1}_{\pi}(V_2) \|_d
\\ &\stackrel{(a)}{\leq} (1-\lambda) \sum_{m=0}^\infty \lambda^{m} \beta^{m+1} \| V_1-V_2 \|_d = \frac{\beta (1 - \lambda)}{(1 - \beta \lambda)} \|V_{1} -  V_{2} \|_{d}
\end{align*} where $(a)$ follows since $\Pi T_{\pi}$ is  a contraction in the $d$-weighted Euclidean norm with contraction modulus $\beta$  \citep[Corollary 4]{tamar2014scaling}. From \citep{tsitsiklis1997analysis}, $\Pi$ is a nonexpansive mapping in the $d$-weighted Euclidean norm. So, $\Pi {T}^{(\lambda)}_\pi$ has the stated proptery. 
\end{proof}

\subsection{Derivation of  \eqref{eq:rpvi-11}}
 Given $w_{k}$,  $w_{k+1}$ which satisfies the equation $\Phi w_{k+1} = \Pi T^{(\lambda)}_{\pi} \Phi w_{k}$ can be written as the solution of the minimization problem $w_{k+1} = \arg \min_{w} \| \Phi w  -  T^{(\lambda)}_{\pi} \Phi w_{k}  \|_d^2$. 
Taking gradient w.r.t. $w$ and equating to zero, we get $\Phi^{\top} D  (\Phi w  -  T^{(\lambda)}_\pi \Phi w_{k})  =0$, which implies $ w_{k+1} = (\Phi^{\top} D \Phi)^{-1} \Phi^{\top} D   T^{(\lambda)}_\pi \Phi w_{k}$. This can  be written as 
\begin{align*}
w_{k+1} &= (\Phi^{\top} D \Phi)^{-1} \Phi^{\top} D   T^{(\lambda)}_{\pi} \Phi w_{k} = w_{k} + (\Phi^{\top} D \Phi)^{-1} (\Phi^{\top} D   T^{(\lambda)}_{\pi} \Phi w_{k}  - \Phi^{\top} D \Phi w_{k}) \nonumber \\
&= w_{k} + (\Phi^{\top} D \Phi)^{-1} \Phi^{\top} D (T^{(\lambda)}_{\pi} \Phi w_{k}  - \Phi w_{k}).
\end{align*}
\subsection{Derivation of  \eqref{eq:TLambda-Ber-expanded}}
We have
\begin{align}
T V & = r_{\pi} + \alpha P^o_\pi V + \alpha \sigma_{\U_\pi} (V) \label{eq:base_T} \\
T^{2} V &= r_{\pi} + \alpha P^o_\pi (TV) + \alpha \sigma_{\U_\pi} (T V) \nonumber \\
&= (1 + \alpha P) r_{\pi} + (\alpha P^o_\pi)^{2} V   + \alpha (\alpha P^o_\pi)  \sigma_{\U_\pi} (V) + \alpha \sigma_{\U_\pi} (T V). \nonumber
\end{align} Suppose we have an induction hypothesis for $T^m V$, i.e.,
\begin{align*}
T^{m}V &= \sum^{m-1}_{k=0} (\alpha P^o_\pi)^{k} r_{\pi} + (\alpha P^o_\pi)^{m} V + \alpha \sum^{m-1}_{k=0} (\alpha P^o_\pi)^{k} \sigma_{\U_\pi} (T^{(m-1-k)} V).
\end{align*} Now, to verify an induction step, observe that,
\begin{align*}
 &T^{m+1}V= \sum^{m-1}_{k=0} (\alpha P^o_\pi)^{k} r_{\pi} + (\alpha P^o_\pi)^{m} TV + \alpha \sum^{m-1}_{k=0} (\alpha P^o_\pi)^{k} \sigma_{\U_\pi} (T^{(m-1-k)} TV) \\
&\stackrel{(a)}{=} \sum^{m-1}_{k=0} (\alpha P^o_\pi)^{k} r_{\pi} + (\alpha P^o_\pi)^{m} (r + \alpha P^o_\pi V + \alpha \sigma_{\U_\pi} (V)) + \alpha \sum^{m-1}_{k=0} (\alpha P^o_\pi)^{k} \sigma_{\U_\pi} (T^{(m-1-k)} TV) \\
&= \sum^{m}_{k=0} (\alpha P^o_\pi)^{k} r_{\pi} + (\alpha P^o_\pi)^{m+1} V + \alpha \sum^{m}_{k=0} (\alpha P^o_\pi)^{k} \sigma_{\U_\pi} (T^{(m-k)} V)
\end{align*} where $(a)$ follows from \eqref{eq:base_T}. Now, \eqref{eq:TLambda-Ber-expanded} directly follows from \eqref{eq:TLambda-Ber-1}.


\section{Proofs of the Results in Section \ref{sec:rlspe-1}}
\subsection{Proof of Proposition \ref{prop:fixedpoint-1} }
\begin{proof} With $\widehat{\U}_\pi = {\U}_\pi$,  we have, 
\begin{align*}
\widetilde{T}^{(\lambda)}_\pi (V_{\pi} )& =  (1-\lambda) \sum_{m=0}^\infty \lambda^{m} \left[\sum_{t=0}^{m} (\alpha P^{o}_\pi)^t {r}_\pi+ \alpha \sum_{t=0}^{m} (\alpha P^{o}_\pi)^t  ~ {\sigma}_{{\U_\pi}} (V_{\pi}) + (\alpha P^{o}_\pi)^{m+1} V_{\pi} \right] \\
&\stackrel{(a)}{=}  (1-\lambda) \sum_{m=0}^\infty \lambda^{m} \left[\sum_{t=0}^{m} (\alpha P^{o}_\pi)^t  ({r}_\pi + \alpha {\sigma}_{{\U_\pi}} (V_{\pi}) ) + (\alpha P^{o}_\pi)^{m+1} V_{\pi} \right] \\
&=(1-\lambda) \sum_{m=0}^\infty \lambda^{m} \left[\sum_{t=0}^{m} (\alpha P^{o}_\pi)^t   (I -\alpha P^{o}_\pi ) V_{\pi} + (\alpha P^{o}_\pi)^{m+1} V_{\pi} \right] \\
&\stackrel{(b)}{=}   (1 - \lambda)  \sum_{m=0}^\infty \lambda^{m} V_{\pi}  = V_{\pi}.
\end{align*}
Here, to get (a), we wrote $r_{\pi} + \alpha \sigma_{\mathcal{U}_{\pi}} (V_{\pi})= (I -\alpha P^{o}_\pi ) V_{\pi}$  since $T_{\pi} V_{\pi} = V_{\pi}$.  $(b)$ is from the telescopic sum of the previous equation. 
\end{proof}




\subsection{Proof of Theorem \ref{prop:approx-td-op-1} }

We have the following lemma which is similar to Lemma 4.2 in \citep{roy2017reinforcement}. 
\begin{lemma}
\label{lem:sigma_d_u_uhat}
For any vector $ V\in \R^{|\mathcal{S}|}$ and for all $(s, a) \in \mathcal{S} \times \mathcal{A},$  \[| \sigma_{\widehat{\U}_{s,a}} (V) - \sigma_{{\U}_{s,a}} (V)  | \leq \rho \|V\|_d.\] 
\end{lemma}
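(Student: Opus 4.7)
My plan is to prove the two-sided inequality $|\sigma_{\widehat{\U}_{s,a}}(V) - \sigma_{\U_{s,a}}(V)| \leq \rho \|V\|_d$ by showing each direction separately using a standard ``transport'' argument between the two sets, with the key technical tool being a weighted Cauchy--Schwarz bound relating the unweighted inner product to the $d$-weighted norm.

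First I would fix $(s,a)$ and focus on bounding $\sigma_{\widehat{\U}_{s,a}}(V) - \sigma_{\U_{s,a}}(V)$. Let $x^\star \in \U_{s,a}$ (nearly) attain the infimum $\sigma_{\U_{s,a}}(V) = (x^\star)^\top V$. I would split into two cases: if $x^\star \in \widehat{\U}_{s,a}$, then trivially $\sigma_{\widehat{\U}_{s,a}}(V) \leq (x^\star)^\top V = \sigma_{\U_{s,a}}(V)$ and we are done in this direction. Otherwise $x^\star \in \U_{s,a} \setminus \widehat{\U}_{s,a}$, and I pick any $\hat{x} \in \widehat{\U}_{s,a}$; the definition of $\rho_{s,a}$ (second branch of the max, with the roles $y=x^\star$ and $x=\hat{x}$) gives $\|\hat{x} - x^\star\|_d \leq \rho_{s,a}\, d_{\min}$. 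Then $\sigma_{\widehat{\U}_{s,a}}(V) \leq \hat{x}^\top V = (x^\star)^\top V + (\hat{x} - x^\star)^\top V$, so it suffices to bound the last inner product.

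The key technical step is the bound $|u^\top V| \leq \|u\|_d \|V\|_d / d_{\min}$, which I would derive by writing
\[
\Bigl|\sum_{s'} u_{s'} V(s')\Bigr| = \Bigl|\sum_{s'} \sqrt{d(s')}\,u_{s'} \cdot \frac{V(s')}{\sqrt{d(s')}}\Bigr| \leq \|u\|_d \cdot \sqrt{\sum_{s'} V(s')^2/d(s')},
\]
by Cauchy--Schwarz, and then observing $\sum_{s'} V(s')^2/d(s') \leq (1/d_{\min})\sum_{s'} V(s')^2 \leq (1/d_{\min}^2)\sum_{s'} d(s') V(s')^2 = \|V\|_d^2/d_{\min}^2$. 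Applied with $u = \hat{x} - x^\star$, this yields $|(\hat{x}-x^\star)^\top V| \leq \rho_{s,a}\|V\|_d \leq \rho\|V\|_d$, completing one direction.

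The reverse direction $\sigma_{\U_{s,a}}(V) - \sigma_{\widehat{\U}_{s,a}}(V) \leq \rho\|V\|_d$ is entirely symmetric, swapping the roles of $\U_{s,a}$ and $\widehat{\U}_{s,a}$ and invoking the first branch of the max in the definition of $\rho_{s,a}$. Finally, taking the maximum over $(s,a)$ replaces $\rho_{s,a}$ by $\rho$. I don't anticipate a serious obstacle here: the only subtle point is the care needed in the case split (when the infimizer of one set already lies in the other, the $\rho_{s,a}$ bound does not directly apply, but it is not needed). A minor technicality is that the infima should be attained (which is fine if $\U_{s,a}$ and $\widehat{\U}_{s,a}$ are compact, as in the spherical example); otherwise an $\epsilon$-infimizer argument with $\epsilon \to 0$ works identically.
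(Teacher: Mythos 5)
Your proposal is correct and follows essentially the same route as the paper's proof: the same weighted Cauchy--Schwarz bound $|u^{\top}V| \leq \|u\|_d\|V\|_d/d_{\text{min}}$ combined with a transport argument that moves a (near-)infimizer of one set to a point of the other and invokes the definition of $\rho_{s,a}$, done symmetrically in both directions. The only quibble is that you have swapped the labels of the two branches of the max in the definition of $\rho_{s,a}$ (the pair $x=\hat{x}\in\widehat{\U}_{s,a}$, $y=x^{\star}\in\U_{s,a}\setminus\widehat{\U}_{s,a}$ is the \emph{first} branch, not the second), but the sets you actually pair up are the right ones, and your explicit case split on whether the infimizer already lies in the other set is if anything a bit more careful than the paper's version.
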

\begin{proof}
First note that, for any $x, y \in \R^{|\mathcal{S}|}$ we have \begin{equation}
x^{\top} y ~~\leq~~ ({x^\top D y})/ {d_{\text{min}}} ~~\stackrel{(a)}{\leq}~~ ({\|x\|_d \|y\|_d})/{d_{\text{min}}}, \label{eq:cs_d}
\end{equation}
where $(a)$ follows from Cauchy-Schwarz inequality with respect to $\|\cdot\|_d$ norm.

Consider any $p\in \widehat{\U}_{s,a}$ and $q \in {\U}_{s,a} \setminus \widehat{\U}_{s,a}$. For any $V\in \R^{|\mathcal{S}|}$, we have \begin{align*}
&p^\top V = q^\top V + (p-q)^\top V \geq  \sigma_{{\U}_{s,a}} (V) + (p-q)^\top V \geq \sigma_{{\U}_{s,a}} (V)+ \min_{x\in\widehat{\U}_{s,a}}  \min_{y\in {\U}_{s,a} \setminus \widehat{\U}_{s,a}} (x-y)^\top V\\
&= \sigma_{{\U}_{s,a}} (V)- \max_{x\in\widehat{\U}_{s,a}} \max_{y\in {\U}_{s,a} \setminus \widehat{\U}_{s,a}}  (y-x)^\top V \stackrel{(b)}{\geq}  \sigma_{{\U}_{s,a}} (V)- \max_{x\in\widehat{\U}_{s,a}} \max_{y\in {\U}_{s,a} \setminus \widehat{\U}_{s,a}}  ({\|y-x\|_d \|V\|_d})/{d_{\text{min}}} \\
&\geq \sigma_{{\U}_{s,a}} (V) - \rho_{s,a} \|V\|_d,
\end{align*} 
where $(b)$ follows from \eqref{eq:cs_d}. By taking infimum on both sides with respect to $p\in\widehat{\U}_{s,a}$,  we get, 
\[\sigma_{\widehat{\U}_{s,a}} (V) \geq \sigma_{{\U}_{s,a}} (V) - \rho_{s,a} \|V\|_d.\] We can also get $\sigma_{{\U}_{s,a}} (V) \geq \sigma_{\widehat{\U}_{s,a}} (V)  - \rho_{s,a} \|V\|_d$ by similar arguments. Combining, we get,  \[| \sigma_{\widehat{\U}_{s,a}} (V) - \sigma_{{\U}_{s,a}} (V)  | \leq \rho_{s,a} \|V\|_d.\] Since $\rho = \max_{s, a} \rho_{s,a},$ we get the desired result. 
\end{proof}

We will use the following result which follows directly from   \citep[Lemma 1]{tsitsiklis1997analysis}. 
\begin{lemma}
\label{lem:po}
Under Assumption \ref{as:offpolicy}, for any $V$, we have $\|P^o_{\pi_{e}} V \|_d \leq \| V \|_d.$
\end{lemma}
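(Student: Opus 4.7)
The plan is to establish the non-expansiveness directly from two ingredients: (i) each row of $P^o_{\pi_e}$ is a probability distribution, so Jensen's (equivalently, Cauchy--Schwarz) inequality applies row-wise, and (ii) $d=d_{\pi_e}$ is, by Assumption \ref{as:offpolicy}(ii), the stationary distribution of $P^o_{\pi_e}$, so $d^\top P^o_{\pi_e} = d^\top$. Combining these two facts is exactly what makes the $d$-weighted $\ell_2$ norm compatible with the Markov operator.

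Concretely, I would expand $\|P^o_{\pi_e} V\|_d^2 = \sum_s d(s)\bigl(\sum_{s'} P^o_{\pi_e}(s,s')\,V(s')\bigr)^2$ and apply Jensen's inequality inside the square (valid because $P^o_{\pi_e}(s,\cdot)$ is a probability measure on $\Ss$), obtaining the upper bound $\sum_s d(s)\sum_{s'} P^o_{\pi_e}(s,s')\,V(s')^2$. Then I would swap the order of summation to get $\sum_{s'} V(s')^2 \bigl(\sum_s d(s)\,P^o_{\pi_e}(s,s')\bigr)$, and invoke stationarity $\sum_s d(s)\,P^o_{\pi_e}(s,s') = d(s')$ to collapse the inner sum. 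The right-hand side is then exactly $\|V\|_d^2$, yielding the claim after taking square roots.

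There is no real obstacle here: the lemma is the standard fact that a stochastic matrix is a contraction (in fact, non-expansion) on $L^2(d)$ when $d$ is its invariant distribution, and the proof is two lines. I would just make sure to cite \citep[Lemma 1]{tsitsiklis1997analysis} (as the excerpt already indicates) for formalism, and to note that Assumption \ref{as:offpolicy}(ii) supplies the required invariance of $d$ under $P^o_{\pi_e}$, so no additional hypothesis is needed.
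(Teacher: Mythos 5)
Your proof is correct and is essentially the paper's argument: the paper simply cites \citep[Lemma 1]{tsitsiklis1997analysis}, whose proof is exactly the row-wise Jensen inequality combined with the invariance $d^\top P^o_{\pi_e} = d^\top$ that you spell out. No gap; the only difference is that you write out the two-line computation that the paper delegates to the reference.
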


\begin{remark}\label{remark_op}
	The inequality in the Assumption \ref{as:offpolicy} can be written as, $\alpha P^{o}_{s,\pi(s)}(s') + \alpha U_{s,\pi(s)}(s') \leq \beta P^{o}_{s,\pi_{e}(s)}(s')$. From this, we can conclude that $\alpha U_{s,\pi(s)}(s') \leq \beta P^{o}_{s,\pi_{e}(s)}(s')$.
\end{remark}

Next, we show the following. 
\begin{lemma}
\label{lemma_sigma_d_}
For any  $ V_{1}, V_{2} \in \R^{|\mathcal{S}|}$,  
\begin{align*}
  \|{\sigma}_{\widehat{\U}_\pi} (V_{1}) - {\sigma}_{\widehat{\U}_\pi} (V_{2}) \|_{d} \leq \left(\frac{\beta}{\alpha}+\rho\right) \|V_{1} - V_{2}\|_{d}.
\end{align*}
\end{lemma}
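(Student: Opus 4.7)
The plan is to leverage Assumption~\ref{as:offpolicy}(i), which bounds entries of an arbitrary $U\in\U_{s,\pi(s)}$ in terms of $(\beta/\alpha)P^o_{s,\pi_e(s)}$ componentwise, and then pay an additional $\rho$ to replace $\U$ with $\widehat{\U}$. I would work state-by-state. Fix $s$ and set $a=\pi(s)$. Taking an (approximate) minimizer $u\in\widehat{\U}_{s,a}$ of the infimum defining $\sigma_{\widehat{\U}_{s,a}}(V_2)$, the standard inf-linearization inequality gives
\begin{align*}
\sigma_{\widehat{\U}_{s,a}}(V_1)-\sigma_{\widehat{\U}_{s,a}}(V_2)\leq u^\top V_1 - u^\top V_2 = u^\top(V_1-V_2).
\end{align*}
Swapping the roles of $V_1$ and $V_2$ yields the reverse direction with some possibly different element of $\widehat{\U}_{s,a}$, so in either case there exists $u\in\widehat{\U}_{s,a}$ with $|\sigma_{\widehat{\U}_{s,a}}(V_1)-\sigma_{\widehat{\U}_{s,a}}(V_2)|\leq |u^\top(V_1-V_2)|$.

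Next I would decompose $u = q + (u-q)$ for a carefully chosen $q\in\U_{s,a}$: if $u\in\U_{s,a}$, take $q=u$ (so $u-q=0$); otherwise $u\in\widehat{\U}_{s,a}\setminus\U_{s,a}$, and I would take $q$ to be any element of $\U_{s,a}$, in which case the definition of $\rho$ guarantees $\|u-q\|_d\leq\rho\,d_{\min}$. For the first piece, Remark~\ref{remark_op} yields $q\leq(\beta/\alpha)P^o_{s,\pi_e(s)}$, while applying Assumption~\ref{as:offpolicy}(i) with the nominal choice $P=P^o\in\Pp$ gives $P^o_{s,\pi(s)}\leq(\beta/\alpha)P^o_{s,\pi_e(s)}$, so that $-q\leq P^o_{s,\pi(s)}\leq (\beta/\alpha)P^o_{s,\pi_e(s)}$. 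Hence $|q(s')|\leq(\beta/\alpha)P^o_{s,\pi_e(s)}(s')$ entrywise, and therefore
\begin{align*}
|q^\top(V_1-V_2)| \leq \frac{\beta}{\alpha}\,\bigl(P^o_{\pi_e}|V_1-V_2|\bigr)(s).
\end{align*}
For the second piece, the $d$-weighted Cauchy--Schwarz inequality~\eqref{eq:cs_d} together with the bound on $\|u-q\|_d$ gives
\begin{align*}
|(u-q)^\top(V_1-V_2)| \leq \frac{\|u-q\|_d\,\|V_1-V_2\|_d}{d_{\min}} \leq \rho\,\|V_1-V_2\|_d.
\end{align*}

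Combining the two pieces yields the pointwise bound
\begin{align*}
|\sigma_{\widehat{\U}_{s,\pi(s)}}(V_1)-\sigma_{\widehat{\U}_{s,\pi(s)}}(V_2)| \leq \frac{\beta}{\alpha}\bigl(P^o_{\pi_e}|V_1-V_2|\bigr)(s) + \rho\,\|V_1-V_2\|_d.
\end{align*}
Taking $d$-weighted norms of both sides (with the scalar $\rho\,\|V_1-V_2\|_d$ becoming a constant vector), applying the triangle inequality, and invoking Lemma~\ref{lem:po} with $V=|V_1-V_2|$ (using $\||V_1-V_2|\|_d=\|V_1-V_2\|_d$ and $\|\mathbf{1}\|_d=1$ since $d$ is a probability distribution) produces the desired inequality $\|\sigma_{\widehat{\U}_\pi}(V_1)-\sigma_{\widehat{\U}_\pi}(V_2)\|_d\leq(\beta/\alpha+\rho)\|V_1-V_2\|_d$. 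The main obstacle I anticipate is the decomposition step: one must carefully extract from the definition of $\rho_{s,a}$ that \emph{any} $u\in\widehat{\U}_{s,a}\setminus\U_{s,a}$ can be paired with an \emph{arbitrary} $q\in\U_{s,a}$ while still satisfying $\|u-q\|_d\leq\rho\,d_{\min}$, and also handle the possibility that the inf defining $\sigma_{\widehat{\U}_{s,a}}$ is not attained (a routine approximate-minimizer/limiting argument). A minor but easy-to-miss point is invoking Assumption~\ref{as:offpolicy}(i) with $P=P^o$ itself in order to derive the matching lower bound $-q\leq(\beta/\alpha)P^o_{s,\pi_e(s)}$.
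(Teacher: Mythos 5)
Your proof is correct and reaches the stated bound, but it is organized differently from the paper's. The paper first passes from the difference of support functions to a single support function of the difference, $\sigma_{\widehat{\U}_{s,\pi(s)}}(V_2)-\sigma_{\widehat{\U}_{s,\pi(s)}}(V_1)\geq \sigma_{\widehat{\U}_{s,\pi(s)}}(V_2-V_1)$, then invokes Lemma~\ref{lem:sigma_d_u_uhat} to swap $\widehat{\U}$ for $\U$ at cost $\rho\|V_1-V_2\|_d$, and only then extracts an approximate minimizer $U\in\U_{s,\pi(s)}$ and applies Remark~\ref{remark_op}. You instead extract the approximate minimizer $u$ from $\widehat{\U}_{s,a}$ directly and perform the $\widehat{\U}$-to-$\U$ correction at the level of the element, writing $u=q+(u-q)$ with $q\in\U_{s,a}$ and $\|u-q\|_d\leq\rho\, d_{\text{min}}$; this is legitimate (it is exactly the mechanism inside the proof of Lemma~\ref{lem:sigma_d_u_uhat}, and the second branch of the definition of $\rho_{s,a}$ does cover an arbitrary $q\in\U_{s,a}$ paired with any $u\in\widehat{\U}_{s,a}\setminus\U_{s,a}$) and yields the same two terms. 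The more substantive divergence is in bounding the $\U$-part: the paper uses only the one-sided bound of Remark~\ref{remark_op} together with the step $U^\top(V_1-V_2)\leq U^\top|V_1-V_2|$ in \eqref{eq:sigma_u_intermediate}, which is delicate since elements of $\U_{s,a}$ have negative entries; your two-sided entrywise bound $|q(s')|\leq(\beta/\alpha)P^o_{s,\pi_e(s)}(s')$ (upper bound from Remark~\ref{remark_op}, lower bound from $-q\leq P^o_{s,\pi(s)}$ because $P^o+q$ is a valid transition kernel) is the cleaner way to justify $|q^\top(V_1-V_2)|\leq(\beta/\alpha)\bigl(P^o_{\pi_e}|V_1-V_2|\bigr)(s)$, at the mild price of additionally needing $\alpha P^o_{s,\pi(s)}(s')\leq\beta P^o_{s,\pi_e(s)}(s')$, i.e., that the nominal model itself lies in $\Pp$ (equivalently $0\in\U_{s,a}$), which the paper assumes only implicitly. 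The concluding step---triangle inequality, Lemma~\ref{lem:po}, and $\|\mathbf{1}\|_d=1$---is identical in both arguments.
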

\begin{proof}
	For any $s\in\Ss$ we have  
	\begin{align}
	& \sigma_{{\widehat{\U}_{s,\pi(s)}}} (V_2) - \sigma_{{\widehat{\U}_{s,\pi(s)}}} (V_1) = \inf_{q \in {\widehat{\U}_{s,\pi(s)}}} q^\top V_2 - \inf_{\tilde{q} \in {\widehat{\U}_{s,\pi(s)}}} \tilde{q}^\top V_1 = \inf_{q \in {\widehat{\U}_{s,\pi(s)}} } \sup_{\tilde{q} \in {\widehat{\U}_{s,\pi(s)}}} q^\top V_2 - \tilde{q}^\top V_1 \nonumber  \\
	&\geq \inf_{q \in {\widehat{\U}_{s,\pi(s)}} } q^\top (V_2 - V_1) = \sigma_{\widehat{\U}_{s,\pi(s)}} (V_2 - V_1) \stackrel{(a)}{\geq} \sigma_{{\U}_{s,\pi(s)}} (V_2 - V_1) - \rho \| V_1 - V_2 \|_d,
	\label{eq:lemma_sigma_d-1}
	\end{align}
	where $(a)$ follows from Lemma \ref{lem:sigma_d_u_uhat}. By definition, for any arbitrary $\epsilon > 0$, there exists a $U_{s, \pi(s)}  \in \mathcal{U}_{s, \pi(s)} $ such that
	\begin{align}
	\label{eq:lemma_sigma_d-2}
	U^{\top}_{s, \pi(s)} (V_{2} - V_{1}) - \epsilon \leq \sigma_{{\mathcal{U}_{s,\pi(s)}}} (V_2 - V_1).
	\end{align}
	Using \eqref{eq:lemma_sigma_d-2} and \eqref{eq:lemma_sigma_d-1},
	\begin{align}
	\alpha (\sigma_{{\widehat{\U}_{s,\pi(s)}}} (V_1) - \sigma_{{\widehat{\U}_{s,\pi(s)}}} (V_2) ) &\leq \alpha U^{\top}_{s, \pi(s)}  (V_{1} - V_{2}) + \alpha \epsilon  + \rho \alpha \| V_1 - V_2 \|_d \nonumber \\
	& \leq \alpha U^{\top}_{s, \pi(s)}  |(V_{2} - V_{1})| + \alpha \epsilon  + \rho \alpha \| V_1 - V_2 \|_d \nonumber \\
	&\stackrel{(b)}{\leq}  \beta  (P^{o}_{s,  \pi_{e}(s)})^{\top} |(V_{1} - V_{2})|  + \alpha \epsilon + \rho \alpha \| V_1 - V_2 \|_d \label{eq:sigma_u_intermediate}
	\end{align}
	where $(b)$ follows from Remark \ref{remark_op}. Since $\epsilon$ is arbitrary, we get, 
	\[\alpha (\sigma_{{\widehat{\U}_{s,\pi(s)}}} (V_1) - \sigma_{{\widehat{\U}_{s,\pi(s)}}} (V_2) ) \leq  \beta  (P^{o}_{s,  \pi_{e}(s)})^{\top} |(V_{1} - V_{2})| + \rho \alpha \| V_1 - V_2 \|_d .\] By exchanging the roles of $V_1$ and $V_2$, we get $\alpha (\sigma_{{\widehat{\U}_{s,\pi(s)}}} (V_2) - \sigma_{{\widehat{\U}_{s,\pi(s)}}} (V_1) ) \leq  \beta  (P^{o}_{s,  \pi_{e}(s)})^{\top} |(V_{1} - V_{2})| + \rho \alpha \| V_1 - V_2 \|_d $. Combining these and writing compactly in vector form, we get 
	\[\alpha |\sigma_{{\widehat{\U}_{\pi}}} (V_1) - \sigma_{{\widehat{\U}_{\pi}}} (V_2) | \leq  \beta  P^{o}_{\pi_{e}}|V_{1} - V_{2}| + \rho \alpha \| V_1 - V_2 \|_d \textbf{1},\]
	where $\textbf{1} = (1, 1, \ldots, 1)^{\top}$, an $|\Ss|$-dimensional unit vector. Since $\alpha |\sigma_{{\widehat{\U}_{\pi}}} (V_1) - \sigma_{{\widehat{\U}_{\pi}}} (V_2) | \geq 0$, by the property of the norm, we get \begin{align*}
	\alpha \|\sigma_{{\widehat{\U}_{\pi}}} (V_1) - \sigma_{{\widehat{\U}_{\pi}}} (V_2) \|_d &\leq \| \beta  P^{o}_{\pi_{e}} |V_{1} - V_{2}| + \rho \alpha \| V_1 - V_2 \|_d \textbf{1} \|_d \\&\stackrel{(c)}{\leq} \beta  \|P^{o}_{\pi_{e}}|V_{1} - V_{2}| \|_d + \rho \alpha \| V_1 - V_2 \|_d \| \textbf{1} \|_d \\ &\stackrel{(d)}{\leq}  \beta  \|V_{1} - V_{2} \|_d + \rho \alpha \| V_1 - V_2 \|_d
	\end{align*} 
	where $(c)$ follows from triangle inequality and $(d)$  from Lemma \ref{lem:po} and from the fact that and $\| \textbf{1} \|_d=1$. 
	 Dividing by $\alpha$ and rearranging, we get the desired result.
\end{proof}

\begin{proof}[\textbf{Proof of Theorem \ref{prop:approx-td-op-1}}]
	We first observe
	\begin{align}
	\|\alpha P^{o}_{\pi}  |V|  \|_{d} &\stackrel{(a)}{\leq} \|\beta P^{o}_{\pi_{e}} |V| \|_{d}  \stackrel{(b)}{\leq} \beta \|V\|_{d}, \label{eq:repeat_}
	\end{align} where $(a)$ follows from the Assumption \ref{as:offpolicy} and $(b)$ from the Lemma \ref{lem:po}.
	
	Notice that for any finite $t$ we have,  
	\begin{align}
	\| (\alpha P^{o}_{\pi})^{t}  |V|  \|_{d}  = \| \alpha P^{o}_{\pi}  (\alpha P^{o}_{\pi})^{t-1}  |V|  \|_{d}  &\stackrel{(c)}{\leq} \beta \| (\alpha P^{o}_{\pi})^{t-1}  |V|  \|_{d} 
	\end{align} 
	where $(c)$ follows from \eqref{eq:repeat_}.
	Using this repeatedly, we get,
	\begin{align}
	\label{eq:repeat_final_}
	\| (\alpha P^{o}_{\pi})^{t}  |V|  \|_{d} \leq \beta^{t} \|V\|_{d}. 
	\end{align}
	Now,
	\begin{align}
	&\|\widetilde{T}^{(\lambda)}_\pi (V_{1}) - \widetilde{T}^{(\lambda)}_\pi  (V_{2}) \|_{d} \nonumber \\
	&= \|(1-\lambda) \sum_{m=0}^\infty \lambda^{m} [\alpha \sum_{t=0}^{m} (\alpha P^{o}_{\pi})^t  ~ ({\sigma}_{\widehat{\U_\pi}} (V_{1}) - {\sigma}_{\widehat{\U_\pi}} (V_{2}) )+ (\alpha P^{o}_{\pi})^{m+1} (V_{1} - V_{2})] \|_{d} \nonumber \\
	&\leq (1-\lambda) \sum_{m=0}^\infty \lambda^{m} [\alpha \sum_{t=0}^{m}   ~ \|(\alpha P^{o}_{\pi})^{t}|{\sigma}_{\widehat{\U_\pi}} (V_{1}) - {\sigma}_{\widehat{\U_\pi}} (V_{2})| \|_{d}+ \|( \alpha P^{o}_{\pi})^{m+1} |V_{1} - V_{2}| \|_{d} ] \nonumber \\
	&\stackrel{(d)}{\leq}  (1-\lambda) \sum_{m=0}^\infty \lambda^{m} [\alpha \sum_{t=0}^{m} (\beta)^t  ~ \|{\sigma}_{\widehat{\U_\pi}} (V_{1}) - {\sigma}_{\widehat{\U_\pi}} (V_{2}) \|_{d}+ \beta^{m+1} \|V_{1} - V_{2} \|_{d} ] \nonumber \\ 
	&\stackrel{(e)}{\leq}   (1-\lambda) \sum_{m=0}^\infty \lambda^{m} [(\beta+\rho\alpha) \sum_{t=0}^{m} (\beta)^{t}  ~ \|(V_{1} - V_{2}) \|_{d}+ \beta^{m+1} \|(V_{1} - V_{2}) \|_{d}] \label{eq:for_l_g_pi} \\ 
	&= \left[\frac{(\beta+\rho\alpha)}{(1 - \beta  )}\left(1-\frac{\beta(1-\lambda) }{(1 - \beta  \lambda)}\right) +\frac{\beta(1-\lambda) }{(1 - \beta  \lambda)}\right] \|(V_{1} - V_{2})\|_{d} \nonumber \\
	&= \frac{\beta(2-\lambda) + \rho\alpha  }{(1 - \beta  \lambda)}\|(V_{1} - V_{2})\|_{d}, \label{eq:contract_coeff_calc}
	\end{align} where $(d)$ follows from \eqref{eq:repeat_final_} and $(e)$ follows from Lemma \ref{lemma_sigma_d_}. 
	
	From \citep{BerBook12}, $\Pi$ is a non-expansive operator in $\|\cdot\|_{d}$. Thus,
	\begin{align}
	    \|\Pi \widetilde{T}^{(\lambda)}_\pi V_{1} -  \Pi \widetilde{T}^{(\lambda)}_\pi V_{2} \|_{d} \leq  \|\widetilde{T}^{(\lambda)}_\pi (V_{1}) - \widetilde{T}^{(\lambda)}_\pi  (V_{2}) \|_{d} \leq c(\alpha, \beta, \rho, \lambda) ~ \|V_{1} - V_{2}\|_{d}.
	\end{align}
	where $c(\alpha, \beta, \rho, \lambda) = {(\beta(2-\lambda) + \rho\alpha ) }/{(1 - \beta  \lambda)}$. 	This concludes the proof of getting \eqref{eq:Ttilde-contraction-1}.
	
	
	For proving \eqref{eq:Ttilde-contraction-3},  first denote the operator $\widetilde{T}^{(\lambda)}_\pi $ as $\bar{T}^{(\lambda)}_\pi$ when  $\widehat{\U}_\pi = {\U}_\pi$. Now observe that 
	\begin{align}
	\| \bar{T}^{(\lambda)}_\pi (V ) - \widetilde{T}^{(\lambda)}_\pi (V ) \|_d &= \| (1-\lambda) \sum_{m=0}^\infty \lambda^{m} [ \alpha \sum_{t=0}^{m} (\alpha P^{o}_\pi)^t  ~ ({\sigma}_{{\U_\pi}} (V)-{\sigma}_{\widehat{\U}_\pi} (V)) ] \|_d \nonumber \\
	&\leq  (1-\lambda) \sum_{m=0}^\infty \lambda^{m} [ \alpha \sum_{t=0}^{m} \| (\alpha P^{o}_\pi)^t  ~ |({\sigma}_{{\U_\pi}} (V)-{\sigma}_{\widehat{\U}_\pi} (V))| \|_d ]  \nonumber \\
	&\stackrel{(f)}{\leq}  (1-\lambda) \sum_{m=0}^\infty \lambda^{m} [ \alpha \sum_{t=0}^{m} \beta^t \|{\sigma}_{{\U_\pi}} (V)-{\sigma}_{\widehat{\U}_\pi} (V) \|_d ]  \nonumber \\
	&\stackrel{(g)}{\leq} (1-\lambda) \sum_{m=0}^\infty \lambda^{m}  [\frac{(1 - \beta^{m+1})}{1-\beta}  ~ \alpha \rho \| V\|_d  ]  =  \frac{\alpha \rho \| V\|_d }{1-\beta\lambda} \label{eq:pi_1}
	\end{align} 
	where $(f)$ follows \eqref{eq:repeat_final_} and $(g)$ from Lemma \ref{lem:sigma_d_u_uhat}.

	Now, 
	\begin{align*}
	\| V_\pi - \Phi w_\pi \|_d &\leq \| V_\pi -  \Pi V_\pi  \|_d + \| \Pi V_\pi - \Phi w_\pi \|_d \\
	&\stackrel{(h)}{=} \| V_\pi -  \Pi V_\pi  \|_d + \| \Pi \bar{T}^{(\lambda)}_\pi V_\pi -  \Pi \widetilde{T}^{(\lambda)}_\pi \Phi w_\pi \|_d \\
	&\stackrel{(i)}{\leq} \| V_\pi -  \Pi V_\pi  \|_d +  \| \Pi \bar{T}^{(\lambda)}_\pi V_\pi -  \Pi \widetilde{T}^{(\lambda)}_\pi V_\pi  \|_d + \|\Pi \widetilde{T}^{(\lambda)}_\pi V_\pi -  \Pi \widetilde{T}^{(\lambda)}_\pi \Phi w_\pi \|_d \\
	&\stackrel{(j)}{\leq} \| V_\pi -  \Pi V_\pi  \|_d  + \frac{\beta \rho \| V_{\pi}\|_d}{1-\beta\lambda} + c(\alpha, \beta, \rho, \lambda) \| V_\pi - \Phi w_\pi \|_d 
	\end{align*}
	We get $(h)$ because $\bar{T}^{(\lambda)}_\pi V_\pi = V_{\pi}$ from Proposition \ref{prop:fixedpoint-1} and  $\Pi \widetilde{T}^{(\lambda)}_\pi \Phi w_\pi = \Phi w_\pi$ by the premise of the proposition, $(i)$ by triangle inequality, $(j)$ from  \eqref{eq:pi_1} and \eqref{eq:Ttilde-contraction-1}. Rearranging, we get,
	\begin{align}
	   \| V_\pi - \Phi w_\pi \|_d \leq \frac{1}{1 - c(\alpha, \beta, \rho, \lambda)} \left(\| V_\pi -  \Pi V_\pi  \|_d +  \frac{\beta \rho \| V_{\pi}\|_d}{1-\beta\lambda} \right). 
	\end{align}
	
	This completes the proof of Theorem \ref{prop:approx-td-op-1}.
\end{proof}

\subsection{Derivation of \eqref{eq:rpvi-approx-matrix-1}}

For any bounded mapping $W$, observe that \[ \sum_{t=0}^\infty(\alpha \lambda P^o_\pi)^t W =  (1-\lambda) \sum_{m=0}^\infty \lambda^m \sum_{t=0}^m (\alpha P^o_\pi)^t W,  \] yielded by exchanging summations. Using this observation with equations \eqref{eq:TLambda-1} and \eqref{eq:rpvi-approx-1} we get \eqref{eq:rpvi-approx-matrix-1}.

\subsection{Proof of Theorem \ref{thm:RLSPE-convergence-1}  }
To prove this theorem, we will use the following result from \citep{nedic2003least}. Let $\|\cdot\|$ denote the standard Euclidean norm. 
\begin{proposition}\citep[Proposition 4.1]{nedic2003least}   
	\label{thm:NB02-prop}
	Consider a sequence $\{x_{t}\}$ generated by the update equation
	\[x_{t+1} = x_{t} + \gamma_{t} (h_{t}(x_{t}) + e_{t}),\] 
	where $h_{t} : \R^{n} \rightarrow \R^{n}$, $\gamma_{t}$ is a positive deterministic stepsize, and $e_{t}$ is a random noise vector.  Let $f : \R^{n} \rightarrow \R$ be a continously differentiable function. Assume the following:\\
	(i) Function $f$ is positive, i.e.,  $f(x) \geq 0$. Also, $f$ has Lipschitz continuous gradient, i.e., there exists some scalar $L>0$ such that
	\begin{align}
	\label{eq:NB-condition0}
	\|\nabla f(x) - \nabla f(y)\| \leq L \|x-y\|, ~~\forall ~x,y \in \mathbb{R}^{n}.
	\end{align}
	(ii) Let $\mathcal{F}_{t} = \{x_{0}, x_{1}, \ldots, x_{t}\}$. There exists positive scalars $c_{1}, c_{2},$ and $c_{3}$ such that
	\begin{align}
	\label{eq:NB-condition1}
	(\nabla f(x_{t}))^{\top} \E[h_{t}(x_{t}) | \mathcal{F}_{t}] &\leq - c_{1} \| \nabla f(x_{t})\|^{2}, ~~\forall~t, \\
	\label{eq:NB-condition2}
	\|\E[e_{t}|\mathcal{F}_{t}] \| &\leq c_{2} \epsilon_{t} (1 + \| \nabla f(x_{t})\| ), ~~\forall~t,\\
	\label{eq:NB-condition3}
	\E[\| h_{t}(x_t) + e_{t}\|^{2} | \mathcal{F}_{t}] &\leq  c_{3}  (1 + \| \nabla f(x_{t})\|^{2} ),~~\forall~t,
	\end{align}
	where $\epsilon_{t}$ is a positive deterministic scalar. 
	
	(ii) The deterministic sequences $\{\gamma_{t}\}$ and $\{\epsilon_{t}\}$ satsify
	\begin{align}
	\label{eq:NB-condition4}
	\sum^{\infty}_{t=0} \gamma_{t} = \infty,~ \sum^{\infty}_{t=0} \gamma^{2}_{t} <  \infty,~ \sum^{\infty}_{t=0} \gamma_{t} \epsilon^{2}_{t} < \infty,~ \lim_{t \rightarrow \infty} \epsilon_{t} = 0.
	\end{align}
	
	Then, with probability 1:\\
	(i) The sequence $\{f(x_{t})\}$ converges.\\
	(ii) The sequence $\{\nabla f(x_{t})\}$ converges to zero.\\
	(iii) Every limit point of $\{x_{t}\}$ is a stationary point of $f$.
\end{proposition}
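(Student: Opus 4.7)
The plan is to follow the standard stochastic gradient-type analysis built around the descent lemma implied by the Lipschitz gradient of $f$, and to convert the resulting stochastic recursion into an almost-supermartingale form amenable to the Robbins--Siegmund convergence theorem. The three hypotheses \eqref{eq:NB-condition1}--\eqref{eq:NB-condition3} are tailored so that, after a Young-type splitting of the bias cross term, the ``bad'' contributions form an almost surely summable residual while a strictly negative drift of order $-\gamma_t\|\nabla f(x_t)\|^2$ survives. Once this is in place, parts $(i)$--$(iii)$ follow by extracting information from the summability of $\sum_t \gamma_t\|\nabla f(x_t)\|^2$ together with the Lipschitz continuity of $\nabla f$.

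The first step applies the descent lemma to $x_{t+1}=x_t+\gamma_t(h_t(x_t)+e_t)$, then takes conditional expectation given $\F_t$. Using \eqref{eq:NB-condition1} on the leading inner product, Cauchy--Schwarz with \eqref{eq:NB-condition2} on the bias inner product, and \eqref{eq:NB-condition3} on the quadratic term yields
\begin{align*}
\E[f(x_{t+1})\mid\F_t] &\leq f(x_t) - c_1\gamma_t\|\nabla f(x_t)\|^2 \\
&\quad + c_2\gamma_t\epsilon_t\|\nabla f(x_t)\|\,(1+\|\nabla f(x_t)\|) + \tfrac{Lc_3}{2}\gamma_t^2(1+\|\nabla f(x_t)\|^2).
\end{align*}
Splitting the cross term $c_2\gamma_t\epsilon_t\|\nabla f(x_t)\|\leq \tfrac{c_2^2\gamma_t\epsilon_t^2}{2c_1}+\tfrac{c_1\gamma_t}{2}\|\nabla f(x_t)\|^2$ by Young's inequality, and using $\gamma_t,\epsilon_t\to 0$ from \eqref{eq:NB-condition4}, the cumulative coefficient of $\|\nabla f(x_t)\|^2$ is eventually at most $3c_1/4$. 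Hence there exists a deterministic $T$ and a summable nonnegative sequence $\xi_t = O(\gamma_t\epsilon_t^2 + \gamma_t^2)$ with
\[
\E[f(x_{t+1})\mid\F_t] \leq f(x_t) - \tfrac{c_1}{4}\gamma_t\|\nabla f(x_t)\|^2 + \xi_t, \qquad t\geq T.
\]
Because $f\geq 0$ by hypothesis $(i)$, the Robbins--Siegmund almost supermartingale theorem applies and delivers conclusion $(i)$, namely that $f(x_t)$ converges almost surely, together with the by-product $\sum_t \gamma_t\|\nabla f(x_t)\|^2 < \infty$ almost surely. Combined with $\sum_t\gamma_t=\infty$ this already forces $\liminf_t \|\nabla f(x_t)\| = 0$ almost surely.

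The main obstacle is upgrading this $\liminf$ to the $\lim$ assertion $(ii)$, $\|\nabla f(x_t)\|\to 0$ a.s. I argue by contradiction on the event $\{\limsup_t\|\nabla f(x_t)\|>2\delta\}$: there must exist infinitely many index pairs $m_k<n_k$ with $\|\nabla f(x_{m_k})\|<\delta$, $\|\nabla f(x_{n_k})\|\geq 2\delta$, and $\|\nabla f(x_t)\|\geq\delta$ for $m_k\leq t\leq n_k$. The Lipschitz bound $\|\nabla f(x_{t+1})-\nabla f(x_t)\|\leq L\gamma_t\|h_t(x_t)+e_t\|$ together with the moment hypothesis \eqref{eq:NB-condition3}, applied across $[m_k,n_k]$, forces $\sum_{t=m_k}^{n_k-1}\gamma_t$ to exceed a positive constant uniformly in $k$; the measurability subtleties are handled by decomposing $\gamma_t\|h_t(x_t)+e_t\|$ into its $\F_t$-conditional mean plus a martingale increment, and using $\sum_t\gamma_t^2<\infty$ to guarantee that the martingale part is almost surely summable. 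Then
\[
\sum_k \sum_{t=m_k}^{n_k-1} \gamma_t\|\nabla f(x_t)\|^2 \geq \delta^2 \sum_k \sum_{t=m_k}^{n_k-1}\gamma_t = \infty,
\]
contradicting $\sum_t\gamma_t\|\nabla f(x_t)\|^2<\infty$ and establishing $(ii)$. Conclusion $(iii)$ is then immediate: if $x_{t_k}\to x^\ast$ along a subsequence, then by continuity of $\nabla f$ and $(ii)$, $\nabla f(x^\ast) = \lim_k \nabla f(x_{t_k}) = 0$, so every limit point is a stationary point of $f$.
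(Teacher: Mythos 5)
The paper does not actually prove this proposition; it is imported verbatim as Proposition 4.1 of \citep{nedic2003least} and used as a black box in the proof of Theorem 2, so there is no in-paper argument to compare against. Your reconstruction is essentially the standard (and essentially the original) proof of this class of results: the descent lemma plus conditions \eqref{eq:NB-condition1}--\eqref{eq:NB-condition3} give an almost-supermartingale recursion, Robbins--Siegmund yields convergence of $f(x_t)$ and $\sum_t \gamma_t\|\nabla f(x_t)\|^2<\infty$ a.s., and the upcrossing/contradiction argument upgrades $\liminf\|\nabla f(x_t)\|=0$ to a full limit. The skeleton is correct. Two details deserve tightening. First, in the crossing construction you should take $n_k$ to be the \emph{first} index after $m_k$ at which $\|\nabla f\|\geq 2\delta$, so that $\|\nabla f(x_t)\|<2\delta$ throughout $[m_k,n_k)$; this bound is what lets you control $\E[\|h_t(x_t)+e_t\|\mid\F_t]\leq\sqrt{c_3}\,(1+\|\nabla f(x_t)\|)$ on the window and conclude that $\sum_{t=m_k}^{n_k-1}\gamma_t$ is bounded below --- without it the claim does not follow from \eqref{eq:NB-condition3} alone. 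Relatedly, the conditional variances of your martingale increments are of order $\gamma_t^2(1+\|\nabla f(x_t)\|^2)$, whose summability uses the already-established $\sum_t\gamma_t\|\nabla f(x_t)\|^2<\infty$ together with $\gamma_t\to 0$, not merely $\sum_t\gamma_t^2<\infty$ as you state. Second, in the final display the lower bound $\|\nabla f(x_t)\|\geq\delta$ fails at $t=m_k$ (where the norm is below $\delta$ by construction); start that sum at $m_k+1$, which costs only the vanishing term $\gamma_{m_k}$. Neither point threatens the argument.
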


Using the above result, we now prove Theorem \ref{thm:RLSPE-convergence-1}.
\begin{proof}[\textbf{Proof of Theorem \ref{thm:RLSPE-convergence-1}}]

	We rewrite \eqref{eq:RLSPE-1} as $w_{t+1} = w_{t} + \gamma_{t} (h_{t}(w_{t}) + e_{t}),$ where,
	\begin{align*}
	h_{t}(w_{t}) &= B^{-1} (A w_{t} + b + C(w_{t})), \\
	e_{t} &=  B^{-1}_{t}(A_{t} w_{t} + b_{t} + C_{t}(w_{t})) -B^{-1} (A w_{t} + b + C(w_{t})).
	\end{align*}
	Define
	\[f(w) =  \frac{1}{2} (w - w_{\pi})^{\top} \Phi^{\top} D \Phi (w - w_{\pi}).\] 
	We now verify the conditions given in Proposition \ref{thm:NB02-prop}.
	
	\textit{Verifying \eqref{eq:NB-condition0}:}
	By definition $f(w) \geq 0.$ Also, $\nabla f(w) = \Phi^{\top} D \Phi (w - w_{\pi})$. Hence,
	\begin{align*}
	\| \nabla f(w) - \nabla f(w') \| =  \| \Phi^{\top} D \Phi (w - w') \| \leq \| \Phi^{\top} D \Phi \|_{\text{op}}  \|w - w' \|,
	\end{align*} where $\|\cdot\|_{\text{op}}$ is the operator norm corresponding to the Euclidean space. Set $L= \| \Phi^{\top} D \Phi \|_{\text{op}}$. From $(ii)$ in Assumption \ref{as:offpolicy} and $\Phi$ being full rank, we know that $\Phi^{\top} D \Phi$ is a positive definite matrix and hence $L$ is a positive scalar. This verifies \eqref{eq:NB-condition0}.

	\textit{Verifying \eqref{eq:NB-condition1}:} It is straightforward to verify that $ A w_t + b + C(w_t)  = \Phi^{\top} D (\widetilde{T}^{(\lambda)}_{\pi} \Phi w_{t} - \Phi w_{t})$ by comparing \eqref{eq:rpvi-approx-1}  - \eqref{eq:Cb-1}. Then,
	\begin{align*}
	( \nabla f(w_t))^{\top} \E[h_{t}(w_t)|\F_{t}] &= (w_t - w_{\pi})^{\top}  (Aw_t + b + C(w_t)) \\
	&= (w_t - w_{\pi})^{\top} \Phi^{\top} D (\widetilde{T}^{(\lambda)}_{\pi} \Phi w_{t} - \Phi w_{t}) < 0,
	\end{align*}
	where the last inequality is by using Lemma 9 from  \citep{tsitsiklis1997analysis}  with the fact that, from Theorem \ref{prop:approx-td-op-1}, $\Pi \widetilde{T}^{(\lambda)}_{\pi}$ is a contraction. Since $( \nabla f(w_t))^{\top} \E[h_{t}|\F_{t}]$ is strictly negative, we can find a positive scalar $c_{2}$ that satisfies  \eqref{eq:NB-condition1}. 
	
	\textit{Verifying \eqref{eq:NB-condition2}:} We can write $
	e_{t}  = \Delta^{1}_{t} w_{t} + \Delta^{2}_{t} + \Delta^{3}_{t}(w_{t}),$ where,
	\begin{align*}
	\Delta^{1}_{t} &= B^{-1}_{t} A_{t} - B^{-1}A,~ \Delta^{2}_{t} =  B^{-1}_{t} b_{t} - B^{-1} b,~ \Delta^{3}_{t}(w_{t}) = B^{-1}_{t} C_{t}(w_{t}) - B^{-1} C(w_{t}).
	\end{align*}
	From Proposition 2.1 of \citep{nedic2003least}, we have
	\begin{align}
	\label{eq:verify-c2-s4}
	\|\E[\Delta^{1}_{t}] \| \leq \frac{\bar{c}_{1}}{\sqrt{t+1}}, ~~ \|\E[\Delta^{2}_{t}] \|  \leq \frac{\bar{c}_{2}}{\sqrt{t+1}} ~~\forall~t.
	\end{align}
	So, we will now bound $\E[\Delta^{3}_{t} (w_{t}) |  \F_{t}]$. 
	\begin{align}
	&\|\E[\Delta^{3}_{t} (w_{t})|\F_t] \| = \| \E[B^{-1}_{t} C_{t}(w_{t}) - B^{-1} C(w_{t})|\F_t]\| \nonumber \\
	&=\| \E[B^{-1}_{t} C_{t}(w_{t}) - B^{-1} C_{t}(w_{t}) + B^{-1} C_{t}(w_{t}) - B^{-1} C(w_{t})|\F_t] \| \nonumber \\
	\label{eq:verify-c2-s1}
	&\leq \E[ \|(B^{-1}_{t} - B^{-1})\| ~\|C_{t}(w_{t})\|~|~\F_t] + \|B^{-1} \| ~\E[\| C_{t}(w_{t})  - C(w_{t}) \| ~|~\F_t].
	\end{align}
	First consider $\|C_{t}(w_{t}) \|$. We can bound
	\begin{align}
	\label{eq:ct-pf2}
	\| z_{\tau} \| &\leq \sum_{m=0}^{\tau} (\alpha \lambda)^{\tau-m} \|\phi(s_{m})\| \stackrel{(a)}{\leq} \bar{c}_{31} \\
	|\sigma_{\widehat{\U}_{s_{\tau}, \pi(s_{\tau})}} ( \Phi w_{t}) |  &=    | \inf_{u \in \widehat{\U}_{s_{\tau}, \pi(s_{\tau})}} u^{\top} ( \Phi w_{t})| \leq \sup_{u \in \widehat{\U}_{s_{\tau}, \pi(s_{\tau})}} |  u^{\top} ( \Phi w_{t})| \nonumber \\
	\label{eq:ct-pf3}
	& \leq \sup_{u \in \widehat{\U}_{s_{\tau}, \pi(s_{\tau})}}   \|  u\| ~ \|\Phi w_{t}\| \stackrel{(b)}{\leq} \bar{c}_{32}  \|w_{t}\|
	\end{align}
	where $\bar{c}_{31}$ and $\bar{c}_{32}$ are finite positive scalars. For $(a)$, we used the fact that $\sup_{s} \|\phi(s)\| < \infty$. For $(b)$, we used the fact that $ \sup_{u \in \widehat{\U}_{s_{\tau}, \pi(s_{\tau})}}   \|  u\|  < \infty$ since $\widehat{\U}_{s, \pi(s)}$ is a finite set for any $s\in\Ss$ and $\|\Phi w_{t}\| \leq c' \|w_{t}\|$ for some finite positive scalar $c'$. Using \eqref{eq:ct-pf2} and \eqref{eq:ct-pf3}, 
	\begin{align}
	\label{eq:ct-pf1}
	\|C_{t}(w_{t}) \| &\leq   \frac{\alpha}{(t+1)} \sum_{\tau=0}^{t} \| z_{\tau} \sigma_{\widehat{\U}_{s_{\tau}, \pi(s_{\tau})}} ( \Phi w_{t}) \|  \leq    \frac{\alpha}{(t+1)}  \sum_{\tau=0}^{t} \| z_{\tau} \| ~~|\sigma_{\widehat{\U}_{s_{\tau}, \pi(s_{\tau})}} ( \Phi w_{t}) | \leq   \bar{c}_{33} \|w_{t}\|.
	\end{align}
	From  Lemma 4.3 of \citep{nedic2003least}, we have $\E[ \|(B^{-1}_{t} - B^{-1})\|]  \leq \bar{c}_{34}/\sqrt{(t+1)}$. 
	Using this and \eqref{eq:ct-pf1}, we get, 
	\begin{align}
	\label{eq:ct-pf4}
	\E[ \|(B^{-1}_{t} - B^{-1})\| ~\|C_{t}(w_{t})\|] \leq \frac{ \bar{c}_{35}}{\sqrt{t+1}} ~ \|w_{t}\|.
	\end{align}
	For bounding $\E[\| C_{t}(w_{t})  - C(w_{t}) \|]$ in \eqref{eq:verify-c2-s1},   we define $V_t$ and $n_{t}$ as, 
	\begin{align*}
	V_t &= \frac{\alpha}{(t+1)} \sum_{m=0}^t \phi(s_{m}) \sum_{\tau=t+1}^\infty \left[ (\alpha \lambda {P^{o}_{\pi}})^{\tau-m} {\sigma}_{\widehat{\U}_{\pi}} (\Phi w_t) \right](s_{m}),~~ n_{t}(s) = \sum^{t}_{\tau = 0} \mathbb{I}\{s_{\tau} = s\}.
	\end{align*}
	Note that $n_{t}(s)$ is the number of visits to state $s$ until time $t$. Then,
	\begin{align}
	\E[C_t(w_t)|\F_t] &= \E \left[ \frac{\alpha}{(t+1)} \sum_{\tau=0}^t \sum_{m=0}^{\tau} (\alpha \lambda)^{\tau-m} \phi(s_{m}) \sigma_{\widehat{\U}_{s_{\tau}, \pi(s_{\tau})}} (\Phi w_t) | \F_t \right] \nonumber \\
	&= \E \left[ \frac{\alpha}{(t+1)}  \sum_{\tau=0}^t \sum_{m=0}^{\tau} (\alpha \lambda)^{\tau-m} \phi(s_{m}) \E \left[\sigma_{\widehat{\U}_{s_{\tau}, \pi(s_{\tau})}} (\Phi w_t) | \F_{m}\right] | \F_t \right] \nonumber \\
	&\stackrel{(a)}{=} \E \left[ \frac{\alpha}{(t+1)}  \sum_{\tau=0}^t \sum_{m=0}^{\tau} (\alpha \lambda)^{\tau-m} \phi(s_{m}) ((P^{o}_{\pi})^{\tau - m} \sigma_{\widehat{\U}_{\pi}} (\Phi w_t))(s_{m}) | \F_t \right]\nonumber  \\
	&\stackrel{(b)}{=} \E \left[ \frac{\alpha}{(t+1)}   \sum_{m=0}^{t} \phi(s_{m})  \sum_{\tau=m}^{t} ((\alpha \lambda P^{o}_{\pi})^{\tau - m} \sigma_{\widehat{\U}_{\pi}} (\Phi w_t))(s_{m}) | \F_t \right] \nonumber \\
	&= \E \left[ \frac{\alpha}{(t+1)}   \sum_{m=0}^{t} \phi(s_{m})  \sum_{\tau=m}^{\infty} ((\alpha \lambda P^{o}_{\pi})^{\tau - m} \sigma_{\widehat{\U}_{\pi}} (\Phi w_t))(s_{m}) | \F_t \right]  -  \E[V_{t} | \F_{t}] \nonumber\\
	\label{eq:ct_1}
	&\stackrel{(c)}{=} \E \left[ \frac{\alpha}{(t+1)}  \sum_{s \in \mathcal{S}} n_{t}(s) \phi(s)  \sum_{i=0}^{\infty} ((\alpha \lambda P^{o}_{\pi})^{i} \sigma_{\widehat{\U}_{\pi}} (\Phi w_t))(s) | \F_t \right]  -  \E[V_{t} | \F_{t}] ,
	\end{align} 
	where $(a)$ follows since the transition probability matrix governing along policy $\pi$ is $P^{o}_\pi$, $(b)$ by exchanging the order of summation, and $(c)$ by using the definition of $n_{t}(s)$.
	Note that,
	\begin{align}
	\E[C(w_t)|\F_t] &= \E[\alpha  \Phi^{\top} D  \sum^{\infty}_{i=0} (\alpha \lambda P^{o}_\pi)^{i}  {\sigma}_{\widehat{\U}_{\pi}} (\Phi w_t)|\F_t]  \nonumber\\&= \E [ \alpha  \sum_{s \in \mathcal{S}} d_{s} \phi(s)  \sum_{i=0}^{\infty} ((\alpha \lambda P^{o}_{\pi})^{i} \sigma_{\widehat{\U}_{\pi}} (\Phi w_t))(s) | \F_t ]. \label{eq:c_1}
	\end{align}
	So, using \eqref{eq:c_1} and \eqref{eq:ct_1},
	\begin{align}
	&\left\| \E\left[\left(C_t(w_t) - C(w_t)\right) |\F_t\right] \right\| \nonumber \\
	&= \left\| \alpha \sum_{s \in \mathcal{S}} \left(\frac{\E[n_{t}(s)]}{t+1} - d_{s}\right) \phi(s)  \sum_{i=0}^{\infty} ((\alpha \lambda P^{o}_{\pi})^{i} \sigma_{\widehat{\U}_{\pi}} (\Phi w_t))(s) - \E[V_t|\F_t] \right\|  \nonumber \\
	\label{eq:ct-pf5}
	&\stackrel{(d)}{\leq} \bar{c}_{36}  \sum_{s \in \mathcal{S}}  \left| \left(\frac{\E[n_{t}(s)]}{t+1} - d_{s}\right) \right|  \|w_{t}\| + \E[\|V_{t}\|~|\F_t]   \stackrel{(e)}{\leq}  \frac{\bar{c}_{37}}{{t+1}}  \|w_{t}\| +  \frac{\bar{c}_{38}}{{t+1}} \|w_{t}\|
	\end{align} 
	For getting (d), note that $\|\phi(s)  \sum_{i=0}^{\infty} ((\alpha \lambda P^{o}_{\pi})^{i} \sigma_{\widehat{\U}_{\pi}} (\Phi w_t))(s)\| \leq c' \| w_{t}\|$ for some positive constant $c'$, using \eqref{eq:ct-pf3} and the fact that the  summation is bounded due to the discounted factor $(\alpha \lambda)$. For getting $(e)$, we use the result from Lemma 4.2 \citep{nedic2003least} that $\left| \frac{\E[n_{t}(s)]}{t+1} - d_{s} \right|  \leq \frac{c'}{t+1}$ for some positive number $c'$. Also, it is straightforward to show that $\E[\|V_{t}\|~|\F_t] \leq c' \frac{\| w_{t}\|}{t+1}$ for some positive number $c'$ using \eqref{eq:ct-pf3} and the fact that the  summation is bounded due to the discounted factor $(\alpha \lambda)$.

	Using \eqref{eq:ct-pf4} and \eqref{eq:ct-pf5}  in \eqref{eq:verify-c2-s1}, we get
	\begin{align}
	\label{eq:verify-c2-s2}
	\|\E[\Delta^{3}_{t} (w_{t})] \| \leq \frac{\bar{c}_{3}}{\sqrt{t+1}} \|w_{t}\|.
	\end{align} 
	Notice that
	\begin{align}
	\label{eq:ct-pf6}
	\| w_{t} \| \leq \| w_{t} - w_{\pi} \| + \| w_{\pi}\| &\leq \|(\Phi^{\top} D \Phi)^{-1}\| ~ \|(\Phi^{\top} D \Phi) (w_{t} - w_{\pi}) \| + \| w_{\pi}\|  \nonumber \\
	&\leq \bar{c}_{39} (1 + \|\nabla f(w_{t}) \|).
	\end{align} 
	Now, 
	\begin{align}
	\|\E[e_{t} |  \F_{t}] \|  &= \|\E[\Delta^{1}_{t} ] \| ~ \|w_{t}\|  + \| \E[\Delta^{2}_{t} ] \| +  \| \E[\Delta^{3}_{t} (w_{t}) |  \F_{t}]  \| \nonumber \\
	\label{eq:verify-c2-s6}
	&\stackrel{(f)}{\leq} \frac{\bar{c}_{1}}{\sqrt{t+1}}   \|w_{t}\| + \frac{\bar{c}_{2}}{\sqrt{t+1}} + \frac{\bar{c}_{3}}{\sqrt{t+1}}  \|w_{t}\| \stackrel{(g)}{\leq}  \frac{c_{2}}{\sqrt{t+1}}  (1 + \|\nabla f(w_{t}) \|)
	\end{align}
	where $(f)$ is by using \eqref{eq:verify-c2-s4} and \eqref{eq:verify-c2-s2} and $(g)$ is  by using \eqref{eq:ct-pf6}. This completes the verification of the condition \eqref{eq:NB-condition2}.

	\textit{Verifying \eqref{eq:NB-condition3}:} 
	From definition, we have $h_t(w_t) + e_t = B_t^{-1} (A_t w_t + b_t + C_t(w_t)),$ for all $t.$ Using similar steps as before, it is straightforward to show that $\|B_{t}\| \leq c_{41}, |A_{t}| \leq c_{42}, |b_{t}| \leq c_{43}$ for some positive scalars $c_{41}, c_{42}, c_{43}$. From \eqref{eq:ct-pf1} we have $\|c_{t}(w_{t}) \| \leq \bar{c}_{33} \|w_{t}\|$. Combining these, we get
	\begin{align}
	    \|h_t(w_t) + e_t \| \leq c_{44} (1 + \| w_{t}\|).
	\end{align}
	Now, from \eqref{eq:ct-pf6} and \eqref{eq:NB-condition0}, $ \| h_t(w_t) + e_t  \|^2 \leq c_{45} (1+\| \nabla f( w_t)\|^2)$ for all $t .$ So, finally we have, \[ \E[\| h_t(w_t) + e_t  \|^2|\F_t] = \E[\| h_t(w_t) + e_t  \|^2|w_t]  \leq c_3 (1+\| \nabla f( w_t)\|^2) ~~\forall ~t , \] showing that \eqref{eq:NB-condition3} is satisfied.
	
	\textit{Verifying \eqref{eq:NB-condition4}:}  From \eqref{eq:verify-c2-s6}, $\epsilon_{t} = 1/\sqrt{(t+1)}$. So, this condition is satisfied. 
`
	So, all the assumption of Proposition \ref{thm:NB02-prop} are satisfied. Hence the result of that Proposition is true. In particular, $\nabla f(w_{t}) = \Phi^{\top} D \Phi (w_{t} - w_{\pi})$ converges to $0$. Since $\Phi^{\top} D \Phi$ is positive definite, this implies that $w_{t} \rightarrow w_{\pi}$.
\end{proof}



\section{Proof of the Results in Section \ref{sec:PI}} \label{sec:PI_appendix}
Let $\bar{V}_{k} = \Phi w_{\pi_{k}}$ and $V^{*}$ be the optimal robust value function. Define
\begin{align}
e_{k} = V_{\pi_{k}} - \bar{V}_{k},~~ l_k = V^* - V_{\pi_k},~~  g_k = V_{\pi_{k+1}} - V_{\pi_k}.
\end{align} Interpretations of these expressions: Since the robust value function in the $k^{\text{th}}$ iteration $\bar{V}_{k}$ is used as a surrogate for the robust value function $V_{\pi_{k}}$, $e_k$ quantifies the \emph{approximation error}. $g_k$ signifies the \emph{gain} of value functions between iterations $k$ and $k+1$. Finally, $l_k$ encapsulates the \emph{loss} in the value function because of using policy $\pi_k$ instead of the optimal policy.

Let $|x|$ denote element-wise absolute values of vector $x\in\R^{|\Ss|}$. We first prove the following result. This parallels to the  result for the nonrobust setting in  \citep{munos2003error}. 
\begin{lemma}\label{lem:l_g}
	We have
	\begin{align}
	|l_{k+1}| &\leq c(\alpha,\beta,0,\lambda)  \bar{H}_* (|l_k| + |e_k|) + c(\alpha,\beta,0,\lambda)  \bar{H}_k (|g_k| + |e_k|)~~\text{and} \label{eq:l_k}\\
	|g_k| &\leq  c(\alpha,\beta,0,\lambda) (I-c(\alpha,\beta,0,\lambda)  \bar{H}_{k+1})^{-1} ( \bar{H}_{k+1} +  \bar{H}_{k}) |e_k| \label{eq:g_k}
	\end{align}
	where the stochastic matrices $\bar{H}_*, \bar{H}_{k},$ and $\bar{H}_{k+1}$ are defined in \eqref{eq:H-matrices1}-\eqref{eq:H-matrices2}.
\end{lemma}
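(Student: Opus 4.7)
The plan is to mirror the approximate-policy-iteration analysis of \citep{munos2003error}, while tracking the extra factor that appears in the robust setting because of the nonlinearity $\sigma_{\mathcal{U}_\pi}$. Since $\rho = 0$, Proposition \ref{prop:fixedpoint-1} tells us that $V_{\pi_k} = \widetilde{T}^{(\lambda)}_{\pi_k} V_{\pi_k}$ for every $k$, and similarly $V^{*} = V_{\pi^{*}} = \widetilde{T}^{(\lambda)}_{\pi^{*}} V^{*}$, where $\pi^{*}$ is the optimal robust policy. Working with $\widetilde{T}^{(\lambda)}_\pi$ rather than $T_\pi$ is what will eventually produce the $\lambda$-dependent constant $c(\alpha,\beta,0,\lambda)$ claimed in the lemma, since that constant is precisely the elementwise contraction modulus coming out of the telescoping computation around \eqref{eq:contract_coeff_calc}.

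For \eqref{eq:l_k}, I would start by writing
\begin{align*}
l_{k+1} = \widetilde{T}^{(\lambda)}_{\pi^{*}} V^{*} - \widetilde{T}^{(\lambda)}_{\pi_{k+1}} V_{\pi_{k+1}}
\end{align*}
and splitting it into three pieces by adding and subtracting $\widetilde{T}^{(\lambda)}_{\pi^{*}} \bar V_k$ and $\widetilde{T}^{(\lambda)}_{\pi_{k+1}} \bar V_k$: (a) $\widetilde{T}^{(\lambda)}_{\pi^{*}} V^{*} - \widetilde{T}^{(\lambda)}_{\pi^{*}} \bar V_k$; (b) the greedy-advantage $\widetilde{T}^{(\lambda)}_{\pi^{*}} \bar V_k - \widetilde{T}^{(\lambda)}_{\pi_{k+1}} \bar V_k$; and (c) $\widetilde{T}^{(\lambda)}_{\pi_{k+1}} \bar V_k - \widetilde{T}^{(\lambda)}_{\pi_{k+1}} V_{\pi_{k+1}}$. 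Piece (b) should be non-positive once the single-step greedy property $T_{\pi_{k+1}} \bar V_k \geq T_\pi \bar V_k$ is lifted to the multi-step operator via monotonicity of $T_\pi$ inside the expansion $\widetilde{T}^{(\lambda)}_\pi = (1-\lambda)\sum_m \lambda^m T^{m+1}_\pi$. Pieces (a) and (c) are each of the form $|\widetilde{T}^{(\lambda)}_\pi V_1 - \widetilde{T}^{(\lambda)}_\pi V_2|$, which I would bound elementwise by repeating the chain Remark \ref{remark_op} $\to$ Lemma \ref{lem:po} $\to$ Lemma \ref{lemma_sigma_d_} from the proof of Theorem \ref{prop:approx-td-op-1}, obtaining an inequality of the form $c(\alpha,\beta,0,\lambda)\, H_\pi |V_1 - V_2|$, where $H_\pi$ is the stochastic matrix obtained by normalizing the $\lambda$- and $\beta$-weighted convex combination of powers $(P^o_{\pi_{\exp}(\pi)})^t$ appearing around \eqref{eq:contract_coeff_calc} by $c(\alpha,\beta,0,\lambda)$. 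Identifying $\bar H_{*}$ with $H_{\pi^{*}}$ and $\bar H_{k}$ with $H_{\pi_{k+1}}$, and applying $|V^{*} - \bar V_k| \leq |l_k| + |e_k|$ and $|\bar V_k - V_{\pi_{k+1}}| \leq |e_k| + |g_k|$, then produces \eqref{eq:l_k}.

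For \eqref{eq:g_k} I would repeat the same template with $(\pi^{*}, V^{*})$ replaced by $(\pi_k, V_{\pi_k})$: writing $g_k = \widetilde{T}^{(\lambda)}_{\pi_{k+1}} V_{\pi_{k+1}} - \widetilde{T}^{(\lambda)}_{\pi_k} V_{\pi_k}$, inserting $\bar V_k$ on both sides, and bounding the two outer pieces exactly as above, the middle piece is again non-positive by the lifted greedy property, and one arrives at the implicit inequality $|g_k| \leq c\,\bar H_{k+1} (|g_k| + |e_k|) + c\,\bar H_k\, |e_k|$, with $\bar H_{k+1} = H_{\pi_{k+1}}$. Under the hypothesis $c(\alpha,\beta,0,\lambda) < 1$, the matrix $c\,\bar H_{k+1}$ has spectral radius strictly below one (since $\bar H_{k+1}$ is stochastic), so $I - c\,\bar H_{k+1}$ is invertible with an entrywise non-negative Neumann-series inverse; left-multiplying by this inverse yields \eqref{eq:g_k}. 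The main obstacle I anticipate is the greedy-lifting step: the pointwise single-step greediness $T_{\pi_{k+1}} \bar V_k \geq T_\pi \bar V_k$ has to propagate through the $m$-fold composition $T^{m+1}_\pi$, and inside each copy of $T_\pi$ the worst-case selection inside $\sigma_{\mathcal{U}_\pi}$ depends on the argument; I plan to handle this by induction on $m$ using only the order-preservation $V_1 \leq V_2 \Rightarrow T_\pi V_1 \leq T_\pi V_2$ together with the pointwise optimality of $\pi_{k+1}(s)$ at $\bar V_k$, which is exactly what the greedy update in Algorithm \ref{algo-lspI} gives.
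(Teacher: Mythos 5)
Your overall route is the same as the paper's: insert intermediate operator evaluations into $l_{k+1}$ and $g_k$, discard the greedy term with the appropriate sign, bound the remaining differences elementwise by $c(\alpha,\beta,0,\lambda)$ times a stochastic matrix using the elementwise (vector) analogue of the contraction chain from Theorem~\ref{prop:approx-td-op-1} (this is exactly the paper's inequality \eqref{eq:l_g_op}, which rests on \eqref{eq:sigma_u_intermediate} and Assumption~\ref{as:offpolicy}(i) rather than on the norm bound of Lemma~\ref{lem:po}), and then invert $I - c(\alpha,\beta,0,\lambda)\bar H_{k+1}$ by a Neumann series. Your three-term splittings differ from the paper's five- and four-term ones only in where the triangle inequalities $|V^*-\bar V_k|\le |l_k|+|e_k|$ and $|\bar V_k - V_{\pi_{k+1}}|\le |e_k|+|g_k|$ are applied, and your Neumann-series justification of invertibility is more explicit than the paper's.

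The one step that would fail as you describe it is the greedy-lifting argument. You identify $\widetilde T^{(\lambda)}_\pi$ with the composition form $(1-\lambda)\sum_m\lambda^m T^{m+1}_\pi$; that is the exact operator ${T}^{(\lambda)}_\pi$ of \eqref{eq:TLambda-Ber-1}, not the approximate operator \eqref{eq:TLambda-1} that this lemma (and Proposition~\ref{prop:fixedpoint-1}, which supplies the fixed-point identities you invoke) is actually about --- the paper stresses that the two differ even when $\widehat{\U}=\U$. More importantly, the induction you propose for the composition form does not close: from $T_{\pi_{k+1}}\bar V_k\ge T_\pi\bar V_k$ and monotonicity you get $T^2_{\pi_{k+1}}\bar V_k\ge T_{\pi_{k+1}}(T_\pi\bar V_k)$, but the further step $T_{\pi_{k+1}}(T_\pi\bar V_k)\ge T_\pi(T_\pi\bar V_k)$ would require $\pi_{k+1}$ to be greedy at $T_\pi\bar V_k$, which is not given; single-step greediness at $\bar V_k$ does not propagate through powers of $T_\pi$. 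For the operator \eqref{eq:TLambda-1} the needed dominance $\widetilde T^{(\lambda)}_{\pi^*}\bar V_k\le\widetilde T^{(\lambda)}_{\pi_{k+1}}\bar V_k$ concerns the $(\alpha\lambda P^o_\pi)^t$-weighted sums and is likewise not an immediate consequence of $\pi_{k+1}(s)=\argmax_a\bar Q_{\pi_k}(s,a)$; the paper asserts it without further argument, so you are not alone in leaving it unproved, but your specific mechanism for closing it is aimed at the wrong operator and would not work even for that one. A smaller slip: in the $g_k$ decomposition the middle term $\widetilde T^{(\lambda)}_{\pi_{k+1}}\bar V_k-\widetilde T^{(\lambda)}_{\pi_k}\bar V_k$ is non-negative (not non-positive), which is what lets you lower-bound $g_k$ and hence bound $-g_k$; the bound on $+g_k$ needs the symmetric decomposition, as the paper carries out, before the implicit inequality on $|g_k|$ can be asserted.
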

\begin{proof}
As before,  denote the operator $\widetilde{T}^{(\lambda)}_\pi $ as $\bar{T}^{(\lambda)}_\pi$ when  $\widehat{\U}_\pi = {\U}_\pi$. 	Now, similar to \eqref{eq:for_l_g_pi}, for any policy $\pi$ and $V_1,V_2$, we get that  	\begin{align}
	&|\bar{T}^{(\lambda)}_\pi (V_{1}) - \bar{T}^{(\lambda)}_\pi  (V_{2}) | \nonumber \\
	&= |(1-\lambda) \sum_{m=0}^\infty \lambda^{m} [\alpha \sum_{t=0}^{m} (\alpha P^{o}_{\pi})^t  ~ ({\sigma}_{{\U_\pi}} (V_{1}) - {\sigma}_{{\U_\pi}} (V_{2}) )+ (\alpha P^{o}_{\pi})^{m+1} (V_{1} - V_{2})] | \nonumber \\
	&\leq (1-\lambda) \sum_{m=0}^\infty \lambda^{m} [\alpha \sum_{t=0}^{m}   ~ |(\alpha P^{o}_{\pi})^{t}|{\sigma}_{{\U_\pi}} (V_{1}) - {\sigma}_{{\U_\pi}} (V_{2})| ~|+ |( \alpha P^{o}_{\pi})^{m+1} |V_{1} - V_{2}| | ] \nonumber \\
	&\stackrel{(a)}{\leq} (1-\lambda) \sum_{m=0}^\infty \lambda^{m} [\alpha \sum_{t=0}^{m}   ~ (\beta P^{o}_{\pi_{e}})^{t}|{\sigma}_{{\U_\pi}} (V_{1}) - {\sigma}_{{\U_\pi}} (V_{2})| + |( \beta P^{o}_{\pi_{e}})^{m+1} |V_{1} - V_{2}| | ] \nonumber \\
	&\stackrel{(b)}{\leq} (1-\lambda) \sum_{m=0}^\infty \lambda^{m} [\beta \sum_{t=0}^{m}   ~ (\beta P^{o}_{\pi_{e}})^{t}|V_{1} - V_{2}| + |( \beta P^{o}_{\pi_{e}})^{m+1} |V_{1} - V_{2}| | ] \label{eq:l_g_op}
	\end{align} where $(a)$ follows from Assumption \ref{as:offpolicy} and $\pi_{e}$ (dependent on $\pi$) being the exploration policy. $(b)$ follows from \eqref{eq:sigma_u_intermediate} in Lemma \ref{lemma_sigma_d_}.
	
	Recall that the optimal robust value function $V^{*}$ and the optimal robust policy $\pi^{*}$ satisfy the equation $\bar{T}^{(\lambda)}_{\pi^*} V^*  = V^{*}$. Using this, 
	\begin{align}
		l_{k+1} &= V^{*} - V_{\pi_{k+1}} = \bar{T}^{(\lambda)}_{\pi^*} V^* - \bar{T}^{(\lambda)}_{\pi_{k+1}} V_{\pi_{k+1}} \nonumber\\
		&= (\bar{T}^{(\lambda)}_{\pi^*} V^* - \bar{T}^{(\lambda)}_{\pi^*} V_{\pi_k} ) + (\bar{T}^{(\lambda)}_{\pi^*} V_{\pi_k}  - \bar{T}^{(\lambda)}_{\pi^*} \bar{V}_k )+ (\bar{T}^{(\lambda)}_{\pi^*} \bar{V}_k  - \bar{T}^{(\lambda)}_{\pi_{k+1}} \bar{V}_k) \nonumber \\
		&\hspace{2cm}+ (\bar{T}^{(\lambda)}_{\pi_{k+1}} \bar{V}_k - \bar{T}^{(\lambda)}_{\pi_{k+1}} V_{\pi_k} ) + (\bar{T}^{(\lambda)}_{\pi_{k+1}} V_{\pi_k}  - \bar{T}^{(\lambda)}_{\pi_{k+1}} V_{\pi_{k+1}} )\nonumber \\
		&\stackrel{(c)}{\leq} (\bar{T}^{(\lambda)}_{\pi^*} V^* - \bar{T}^{(\lambda)}_{\pi^*} V_{\pi_k} ) + (\bar{T}^{(\lambda)}_{\pi^*} V_{\pi_k}  - \bar{T}^{(\lambda)}_{\pi^*} \bar{V}_k )\nonumber \\
		&\hspace{2cm}+ (\bar{T}^{(\lambda)}_{\pi_{k+1}} \bar{V}_k - \bar{T}^{(\lambda)}_{\pi_{k+1}} V_{\pi_k} ) + (\bar{T}^{(\lambda)}_{\pi_{k+1}} V_{\pi_k}  - \bar{T}^{(\lambda)}_{\pi_{k+1}} V_{\pi_{k+1}} )\nonumber \\
	&\stackrel{(d)}{\leq} (1-\lambda) \sum_{m=0}^\infty \lambda^m [ \beta \sum_{t=0}^{m} (\beta P^o_{\pi^*})^t (|l_k| + |e_k|) + (\beta P^o_{\pi^*})^{m+1} (|l_k| + |e_k|)] \nonumber \\&\hspace{1cm}+ (1-\lambda) \sum_{m=0}^\infty \lambda^m [ \beta \sum_{t=0}^{m} (\beta P^o_{\pi_{k+1}})^t (|g_k| + |e_k|) + (\beta P^o_{\pi_{k+1}})^{m+1} (|g_k| + |e_k|)] \nonumber \\&\stackrel{(e)}{=} c(\alpha,\beta,0,\lambda)  \bar{H}_* (|l_k| + |e_k|) + c(\alpha,\beta,0,\lambda)  \bar{H}_{k+1} (|g_k| + |e_k|) \label{eq:l_k_1}
	\end{align} 
	Here $(c)$ follows because $\pi_{k+1}$ is the greedy policy w.r.t. $\bar{V}_k $ and hence $\bar{T}^{(\lambda)}_{\pi^*} \bar{V}_k  - \bar{T}^{(\lambda)}_{\pi_{k+1}} \bar{V}_k \leq 0$. $(d)$ follows from \eqref{eq:l_g_op} noting $(i)$ in Assumption \ref{as:policy_iteration_req}. Finally, $(e)$ follows by taking 
	\begin{align}\label{eq:H-matrices1}
	\bar{H}_* &= \frac{1}{c(\alpha,\beta,0,\lambda) } (1-\lambda) \sum_{m=0}^\infty \lambda^m [\beta\sum_{t=0}^{m} (\beta P^o_{\pi^*})^t  + (\beta P^o_{\pi^*})^{m+1}],~\text{and}\\\label{eq:H-matrices2}
	\bar{H}_{j} &= \frac{1}{c(\alpha,\beta,0,\lambda) } (1-\lambda) \sum_{m=0}^\infty \lambda^m [\beta\sum_{t=0}^{m} (\beta P^o_{\pi_{j}})^t  + (\beta P^o_{\pi_{j}})^{m+1}],~~\text{for}~j \geq 1.
	\end{align} 
	Note that, matrices $\bar{H}_*$ and $\bar{H}_{j}$ are stochastic matrices. This follows easily by verifying $\bar{H}_* \textbf{1} = \textbf{1} $, $\bar{H}_j \textbf{1} = \textbf{1} $  using algebra analysis as in \eqref{eq:contract_coeff_calc}. 
	
	The same argument can be repeated to get
	\begin{align*}
	&-l_{k+1} {\leq} c(\alpha,\beta,0,\lambda)  \bar{H}_* (|l_k| + |e_k|) + c(\alpha,\beta,0,\lambda)  \bar{H}_{k+1} (|g_k| + |e_k|).
	\end{align*} 
	Combining, we  get 
	\[ |l_{k+1}| \leq c(\alpha,\beta,0,\lambda)  \bar{H}_* (|l_k| + |e_k|) + c(\alpha,\beta,0,\lambda)  \bar{H}_{k+1} (|g_k| + |e_k|). \]
	
	Now, \begin{align*}
	&g_k = \bar{T}^{(\lambda)}_{\pi_{k+1}} V_{\pi_{k+1}} - \bar{T}^{(\lambda)}_{\pi_{k}}V_{\pi_k} \\
	&= \bar{T}^{(\lambda)}_{\pi_{k+1}} V_{\pi_{k+1}} - \bar{T}^{(\lambda)}_{\pi_{k+1}} V_{\pi_{k}} + \bar{T}^{(\lambda)}_{\pi_{k+1}}V_{\pi_k}  - \bar{T}^{(\lambda)}_{\pi_{k+1}} \bar{V}_{k} +( \bar{T}^{(\lambda)}_{\pi_{k+1}}\bar{V}_k - \bar{T}^{(\lambda)}_{\pi_{k}} \bar{V}_{k} )+ \bar{T}^{(\lambda)}_{\pi_{k}}\bar{V}_k  - \bar{T}^{(\lambda)}_{\pi_{k}}V_{\pi_k} \\
	&\geq \bar{T}^{(\lambda)}_{\pi_{k+1}} V_{\pi_{k+1}} - \bar{T}^{(\lambda)}_{\pi_{k+1}} V_{\pi_{k}} + \bar{T}^{(\lambda)}_{\pi_{k+1}}V_{\pi_k}  - \bar{T}^{(\lambda)}_{\pi_{k+1}} \bar{V}_{{k}} + \bar{T}^{(\lambda)}_{\pi_{k}}\bar{V}_k  - \bar{T}^{(\lambda)}_{\pi_{k}}V_{\pi_k}
	\end{align*} 
	where the last inequality follows because $\pi_{k+1}$ is the greedy policy w.r.t. $\bar{V}_k$ and hence $( \bar{T}^{(\lambda)}_{\pi_{k+1}}\bar{V}_k - \bar{T}^{(\lambda)}_{\pi_{k}} \bar{V}_{{k}} ) \geq 0$. From \eqref{eq:l_g_op}, noting $(i)$ in Assumption \ref{as:policy_iteration_req},  we have
	\begin{align*}
	-g_k &\leq(1-\lambda) \sum_{m=0}^\infty \lambda^m [ \beta \sum_{t=0}^{m} (\beta P^o_{\pi_{k+1}})^t (|g_k| + |e_k|) + (\beta P^o_{\pi_{k+1}})^{m+1} ( |g_k| + |e_k|)] \\
	&\hspace{1cm} + (1-\lambda) \sum_{m=0}^\infty \lambda^m [ \beta \sum_{t=0}^{m} (\beta P^o_{\pi_{k}})^t (|e_k|) + (\beta P^o_{\pi_{k}})^{m+1} ( |e_k|)] \\
	&{\leq} c(\alpha,\beta,0,\lambda)  \bar{H}_{k+1} ( |g_k| + |e_k|) + c(\alpha,\beta,0,\lambda)  \bar{H}_k ( |e_k|).
	\end{align*}
	Repeating the above argument for $-g_k =  \bar{T}^{(\lambda)}_{\pi_k} V_{\pi_k} - \bar{T}^{(\lambda)}_{\pi_{k+1}} V_{\pi_{k+1}}$, we get \[ |g_k| \leq  c(\alpha,\beta,0,\lambda)  \bar{H}_{k+1} ( |g_k| + |e_k|) + c(\alpha,\beta,0,\lambda)  \bar{H}_k ( |e_k|)  . \] So, $ |g_k| \leq  c(\alpha,\beta,0,\lambda) (I-c(\alpha,\beta,0,\lambda)  \bar{H}_{k+1})^{-1} ( \bar{H}_{k+1} +  \bar{H}_{k}) |e_k|   .$ Thus, proving the lemma.
\end{proof}

\begin{proof}[{\bf Proof of Theorem \ref{thm:RLSPI}}]
	From the Lemma \ref{lem:l_g}, taking $\limsup$ on both sides of \eqref{eq:l_k} we have \begin{align}
	&\limsup_{k\to\infty} |l_{k}| \leq \limsup_{k\to\infty}~c(\alpha,\beta,0,\lambda) (I-c(\alpha,\beta,0,\lambda)  \bar{H}_* )^{-1}( \bar{H}_*  +\bar{H}_k) |e_k| \nonumber \\&\hspace{3cm}+ c(\alpha,\beta,0,\lambda) (I-c(\alpha,\beta,0,\lambda)  \bar{H}_* )^{-1} \bar{H}_k |g_k| \nonumber \\
	&\stackrel{(a)}{\leq}  \limsup_{k\to\infty}~c(\alpha,\beta,0,\lambda) (I-c(\alpha,\beta,0,\lambda)  \bar{H}_* )^{-1}( \bar{H}_*  +\bar{H}_k) |e_k| \nonumber \\&\hspace{1cm}+ c^2(\alpha,\beta,0,\lambda) (I-c(\alpha,\beta,0,\lambda)  \bar{H}_* )^{-1} \bar{H}_k (I-c(\alpha,\beta,0,\lambda)  \bar{H}_{k+1})^{-1} ( \bar{H}_{k+1} +  \bar{H}_{k}) |e_k| \nonumber  \\
	&\stackrel{(b)}{=} \frac{2c(\alpha,\beta,0,\lambda) }{(1-c(\alpha,\beta,0,\lambda) )^2}  \limsup_{k\to\infty}~ H_k |e_k| \label{eq:l_e}
	\end{align} 
	where $(a)$ follows from \eqref{eq:g_k} in Lemma \ref{lem:l_g}. $(b)$ follows by taking
	\begin{align}
	H_k &= (1-c(\alpha,\beta,0,\lambda) )^2(I-c(\alpha,\beta,0,\lambda)  \bar{H}_* )^{-1} \Bigg( \frac{( \bar{H}_*  +\bar{H}_k)}{2} \Bigg. \nonumber \\
	&\hspace{3cm} \Bigg.+ c(\alpha,\beta,0,\lambda)  \bar{H}_k (I-c(\alpha,\beta,0,\lambda)  \bar{H}_{k+1})^{-1} \frac{( \bar{H}_{k+1} +  \bar{H}_{k})}{2} \Bigg).
	\end{align} 
	Notice that $H_k$ is a stochastic matrix. To see this, we know that, if $P_i,~i=\{1,2,3,4\}$ are stochastic matrices and $c<1$, then $P_1 P_2 P_3 P_4$, $(P_1+P_2)/2$, and $(1-c) (I- cP_1)^{-1}$ are valid stochastic matrices as well. Now, it is easy to verify that $H_k \textbf{1} = \textbf{1}$. Then, $\mu_k=\mu H_k$ is a valid probability distribution.
	
	Let $x^2$ denote element-wise squares of vector $x\in\R^{|\Ss|}$ and also let $\|x\|^2_\mu = \int_{s\in\Ss} |x(s)|^2 d\mu(s) = \mu |x|^2.$ Now, from \eqref{eq:l_e} we have \begin{align*}
	\limsup_{k\to\infty} \|l_{k}\|^2_\mu &= \limsup_{k\to\infty} \mu |l_{k}|^2 \leq  \frac{4c^2(\alpha,\beta,0,\lambda) }{(1-c(\alpha,\beta,0,\lambda) )^4} \limsup_{k\to\infty}~  \mu [H_k |e_k|]^2\\
	&\stackrel{(c)}{\leq}  \frac{4c^2(\alpha,\beta,0,\lambda) }{(1-c(\alpha,\beta,0,\lambda) )^4} \limsup_{k\to\infty}~  \mu H_k |e_k|^2\\
	&=  \frac{4c^2(\alpha,\beta,0,\lambda) }{(1-c(\alpha,\beta,0,\lambda) )^4} \limsup_{k\to\infty}~    \mu_k |e_k|^2 =  \frac{4c^2(\alpha,\beta,0,\lambda) }{(1-c(\alpha,\beta,0,\lambda) )^4}  \limsup_{k\to\infty}~ \| e_k\|^2_{\mu_k} \\
	&\stackrel{(d)}{\leq}  \frac{4C_1 C_2 c^2(\alpha,\beta,0,\lambda) }{(1-c(\alpha,\beta,0,\lambda) )^4} \limsup_{k\to\infty}~ \|e_k\|^2_{d_{\pi_k}}.
	\end{align*} 
	Here $(c)$ follows from Jensen's inequality. To see this, let \begin{equation*}
	H_k = 
	\begin{pmatrix}
	- q_1 - \\
	- q_2 - \\
	\vdots \\
	- q_{|\Ss|} -
	\end{pmatrix} ~\text{where }q_i \in \R^{|\Ss|},~\text{for all } i,~\text{are probability vectors.} 
	\end{equation*} For any $x\in\R^{|\Ss|}$, let $x(j)$ denote the $j^{\text{th}}$ coordinate value in $x$. Now, for each $i\in\{1,2,\ldots,|\Ss|\}$, define $|\Ss|$-discrete valued random variable $X_i$ such that it takes value $|e_k|(j)$ with probability $q_i(j)$ for all $j\in\{1,2,\ldots,|\Ss|\}$. Thus, from Jensen's inequality, we have \begin{align}
	     [H_k |e_k|]^2 = ((\E[X_1])^2, (\E[X_2])^2, \cdots, (\E[X_{\Ss}])^2 )^\top
	 \leq (\E[X_1^2], \E[X_2^2], \cdots, \E[X_{\Ss}^2])^\top = H_k |e_k|^2.  \label{eq:stoc_matx_jensen}
	\end{align}
	$(d)$ follows by noting that for any $x\in\R^{|\Ss|}$, from $(iv)$ in Assumption \ref{as:policy_iteration_req}, we have $\|x\|_{\mu_k}^2 \leq C_1\|x\|_{\bar{\mu}}^2 \leq C_1 C_2\|x\|_{d_{\pi_k}}^2 $ for all $k$. Thus proving \eqref{eq:pi_thm_1} of this theorem.
	
	Now, since $\rho=0$, \eqref{eq:pi_thm_2} of this theorem directly follows from $(iii)$ in Assumption \ref{as:policy_iteration_req} and \eqref{eq:Ttilde-contraction-3} in Theorem \ref{prop:approx-td-op-1}. This completes the proof of this theorem.
\end{proof}

Now we make an alternative assumption to Assumption \ref{as:policy_iteration_req}.$(iv)$ to get a guarantee for any $K^{\text{th}}$ iteration of the RLSPI algorithm.

\begin{assumption}\label{as:finite_time_pi}
For an arbitrary sequence of stationary policies $\{\pi_i\}_{i\geq 1}$, let some probability distributions $\mu$ and $\bar{\mu}$ satisfy
\begin{align}
    {C_{\mu,\bar{\mu}}} = (1-c(\alpha,\beta,0,\lambda)) \sum_{m\geq 1} (c(\alpha,\beta,0,\lambda))^m c_{\mu,\bar{\mu}}(m) < \infty, \label{eq:C_mu}
\end{align} where for any given $m\geq 1$ the coefficients are defined as \begin{align}
    c_{\mu,\bar{\mu}}(m) = \sup_{\pi_1,\ldots,\pi_m} \left\| \frac{\dd(\mu \bar{H}_1\bar{H}_2\ldots\bar{H}_m)}{\dd\bar{\mu}} \right\|_\infty, \label{eq:c_mu_coefficients}
\end{align}
and $\bar{H}_k$ is a stochastic matrix that depends on $P^o_{\pi_{k}}$ for any $1\leq k \leq m$. Also assume that $d_{\pi_{k}} \geq \bar{\mu}/C$ for all $k$. 
\end{assumption}

We note that $c_{\mu,\bar{\mu}}(m)$ can potentially diverge to $\infty$, but $C_{\mu,\bar{\mu}}$ is finite as long as $(c(\alpha,\beta,0,\lambda))^m$ converges to $0$ at a faster rate. Assumption \ref{as:finite_time_pi} being similar as in the non-robust setting, we refer the reader to \citep{antos2008learning,lazaric2012finite} for its detailed interpretation.

Here is a result that provides a guarantee for the performance of the policy learned in $K^{\text{th}}$ iteration of the RLSPI algorithm.
\begin{theorem}
\label{thm:RLSPI_for_any_K}
Let Assumption \ref{as:offpolicy}, Assumption \ref{as:policy_iteration_req}.(i)-(iii), and Assumption \ref{as:finite_time_pi} hold. Let the range of reward function $r$ be $(0,R_{\max}]$. Let $\{\pi_{k}\}_{k=1}^K$  be the sequence of policies generated by the RLSPI algorithm for some $K\geq 1$. Let $V_{\pi_{k}}$ and $\bar{V}_{k} = \Phi w_{\pi_{k}}$ be true robust value function and the approximate robust value function corresponding to the policy $\pi_{k}$. Also, let  $V^*$ be the optimal robust value function. Then, with $c(\alpha,\beta,0,\lambda)<1$,
\begin{align} \label{eq:pi_thm_2_1}
   \| V^* - V_{\pi_K} \|_\mu\leq \frac{2\sqrt{2} (c(\alpha,\beta,0,\lambda))^{(K+1)/2} }{(1-c(\alpha,\beta,0,\lambda) )^{3/2}}~ R_{\max} + \frac{2\sqrt{2} c(\alpha,\beta,0,\lambda) \sqrt{C C_{\mu,\bar{\mu}}} }{(1-c(\alpha,\beta,0,\lambda) )^2} \max_{0\leq k < K} \|V_{\pi_{k}} - \bar{V}_{k}\|_{d_{\pi_k}}.
\end{align}
Moreover, from Theorem \ref{prop:approx-td-op-1} and  Assumption \ref{as:policy_iteration_req}.$(iii)$, as $K\to\infty$  we have
\begin{align} \label{eq:pi_thm_2_2}
    \| V^* - V_{\pi_K} \|_\mu\leq  \frac{2\sqrt{2} c(\alpha,\beta,0,\lambda) \sqrt{C C_{\mu,\bar{\mu}}} }{(1-c(\alpha,\beta,0,\lambda) )^3} ~ \delta  .
\end{align}
\end{theorem}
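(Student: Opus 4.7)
The plan is to iterate the per-step bound from Lemma \ref{lem:l_g} exactly $K$ times---rather than passing to $\limsup$ as in the proof of Theorem \ref{thm:RLSPI}---and to control the truncated error accumulation via Assumption \ref{as:finite_time_pi}. Writing $c := c(\alpha,\beta,0,\lambda)$ and substituting \eqref{eq:g_k} into \eqref{eq:l_k}, a row-sum computation (analogous to the verification that $H_k$ is stochastic in the proof of Theorem \ref{thm:RLSPI}) rewrites the combined bound as $|l_{k+1}| \le c\bar H_*|l_k| + \tfrac{2c}{1-c}\tilde H_k |e_k|$, where $\tilde H_k$ is a stochastic matrix whose internal weight on length-$n$ products of matrices drawn from $\{\bar H_*,\bar H_k,\bar H_{k+1}\}$ equals $(1-c)c^{n-1}$ for $n \ge 1$ (the geometric factor arising from the Neumann expansion $(I-c\bar H_{k+1})^{-1} = \sum_{j \ge 0}(c\bar H_{k+1})^j$). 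Unrolling this recursion yields
\[
|l_K| \;\le\; c^K(\bar H_*)^K |l_0| \;+\; \tfrac{2c}{1-c}\sum_{k=0}^{K-1} c^{K-1-k}(\bar H_*)^{K-1-k}\tilde H_k\,|e_k|.
\]

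Taking the squared $\mu$-norm and splitting via $(a+b)^2 \le 2a^2 + 2b^2$ separates the estimate into a transient piece and an accumulated-error piece. For the transient piece, Jensen's inequality for stochastic matrices (cf.\ \eqref{eq:stoc_matx_jensen}) together with $\|l_0\|_\infty \le R_{\max}/(1-\alpha) \le R_{\max}/(1-c)$ (using $\alpha \le c$, which follows by summing the off-policy condition in Assumption \ref{as:offpolicy} and inspecting the definition of $c$) and the elementary estimate $c^{(K-1)/2}(1-c)^{1/2} \le 1$ for $c<1$, $K\ge 1$, produces exactly the first term of \eqref{eq:pi_thm_2_1}. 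For the accumulated-error piece, Cauchy--Schwarz with weights $c^{(K-1-k)/2}$ followed by Jensen's inequality for each stochastic matrix $(\bar H_*)^{K-1-k}\tilde H_k$ gives
\[
\Big\|\sum_{k=0}^{K-1} c^{K-1-k}(\bar H_*)^{K-1-k}\tilde H_k |e_k|\Big\|_\mu^2 \;\le\; \frac{1}{1-c}\sum_{k=0}^{K-1} c^{K-1-k}\,\mu\bigl[(\bar H_*)^{K-1-k}\tilde H_k\bigr]\,|e_k|^2.
\]

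The decisive step is to dominate each $\mu[(\bar H_*)^{K-1-k}\tilde H_k]$ by a multiple of $\bar\mu$ via Assumption \ref{as:finite_time_pi}: the length-$n$ expansion of $\tilde H_k$ exhibits $(\bar H_*)^{K-1-k}\tilde H_k$ as a convex combination (with weights $(1-c)c^{m-(K-k)}$ for total length $m \ge K-k$) of $m$-fold products of stochastic matrices drawn from $\{\bar H_*,\bar H_k,\bar H_{k+1}\}$, each of whose $\mu$-pushforward is pointwise dominated by $c_{\mu,\bar\mu}(m)\bar\mu$. Combining with $d_{\pi_k} \ge \bar\mu/C$, exchanging the order of summation over $k$ and over total length $m$, and matching against $C_{\mu,\bar\mu} = (1-c)\sum_{m \ge 1}c^m c_{\mu,\bar\mu}(m)$ from \eqref{eq:C_mu} yields the bound $\tfrac{CC_{\mu,\bar\mu}}{(1-c)^2}\max_{0\le k<K}\|e_k\|_{d_{\pi_k}}^2$, whence \eqref{eq:pi_thm_2_1}. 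For \eqref{eq:pi_thm_2_2}, letting $K\to\infty$ kills the transient, and Theorem \ref{prop:approx-td-op-1} with $\rho=0$ together with Assumption \ref{as:policy_iteration_req}.(iii) gives $\|e_k\|_{d_{\pi_k}} \le \delta/(1-c)$. The main obstacle I anticipate is the last collapsing step: because $\tilde H_k$ contains $\bar H$-products of arbitrary length, one must simultaneously sum over the outer iteration index $k$ and the internal Neumann index $n$ and regroup as a single series in the total length $m = (K-1-k) + n$, with the delicate bookkeeping of confirming that the regrouped coefficient matches $C_{\mu,\bar\mu}$ up to the claimed constant rather than leaving behind a $K$-growing factor. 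The remainder of the argument is a direct finite-time analog of the proof of Theorem \ref{thm:RLSPI}.
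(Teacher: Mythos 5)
Your proposal is correct and follows essentially the same route as the paper's proof: substituting \eqref{eq:g_k} into \eqref{eq:l_k} to obtain the one-step recursion $|l_{k+1}|\le c\bar H_*|l_k|+\tfrac{2c}{1-c}H_k|e_k|$ with $H_k$ stochastic, unrolling it $K$ times, bounding $|l_0|$ by $O(R_{\max}/(1-c))\mathbbm{1}$, applying Jensen's inequality for stochastic matrices, and converting to $\bar\mu$- and then $d_{\pi_k}$-norms via the concentrability coefficients of Assumption \ref{as:finite_time_pi}; your use of $(a+b)^2\le 2a^2+2b^2$ plus weighted Cauchy--Schwarz is just a cosmetic variant of the paper's single convex combination with weights $\alpha_k$ summing to one, and yields the same $2\sqrt{2}$ constants. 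The regrouping subtlety you flag at the end (matching the summed coefficient against $C_{\mu,\bar\mu}$ without a multiplicity factor) is real, but it is present in the paper's own step $(g)$ as well, so it is not a gap specific to your argument.
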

\begin{proof}[{\bf Proof of Theorem \ref{thm:RLSPI_for_any_K}}]
	Let $\zeta=c(\alpha,\beta,0,\lambda)$. From the Lemma \ref{lem:l_g}, using \eqref{eq:g_k} in \eqref{eq:l_k} we have \begin{align}
	|l_{k+1}| &\leq \zeta \bar{H}_* |l_{k}| +  \zeta ( \bar{H}_*  +\bar{H}_k) |e_k| + \zeta  \bar{H}_k |g_k| \nonumber \\
	&\leq \zeta \bar{H}_* |l_{k}| +  \zeta ( \bar{H}_*  +\bar{H}_k) |e_k| + \zeta^2  \bar{H}_k (I-\zeta  \bar{H}_{k+1})^{-1} ( \bar{H}_{k+1} +  \bar{H}_{k}) |e_k| \nonumber \\
	&\stackrel{(a)}{=} \zeta \bar{H}_* |l_{k}| + \frac{2\zeta}{1-\zeta} H_k |e_k|, \label{eq:l_e_1}
	\end{align} 
	where $(a)$ follows since
	\begin{align}
	H_k &= (1-\zeta ) \Bigg( \frac{( \bar{H}_*  +\bar{H}_k)}{2} + \zeta  \bar{H}_k (I-\zeta  \bar{H}_{k+1})^{-1} \frac{( \bar{H}_{k+1} +  \bar{H}_{k})}{2} \Bigg).
	\end{align} 
	
	Taking $(K-1)$-recursions of \eqref{eq:l_e_1} we get \begin{align} |l_K| \leq (\zeta \bar{H}_*)^K |l_0| + \frac{2\zeta}{1-\zeta} \sum_{k=0}^{K-1} (\zeta \bar{H}_*)^{K-k-1} H_k |e_k|. \label{eq:l_recur} \end{align}  Note that since $\alpha\leq\beta$, we have that $\alpha\leq\zeta$. Now, since the rewards are in $(0,R_{\max}]$, we also have that $|l_0| = |V^* - V_{\pi_0}| \leq \frac{2 R_{\max}}{(1-\alpha)} \mathbbm{1} \leq \frac{2 R_{\max}}{(1-\zeta)} \mathbbm{1}.$ Thus, bounding \eqref{eq:l_recur} further we get \begin{align} |l_K| \leq \frac{2\zeta_K \zeta}{(1-\zeta)^2} \left[\alpha_K G_K R_{\max} \mathbbm{1} + \sum_{k=0}^{K-1} \alpha_k G_k |e_k|\right], \label{eq:l_K_final} \end{align} where $\zeta_K = \zeta^{K-1} (2-\zeta-\zeta^K) \leq 2$, the positive coefficients $\alpha$s are \begin{align}
	&\alpha_k = \frac{\zeta^{K-k-1}(1-\zeta)}{\zeta_K}, \text{ for } 0\leq k \leq K-1, \text{ and } \alpha_K = \frac{\zeta^{K-1}(1-\zeta)}{\zeta_K}, \label{eq:alpha}
	\end{align} and the operators $G$s are \[ G_k = (\bar{H}_*)^{K-k-1} H_k, \text{ for } 0\leq k \leq K-1, \text{ and } G_K = (\bar{H}_*)^K. \] Note that $\sum_{k=0}^K \alpha_k = 1$ and $G_k$ for $0\leq k \leq K$ are stochastic matrices.
	
	Now, from \eqref{eq:l_K_final} we have \begin{align*}
    \|l_{K}\|^2_\mu &\stackrel{(b)}{\leq}  \frac{4 \zeta_K^2 \zeta^2 }{(1-\zeta )^4} ~\mu \left[\alpha_K (G_K R_{\max} \mathbbm{1})^2 + \sum_{k=0}^{K-1} \alpha_k (G_k |e_k|)^2\right]\\
	&\stackrel{(c)}{\leq}  \frac{4 \zeta_K^2 \zeta^2 }{(1-\zeta )^4}~ \mu \left[\alpha_K G_K R^2_{\max} \mathbbm{1} + \sum_{k=0}^{K-1} \alpha_k G_k |e_k|^2\right]\\
	&=  \frac{4 \zeta_K^2 \zeta^2 }{(1-\zeta )^4}~ \left[\alpha_K R^2_{\max} + \mu \sum_{k=0}^{K-1} \alpha_k G_k |e_k|^2\right]\\
	&\stackrel{(d)}{\leq}  \frac{4 \zeta_K^2 \zeta^2 }{(1-\zeta )^4}~ \left[\alpha_K R^2_{\max} + (1-\zeta) \sum_{k=0}^{K-1} \alpha_k \sum_{m\geq 0} \zeta^m c_{\mu,\bar{\mu}}(m+K-k) \|e_k\|^2_{\bar{\mu}}\right]\\
	&\stackrel{(e)}{\leq}  \frac{4 \zeta_K^2 \zeta^2 }{(1-\zeta )^4}~ \left[\alpha_K R^2_{\max} + (1-\zeta) \|e\|^2_{\bar{\mu}} \sum_{k=0}^{K-1} \alpha_k \sum_{m\geq 0} \zeta^m c_{\mu,\bar{\mu}}(m+K-k) \right]\\
	&\stackrel{(f)}{=} \frac{4 \zeta_K \zeta^2 }{(1-\zeta )^4}~ \left[{\zeta^{K-1}(1-\zeta)R^2_{\max}}  + (1-\zeta) \|e\|^2_{\bar{\mu}} \sum_{k=0}^{K-1} \zeta^{K-k-1}(1-\zeta) \sum_{m\geq 0} \zeta^m c_{\mu,\bar{\mu}}(m+K-k) \right] \\
	&\stackrel{(g)}{\leq} \frac{8 \zeta^2 }{(1-\zeta )^4}~ \left[\zeta^{K-1}(1-\zeta)R^2_{\max}  +  \|e\|^2_{\bar{\mu}} C_{\mu,\bar{\mu}}\right] \\
	&= \frac{8 \zeta^{K+1} }{(1-\zeta )^3}~ R^2_{\max}  + \frac{8 \zeta^2 }{(1-\zeta )^4} \|e\|^2_{\bar{\mu}} C_{\mu,\bar{\mu}} \stackrel{(h)}{\leq} \frac{8 \zeta^{K+1} }{(1-\zeta )^3}~ R^2_{\max}  + \frac{8 \zeta^2 C C_{\mu,\bar{\mu}} }{(1-\zeta )^4} \|e\|^2_{d_{\pi_k}} ,
	\end{align*} where $(b)$ follows from Jensen's inequality, i.e., $f(\sum_{k=0}^K \alpha_k x_k) \leq \sum_{k=0}^K \alpha_k f(x_k) $ for any convex function $f$. Also, $(c)$ follows from Jensen's inequality, similar to \eqref{eq:stoc_matx_jensen}, with stochastic matrix $G_k$. $(d)$ follows from the definition of the coefficients $c_{\mu,\bar{\mu}}(m)$ \eqref{eq:c_mu_coefficients}, $(e)$ follows by taking $e$ such that $\|e\|^2_{\bar{\mu}}=\max_{0\leq k\leq K-1}\|e_k\|^2_{\bar{\mu}}$, $(f)$ follows from \eqref{eq:alpha}, $(g)$ follows since $\zeta_K\leq 2$ and the definition of $C_{\mu,\bar{\mu}}$ \eqref{eq:C_mu}, and $(h)$ follows by noting that for any $x\in\R^{|\Ss|}$, from Assumption \ref{as:finite_time_pi}, we have $\|x\|_{\bar{\mu}}^2 \leq C \|x\|_{d_{\pi_k}}^2 $ for all $k$. Using the fact that $a^2 + b^2 \leq (a+b)^2$ for $a\geq 0, b\geq 0$ completes the proof of \eqref{eq:pi_thm_2_1} in this theorem.
	
	Now, since $\rho=0$, \eqref{eq:pi_thm_2_2} of this theorem directly follows from $(iii)$ in Assumption \ref{as:policy_iteration_req} and \eqref{eq:Ttilde-contraction-3} in Theorem \ref{prop:approx-td-op-1}. This completes the proof of this theorem.
\end{proof}

\section{Experiments}
\label{sec:experiments}

In all the experiments reported, we use a spherical uncertainty set $\{x:\|x\|_2 \leq r\}$ where $r$ is the radius parameter.  For such a set, we can compute a closed form solution for $\sigma_{\widehat{\U}_{s,\pi(s)}}(\Phi w)$ as $\sqrt{r w^\top \Phi^\top \Phi w}$ \citep{roy2017reinforcement}. Note that, we can precompute $ \Phi^\top \Phi$ once and reuse it in every iteration of the RLSPI Algorithm, thus saving the computational overhead.

\paragraph{Chain MDP:} We first consider a tabular MDP problem represented in the Figure \ref{fig:simplechain} for verifying the convergence of RLSPI algorithm. This MDP consists of $10$ states depicted by circles here. We have two actions, that is, move left or right. The actions fail to remain in a given direction with probability $0.1$, depicted by the red arrows. Thus, with probability $0.9$, actions succeed to be in a given direction, depicted by the blue (action left being unchanged) and green (action right being unchanged) arrows. Finally, visiting states of yellow color, that is $0$ and $9$, are rewarded $1$, and visiting other states are rewarded $0$.

\citep{lagoudakis2003least} observes that  learning algorithms often attain sub-optimal policies under such MDPs due to the randomization of actions (as depicted by red arrows in the Figure \ref{fig:simplechain}). It is also straightforward that the optimal policy of this MDP is moving left for states 0 through 4 and moving right for states 5 through 9. We train RLSPI algorithm  on this MDP with  $\alpha = 0.9, \lambda = 0$. We use the space spanned by polynomials, degree up to $2$, as the feature space and set $\delta = 0.1$ (error of weights as mentioned in Step 8 of the RLSPI algorithm). We select $r$ as $0.01$ times the constant $\| \Phi^\top \Phi \|_F^{-1}$ where $\| \cdot \|_F$ is the Frobenius norm. 
\begin{figure}[ht]
	\centering
	\begin{minipage}{.49\textwidth}
		\centering
		\includegraphics[width=\linewidth]{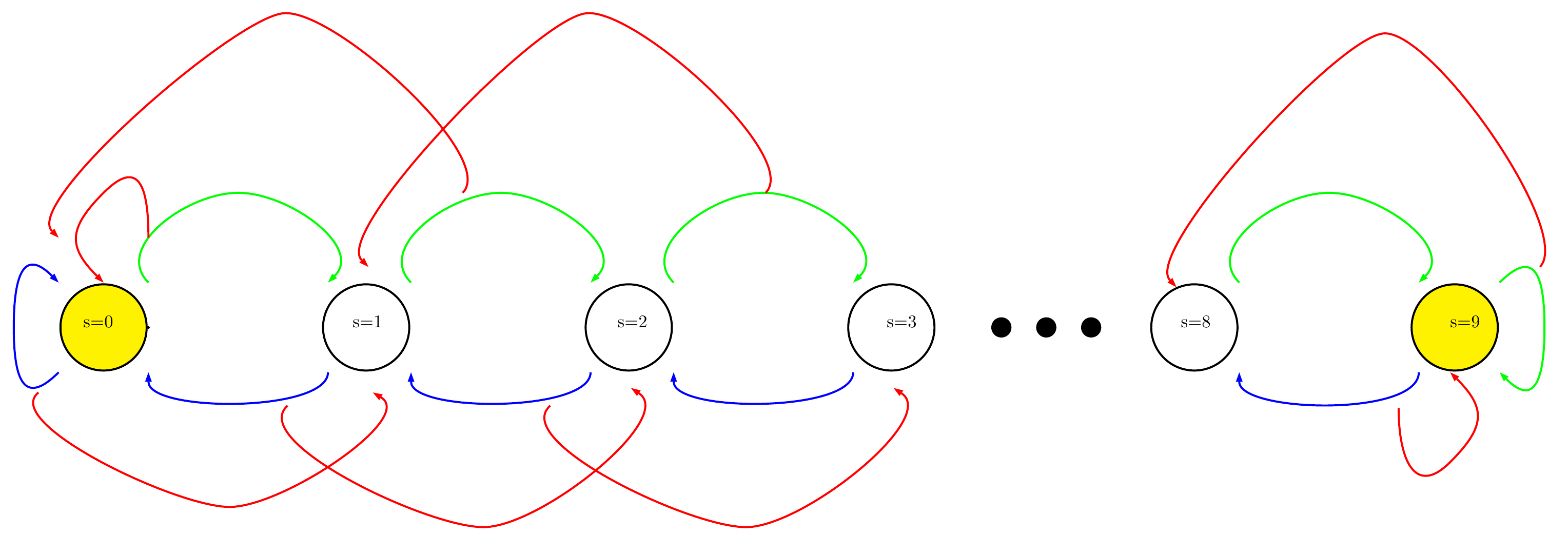}
		\captionof{figure}{}
		\label{fig:simplechain}
	\end{minipage}
	\begin{minipage}{.49\textwidth}
		\centering
		\includegraphics[width=\linewidth]{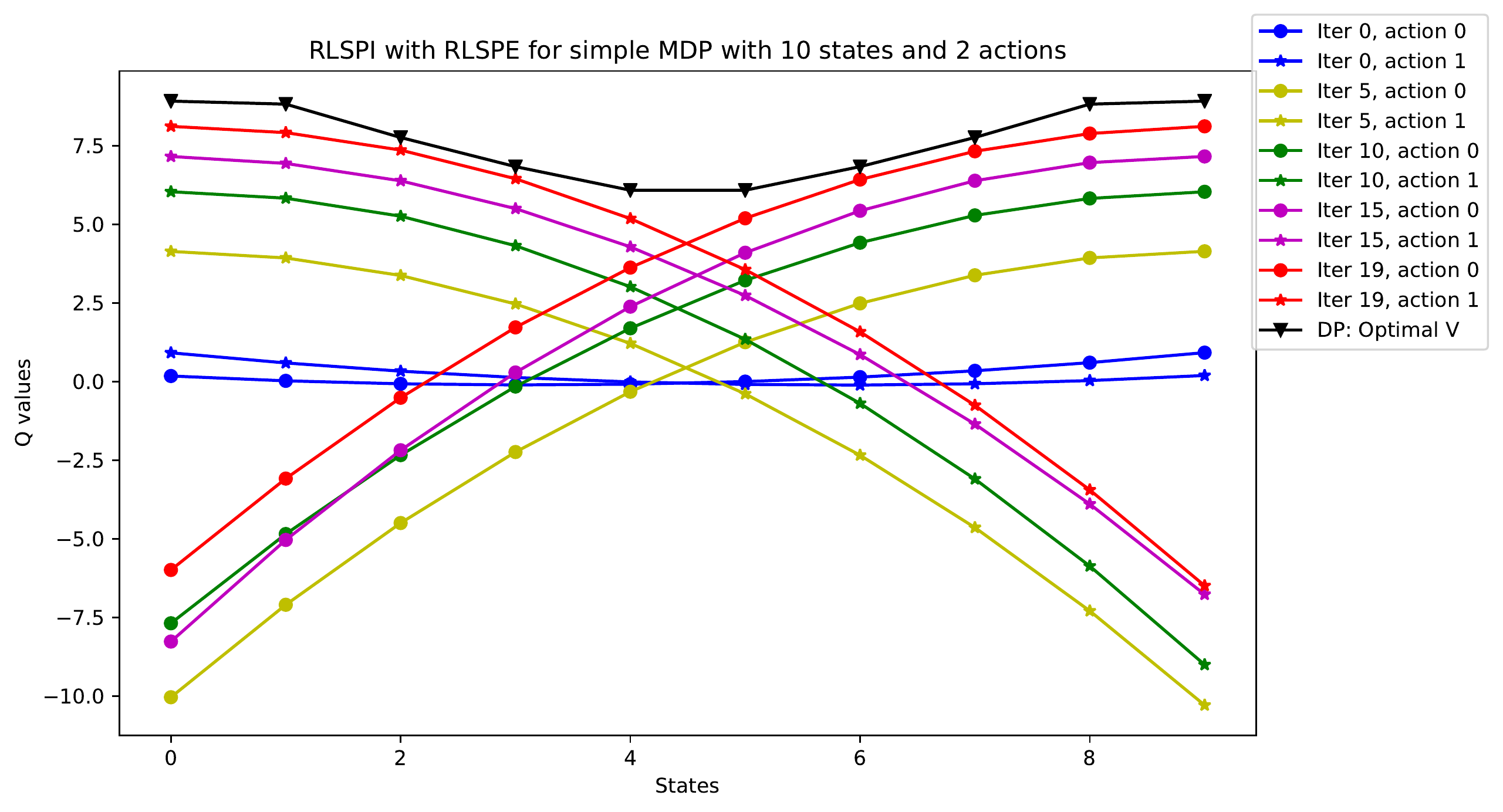}
		\captionof{figure}{}
		\label{fig:sc_rlspi_lspe}
	\end{minipage}
\end{figure}

Figure \ref{fig:sc_rlspi_lspe} shows how the Q-value functions in RLSPI algorithm  training evolve as the iteration progress. From this, we note that RLSPI algorithm is able to find the optimal policy with relatively less number of iterations. From this figure, we also note that the Q-value functions corresponding to the optimal policy in RLSPI algorithm converges to the optimal robust value function. 

\paragraph{Examples from OpenAI Gym \citep{brockman2016openai}:} We now provide more details for the OpenAI Gym experiments demonstrated in Section \ref{sec:simulations}. We use the radial basis functions (RBFs) for the purpose of feature spaces in our experiments. The general expression for RBFs is $\psi(x) = \exp(-\|x-\mu\|^2/\sigma)$ where the RBF parameters $\mu$ and $\sigma$ are chosen before running the experiment. Here $x$ is a concatenation of states and actions when both $\Ss,\Aa$ are continuous spaces. In this case, the feature map is simply defined as $\phi(s,a)=\psi((s,a))$ where $(s,a)$ represents the concatenation operation. \begin{wrapfigure}{r}{9cm}
	\includegraphics[width=9cm]{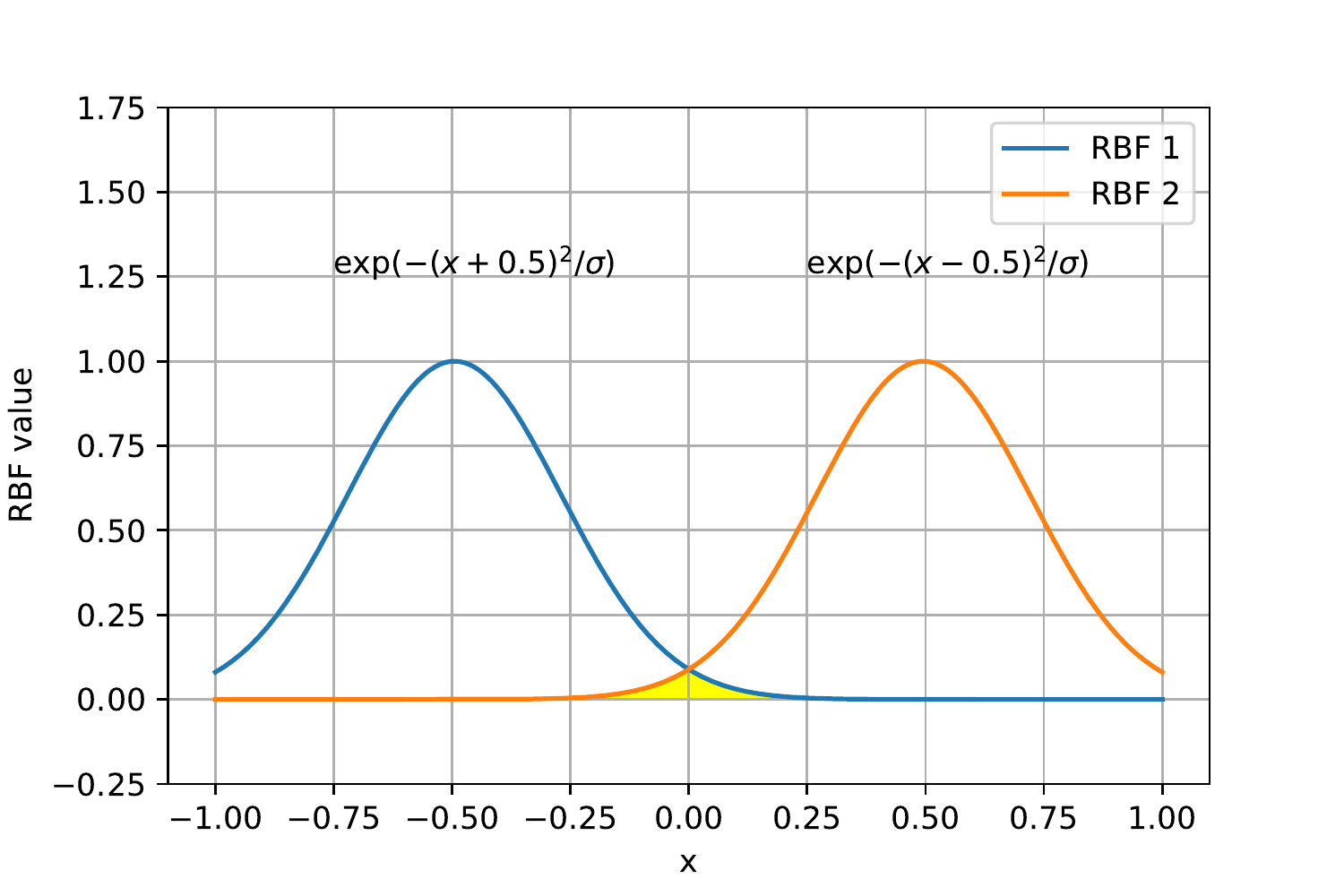}
	\caption{We have discretized $x$ with $10^4$ points here. The overlap percentage (ratio of the area in the yellow region to the area of either RBF 1 or RBF 2) is $2.88$.}
	\label{fig:rbf}
\end{wrapfigure} While working with experiments whose action space is discrete, we naturally choose $\phi(s,a)$ to be the vector $(\mathbbm{1}(a=1)\psi(s),\ldots,\mathbbm{1}(a=|\Aa|)\psi(s))^\top$ where $\mathbbm{1}(E)$ is the indicator function which produces value $1$ if the event $E$ is true, and $0$ otherwise. After a few trials, we observed that using few (typically $3$-$4$) uniformly spaced RBFs in each dimension of $x$, here $x$ is as described before, with approximate overlap percentage of $2.5$ works suitably for getting the desired results shown here and in Section \ref{sec:simulations}. Figure \ref{fig:rbf} illustrates this for the case of using two$(=n)$ uniformly spaced RBFs with one dimensional $x$ variable. For this illustration, we have the low$(l)$ and high$(h)$ values of $x$ to be $-0.5$ and $0.5$ respectively. Thus, the centers (i.e., parameter $\mu$) of the two uniformly spaced RBFs in Figure \ref{fig:rbf} are $-0.5$ and $0.5$. We select the parameter $\sigma$ as $(h-l)^2/n^3=0.125$. We execute this idea on the OpenAI Gym environments. We also experiment on FrozenLake8x8 OpenAI Gym environment, for which we use the tabular feature space since the state space is discrete.

A short description of the  OpenAI Gym tasks {\bf CartPole, MountainCar, Acrobot, and FrozenLake8x8} we used are as follows.

\textbf{\emph{CartPole:}} By a hinge, a pole is attached to a cart, which moves along a one-dimensional path. The motion of the cart is controllable, which is either to move it left or right. The pole starts upright, and the goal is to prevent it from falling over. A reward of $+1$ is provided for every time-step that the pole remains upright. CartPole consists of a $4$-dimension continuous state space with $2$ discrete actions. \\ 
\textbf{\emph{MountainCar:}} A car is placed in the valley and there exists a flag on top of the hill. The goal is to reach the flag. The control signal is the acceleration and deceleration in continuous domain. A reward of $0$ is provided if the car reaches goal, otherwise it is provided $-1$. MountainCar consists of a $3$-dimension $\Ss\times \Aa$ continuous space. \\
\textbf{\emph{Acrobot:}} Two poles attached to each other by a free moving joint and one of the poles is attached to a hinge on a wall. Initially, the poles are hanging downwards. An action, positive and negative torques can be applied to the movable joint. A reward of $-1$ is provided every time-step until the end of the lower pole reaches a given height, at termination $0$ reward is provided. The goal is to maximize the reward gathered. Acrobot consists of a $6$-dimension continuous state space with $3$ discrete actions. \\ 
\textbf{\emph{FrozenLake8x8:}} A grid of size $8\times 8$ consists of some tiles which lead to the agent falling into the water. The agent is rewarded $1$ after reaching a goal tile without falling and rewarded $0$ in every other timestep.

\begin{figure}[ht]
	\centering
	\begin{minipage}{.32\textwidth}
		\centering
		\includegraphics[width=\linewidth]{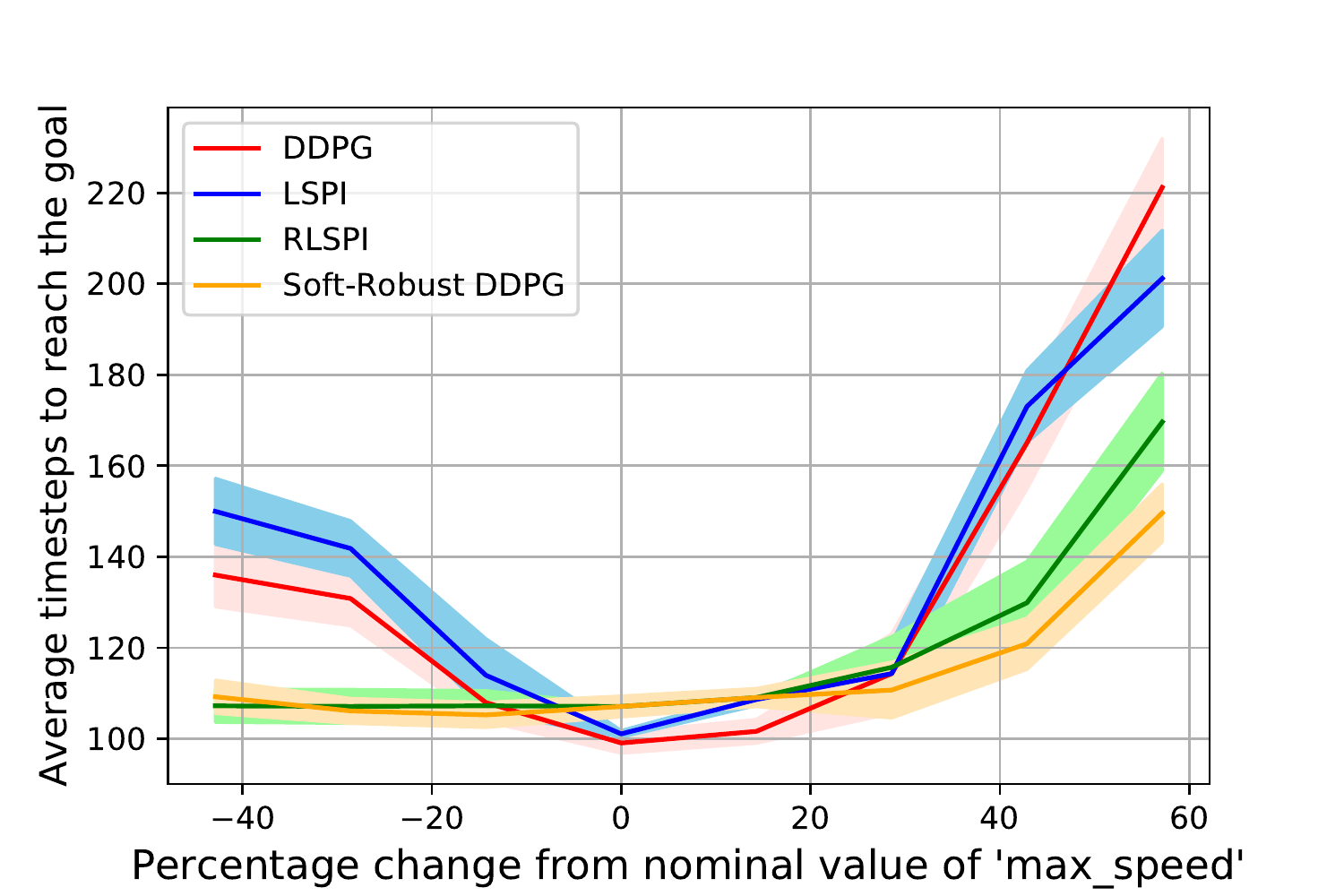}
		\captionof{figure}{}
		\label{fig:mt_speed}
	\end{minipage}
	\begin{minipage}{.32\textwidth}
		\centering
		\includegraphics[width=\linewidth]{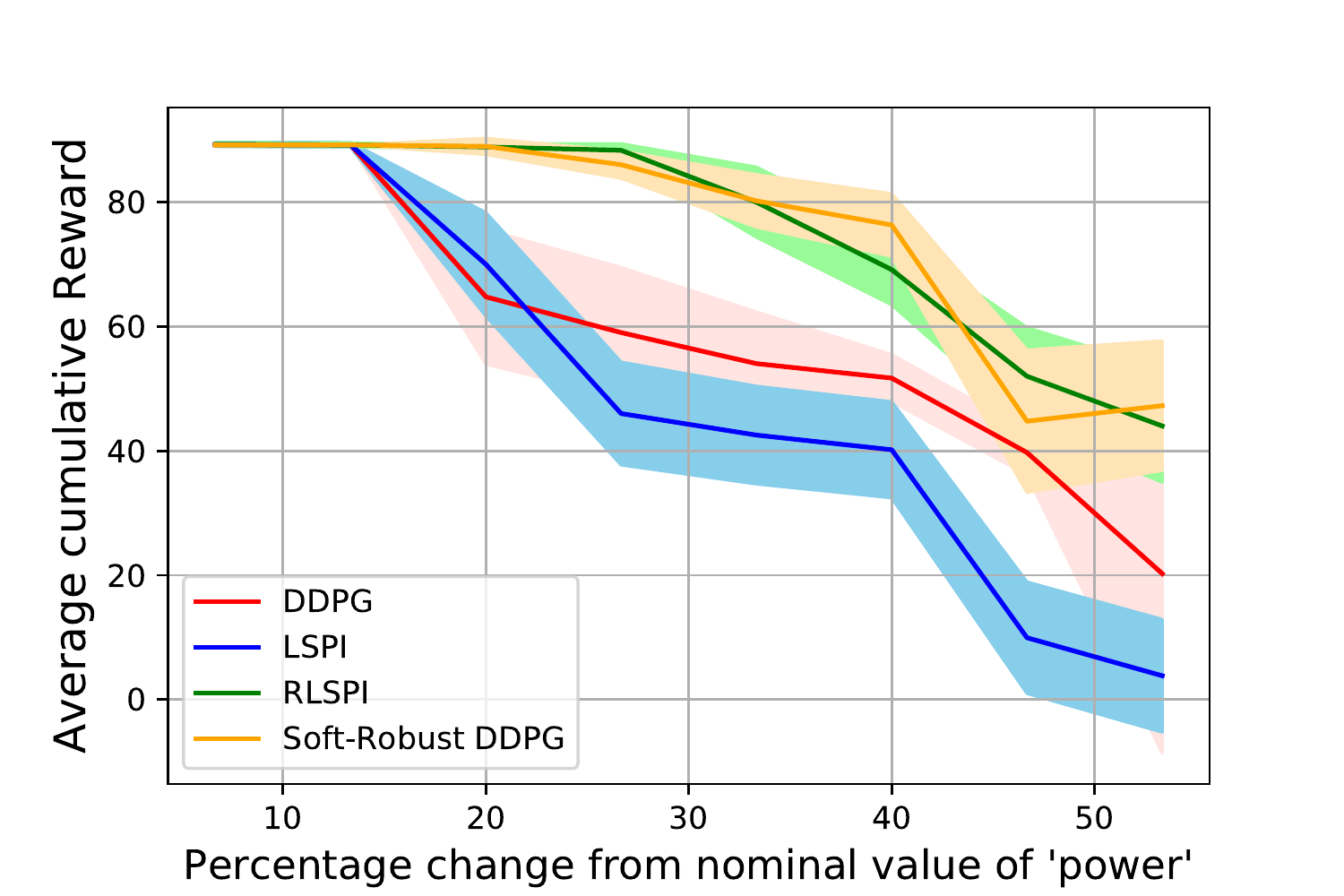}
		\captionof{figure}{}
		\label{fig:mt_power}
	\end{minipage}
	\begin{minipage}{.32\textwidth}
		\centering
		\includegraphics[width=\linewidth]{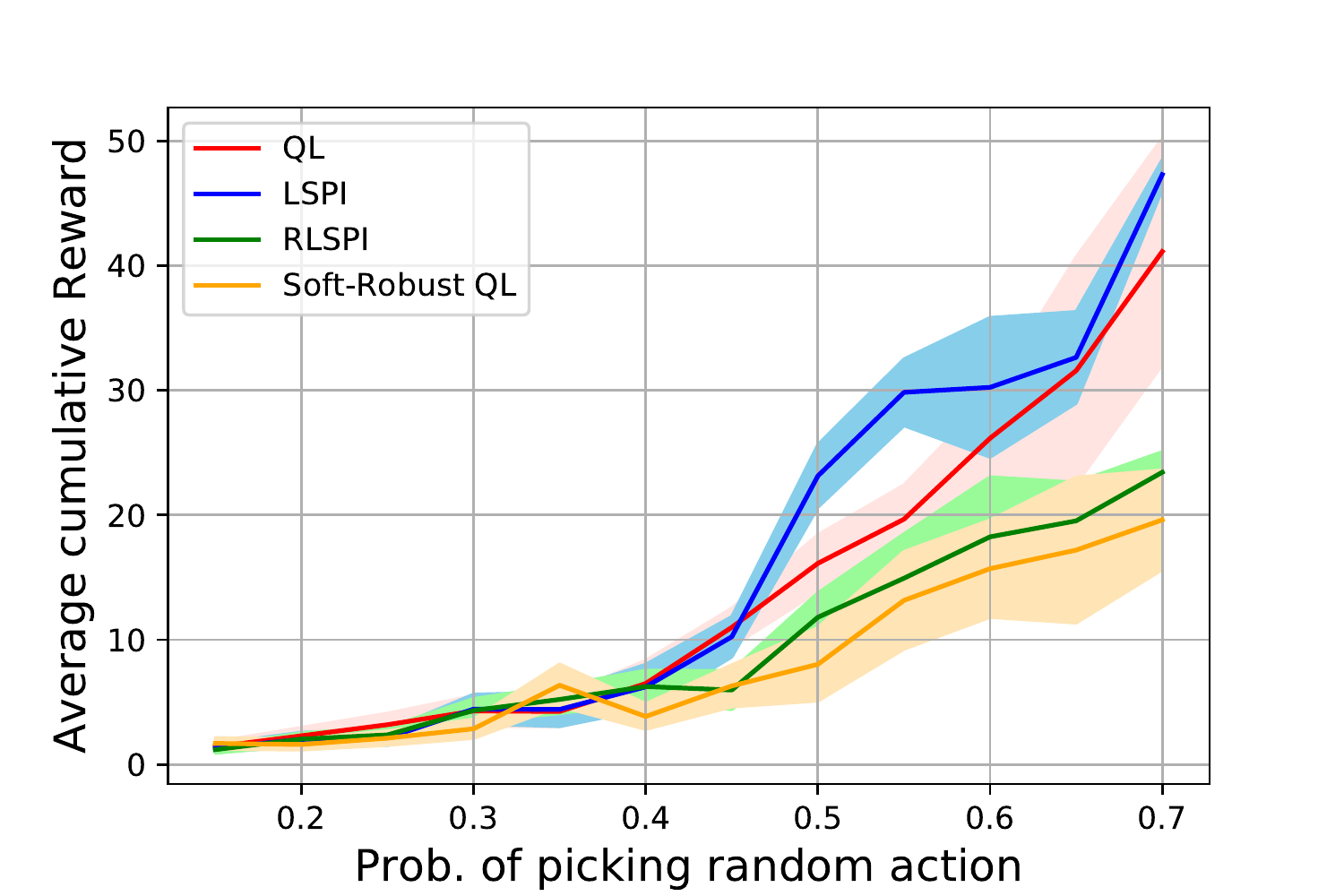}
		\captionof{figure}{}
		\label{fig:frolake_failprob}
	\end{minipage}
\end{figure}

In Section \ref{sec:simulations}, we provided performance evaluation curves in Figures \ref{fig:cart_failprob}-\ref{fig:cart_pole-in}. Here, we provide more results.

Figures \ref{fig:mt_speed} shows the average time steps to reach the goal in MountainCar environment as we change the parameter \emph{max\_speed}. The nominal value of this parameter is $0.07$. As the parameter deviates from the nominal value, the performance of the policy obtained by the LSPI algorithm degrades quickly whereas the performance of the policy obtained by the RLSPI algorithm is fairly robust. Figure \ref{fig:mt_power} shows the average cumulative reward on the MountainCar environment as we change the parameter \emph{power}. The nominal value of this parameter is $15\times10^{-4}$. We again note that the RLSPI algorithm showcases robust performance. Figure \ref{fig:frolake_failprob} shows the ratio of average time to reach the goal and the number of trajectories which actually reach the goal on the FrozenLake8x8 environment against probability of picking a random action. Note that for large values of this probability all algorithms take more time to reach the goal or often fall into the water. Here again, RLSPI shows robust performance. Intuitively, perturbation in the parameters (like the action space, CartPole's \emph{force\_mag, gravity, length}, MountainCar's \emph{max\_speed, power}) of the environment is captured by the uncertainty set in the RMDP framework. Thus we see good performances of the robust algorithms like our RLSPI algorithm, Soft-Robust algorithms \cite{derman2018soft}, and Robust Q-learning algorithm \cite{roy2017reinforcement} compared to the non-robust algorithms.

In each policy iteration loop, in both LSPI and RLSPI algorithms, we generate $t$ trajectories of horizon length $h$ using the last updated policy (the initial policy $\pi_0$ is random.) We generally stop the simulation after $10$-$20$ policy iteration loops. The details of the hyper-parameters are shown in Table \ref{tab:hp} in addition to $\lambda$ being set to zero. 
\begin{table}[h]
	\begin{center}
		\begin{tabular}{|c|c|c|c|c|}
			\hline 
			OpenAI Gym & Discount & Weights error &  & \\ 
			Environment & $\alpha$ & $\epsilon_0$ & $t$ & $h$ \\ 
			\hline & &&& \vspace{-0.3cm}\\
			CartPole & $0.95$ & $0.01$ & $150$ & $200$\\
			MountainCar & $0.95$ & $0.05$ & $1000$ & $20$\\
			Acrobot & $0.98$ & $0.1$ & $100$ & $200$\\
			FrozenLake8x8 & $0.99$ & $0.01$ & $3000$ & $200$\\ \hline
		\end{tabular}
	\end{center}\caption{Details of hyper-parameters in LSPI and RLSPI algorithms experiments.} \label{tab:hp}
\end{table}

Here are the details on the Q-learning based algorithms. The Q-learning algorithm with linear function approximation on CartPole uses $64$ parameterized (centers and variances) RBFs chosen by the Adam optimizer. Both Q-learning and Soft-Robust Q-learning algorithms for FrozenLake8x8 uses the tabular method, instead of deep neural or linear function architectures. We use the usual decaying-epsilon-greedy for the exploration policies, such that it exponentially decays to $0.01$ at half-way through the total number of training episodes (Epochs in Table \ref{tab:hp2}) starting from $0.99$. We provide the hyper-parameters in Table \ref{tab:hp2} like the discount factor, size of hidden layers $h$ starting from the first hidden layer in the given array, and size of the batch of tuples (state, action, next state, reward) chosen uniformly from the experience buffer of size $5000$ to update the neural network (Batch in Table \ref{tab:hp2}). For all neural networks, we used the \emph{relu} activation functions. Note that the DDPG algorithm uses two same sized neural networks for actor and critic.
\begin{table}[ht]
	\begin{center}
		\begin{tabular}{|c|c|c|c|c|}
			\hline 
			OpenAI Gym & Discount & Hidden layers  & Batch & Epochs \\ 
			Environment & $\alpha$ & $h$ & & \\ \hline
			CartPole & $0.9$ & $[200,100]$ & $200$ & $3000$ \\
			MountainCar & $0.99$ & $[300,200]$ & $40$ & $1000$\\
			Acrobot & $0.995$ & $[128,64,64]$ & $500$ & $10000$ \\
			FrozenLake8x8 & $0.95$ & - & - & $80000$\\ \hline
		\end{tabular}
	\end{center}\caption{Details of hyper-parameters in Q-learning based algorithms experiments.} \label{tab:hp2}
\end{table}

For completeness, we also point out some weaknesses of the experiments we have done. Firstly, we are not optimizing over the parameter $r$ which is the radius of the spherical set associated with the uncertainty. We believe that performing a hyper-parameter search for the best $r$ will make the policy obtained by the RLSPI further robust. Secondly,  since we are focusing on the linear approximation architecture for developing the theoretical understanding of model-free robust RL, the experiments may not be immediately scalable to very high dimensional OpenAI Gym environments which typically require nonlinear approximation architecture. 

To end this section, we mention the software configurations used to generate these results: \emph{Python3.7 with OpenAI Gym \citep{brockman2016openai} and few basic libraries (non-exhaustive) like numpy, scipy, matplotlib}. Also, the hardware configurations used was \emph{macOS High Sierra Version 10.13.6, 16 GB LPDDR3, Intel Core i7.}

\end{document}